\newtheorem{theorem}{Theorem}
\newtheorem{corollary}{Corollary}[theorem]
\newtheorem{proposition}[theorem]{Proposition}
\newtheorem{remark}{Remark}[theorem]
\newtheorem{definition}{Definition}[section]
\def\eqref#1{equation~\ref{#1}}
\def\1{\bm{1}}
\def\v0{{\bm{0}}}
\def\vc{{\bm{c}}}
\def\vu{{\bm{u}}}
\def\vv{{\bm{v}}}
\def\vw{{\bm{w}}}
\def\vx{{\bm{x}}}
\def\vy{{\bm{y}}}
\def\vz{{\bm{z}}}
\def\vomega{{\bm{\omega}}}
\def\mA{{\bm{A}}}
\def\mB{{\bm{B}}}
\def\mC{{\bm{C}}}
\def\mI{{\bm{I}}}
\def\mK{{\bm{K}}}
\def\mW{{\bm{W}}}
\def\mX{{\bm{X}}}
\DeclareMathAlphabet{\mathsfit}{\encodingdefault}{\sfdefault}{m}{sl}
\SetMathAlphabet{\mathsfit}{bold}{\encodingdefault}{\sfdefault}{bx}{n}
\def\gN{{\mathcal{N}}}
\def\gO{{\mathcal{O}}}
\def\gP{{\mathcal{P}}}
\def\gS{{\mathcal{S}}}
\def\gX{{\mathcal{X}}}
\def\gY{{\mathcal{Y}}}
\def\sC{{\mathbb{C}}}
\def\sN{{\mathbb{N}}}
\def\sR{{\mathbb{R}}}
\newcommand{\norm}[1]{\left\| #1 \right\|}
\def \a2 {\mA_2}
\def \qa2 {\mA_{2q}}
\def \b2 {\mB_2}
\def \qb2 {\mB_{2q}}
\def \c2 {\mC_2}
\def \qc2 {\mC_{2q}}
\def \omn {\omega_n}
\def \smqs#1 { \sum_{k=0}^{p-1} \sum_{r=\lceil -k/n\rceil }^{\lfloor (p-1-k)/n \rfloor} t_k^{2q} t_{n r+k}^#1 e_{s,k} }
\def \tr#1 {\text{tr}\left(#1\right)}
\def \p#1 {{\left(#1\right)}}
\def \norms#1 {\left\|#1\right\|^2}
\def \bs#1 {{\left[#1\right]}}
\def \br#1 {{\left\{#1\right\}}}
\def \inp#1 {{\langle#1\rangle}}
\def \suma#1 { \sum_{k=0}^{n-1} \p{ \sum_{\nu=0}^{l-1}  t_{k+n\nu}^#1 } }
\def \sumc#1 {  \sum_{k=0}^{n-1} \p{\sum_{\nu =l}^{\tau -1} t_{k+n\nu}^#1 } }
\def \las#1 { \suma#1 \p{ \sum_{j=0}^{n-1}  \omn^{(s-k)j}} }
\def \lcs#1 { \sumc#1 \p{ \sum_{j=0}^{n-1}  \omn^{(s-k) j}} }
\def \sinsuma#1 { \sum_{\nu=0}^{l-1}  t_{k+n\nu}^#1  }
\def \sinsumc#1 { \sum_{\nu =l}^{\tau  -1} t_{k+n\nu}^#1 } 
\def \a {\alpha}
\def \vcts#1 {\vc_{t, #1}}
\def \gail#1 {{\color{blue}(#1)}}
\def \inner#1 {{\langle#1\rangle}}
\def \abs#1{{\left|#1\right|}}
\def \fstar{f^{\star}}
\def \fsstar{f_s^{\star}}
\def \fsharp{f^{\sharp}}
\def \vcstar{\vc^{\star}}
\def \vcstars{\vc_s^{\star}}
\def \vcsharp{\vc^{\sharp}}
\def \sstar{\gS^{\star}}
\def \ssharp {\gS^{\sharp}}
\def \vcp{\vc^{\sharp}|_{\ssharp}}
\def \fsp{\fsharp_{\ssharp}}
\def \tvert#1{{\left\vvvert#1\right\vvvert}}
\def \teta{{\tilde{\eta}}}
\title{SHRIMP: Sparser Random Feature Models via \\ Iterative Magnitude Pruning }
\author{
Yuege Xie \thanks{Equal contribution. Correspondence to: Yuege Xie.} \\
\texttt{yuege@oden.utexas.edu} \\
University of Texas at Austin\\
\And Bobby Shi$^*$ \\
\texttt{bhshi@utexas.edu} \\
University of Texas at Austin\\
\And  Hayden Schaeffer\\
\texttt{hschaeff@andrew.cmu.edu}\\
Carnegie Mellon University \\
\And Rachel Ward\\
\texttt{rward@math.utexas.edu}\\
University of Texas at Austin\\
}
\begin{document}
\maketitle
\begin{abstract}
Sparse shrunk additive models and sparse random feature models have been developed separately as methods to learn low-order functions, where there are few interactions between variables, but neither offers computational efficiency. On the other hand, $\ell_2$-based shrunk additive models are efficient but do not offer feature selection as the resulting coefficient vectors are dense. Inspired by the success of the iterative magnitude pruning technique in finding lottery tickets of neural networks, we propose a new method---\textbf{S}parser \textbf{R}andom Feature Models via \textbf{IMP} (ShRIMP)\footnote{Code and examples are available at \url{https://github.com/rhshi/sparse-rf}.}---to efficiently fit high-dimensional data with inherent low-dimensional structure in the form of sparse variable dependencies. Our method can be viewed as a combined process to construct and find sparse lottery tickets for two-layer dense networks. We explain the observed benefit of SHRIMP through a refined analysis on the generalization error for thresholded Basis Pursuit and resulting bounds on eigenvalues.

From function approximation experiments on both synthetic data and real-world benchmark datasets, we show that SHRIMP obtains better than or competitive test accuracy compared to state-of-art sparse feature and additive methods such as SRFE-S, SSAM, and SALSA. Meanwhile, SHRIMP performs feature selection with low computational complexity and is robust to the pruning rate, indicating a robustness in the structure of the obtained subnetworks. We gain insight into the lottery ticket hypothesis through SHRIMP by noting a correspondence between our model and weight/neuron subnetworks.
\end{abstract}

\section{Introduction}\label{sec:intro}
Kernel regression is an established choice for learning a target function from data with solid theoretical foundation \citep{hearst1998support, zhang2005learning}.  Kernel methods are general-purpose methods for estimating a function by fitting measurements to a representative function in a Reproducing Kernel Hilbert Space \citep{campbell2002kernel}.  The power of the method is derived from the Representer Theorem which connects finite measurements and continuous function space; however, kernel ridge regression does not take into account additional structure in the underlying target function, which can be limiting when additional structure is known to be present.  In many physical settings \citep{harris2019additive}, functions may arise naturally as sums of functions, each with a limited variable interaction.  Such low-order structure \citep{on_decompositions} may also be used to reduce the complexity of the system being modeled \citep{potts2019approximation, potts2021interpretable}. The Multiple Kernel Learning line of literature \citep{gonen2011multiple, bach2008consistency, xu2010simple} and more recent methods such as shrunk additive models \citep{kandasamy2016additive, liu2020sparse}  have been developed to exploit such low-order structure. However, these methods are computationally inefficient due to the naive cost of kernel ridge regression and minimal $\ell_1$-norm optimization \citep{liu2020sparse}, as well as repeated computation of the kernel at test and prediction periods.

% {\color{blue}from kernel to random features} 
In an independent line of work, the well-studied random features model as introduced in  \citet{rahimi_random_2008} allows for an approximation to the kernel function of interest without the computational cost of constructing a full kernel matrix, with bounds given in \cite{rahimi2008weighted, 4797607, avron2017random, pmlr-v9-cortes10a}.  However, neither the generic random features model nor the shrunk additive model offers the possibility of simple model compression or feature selection.  Kernel methods have been investigated in the context of feature selection \citep{pmlr-v5-kumar09a} and in sparse additive models \citep{10.1214/09-AOS781, yin2012group, ravikumar2009sparse}, and recent work by \citet{hashemi2021generalization} introduces the sparse random features model with coefficient vector recovered using basis pursuit. However, it is a priori unclear what coefficient sparsity means, especially in the $\ell_1$ sense, for random feature models.

Inspired by the success of the iterative magnitude pruning (IMP) technique for finding sparse subnetworks of neural networks with comparable performance \citep{frankle_lottery_2019, zhou2019deconstructing}, we propose a new $\ell_2$-based method---Sparser Random Feature Models via Iterative Magnitude Pruning (SHRIMP)---to efficiently fit high-dimensional data with inherent low-order structure. This method can be viewed as a combined process to construct and find lottery tickets of two-layer dense networks: it randomly initializes a fixed first layer with only low-order interactions by using sparse random weights instead of dense weights (Stage I) and applies neuron pruning to find a sparser sub-network by IMP (Stage II). From experiments on both synthetic data and real-world benchmark datasets, we show that SHRIMP is better or competitive against both random sparse feature models and shrunk additive models. We offer a refined analysis of the thresholded $\ell_1$-based sparse random feature model based on \cite{hashemi2021generalization}, and through our experiments and further discussion offer insight into the success of our method.

We connect our work to the larger active literature on random features learning and the theory of neural networks.  Neural networks in certain regimes have been found to be kernel machines \citep{jacot_neural_2020, chizat2018lazy}, and thus linear regression has been revisited as a model for neural network behavior \citep{Liang_2020, hastie_surprises_2019}.  In particular, there has been a concentrated interest in the generalization and double descent behaviors of random features regression \citep{montanari_generalization_2019, jacot2020implicit, dascoli_double_2020}; these works study ReLU features, while the closest model to our setting is that of \citet{liao2020random}.  \emph{As random features regression has become essential for studying neural networks in the kernel regime, we hope that random features pruning can be used to study neural network pruning}; much of the current theory on neural network pruning has been existential in nature \citep{orseau2020logarithmic, malach_proving_2020, pensia2020optimal}.  Moreover, the lottery ticket hypothesis has one of the greatest implications for model compression, helping alleviateproblems related to massive neural networks, such as unequal access to computing resources among researchers, and the environmental impact of deep learning.

Our main contributions are as follows:
% (will further revise this part)}
\begin{enumerate}

    \item We propose a two-stage algorithm, SHRIMP, to learn sparse random feature models with low-order interactions efficiently via an iterative magnitude pruning technique. It is surprising that $\ell_2$-based SHRIMP finds sparser models than basis pursuit, which is a staple among algorithms used for sparse recovery and feature selection.
   
   \item Experiments on both synthetic and real-world datasets verify that with proper choice of the order $q$ in the low-order function model, SHRIMP obtains better or matching test performance compared to existing sparse feature or shrunk additive models while at the same time being scalable to high-dimensional settings. Beyond the effectiveness and efficiency, we show that SHRIMP is robust to parameters such as pruning rate and exhibits surprising support recovery ability with odd/even separation.
   
   \item We offer a refined analysis of the main theorem from \citet{hashemi2021generalization} that allows us to connect our experimental findings with initial theoretical results from the compressed sensing literature; one perspective of our method is that it situates itself between $\ell_2$ and $\ell_0$-based methods. Our analysis on the evolution of the spectrum of the Gram matrix during the SHRIMP algorithm iterations offers additional insight into the benefit of SHRIMP over random pruning.

   \item By connecting our SHRIMP model to the process of finding winning lottery tickets for a two-layer fully connected neural network initialized with a random sparse subnetwork and with neurons pruned by IMP, we shed light on the successful performance of IMP as a mechanism for finding lottery tickets. 
  
\end{enumerate}

% {\color{green} Rachel: I would remove the following rebuttal text from the arxiv version}
% \rebuttal{SHRIMP is for efficiently \textit{approximating} high-dimensional functions with inherent sparse additive structure. It not only makes finding sparser random feature models scalable by applying iterative magnitude pruning (IMP), but also is carefully designed as a two-stage algorithm with low-order-interaction features, min $\ell_2$-norm estimators, and a validation process, which is \textit{novel and obtains better performance compared to SOTA methods and with much less computational time}. 
% Our paper addresses a hard sparse model learning problem in scientific computing and provides initial theoretical understanding of the success of IMP and gains insight into the lottery ticket hypothesis. Beyond random feature models, the iterative procedure in linear systems (Stage II in Algorithm \ref{imp}) is generally applicable to models with sparse features, since SHRIMP provides local minimizers of an $\ell_0$-regularized problem. It can be extended to learn interpretable and generalizable dynamic systems such as complex fluid dynamics.}

\begin{figure}[ht]
    \centering
    \includegraphics[width=.85\linewidth]{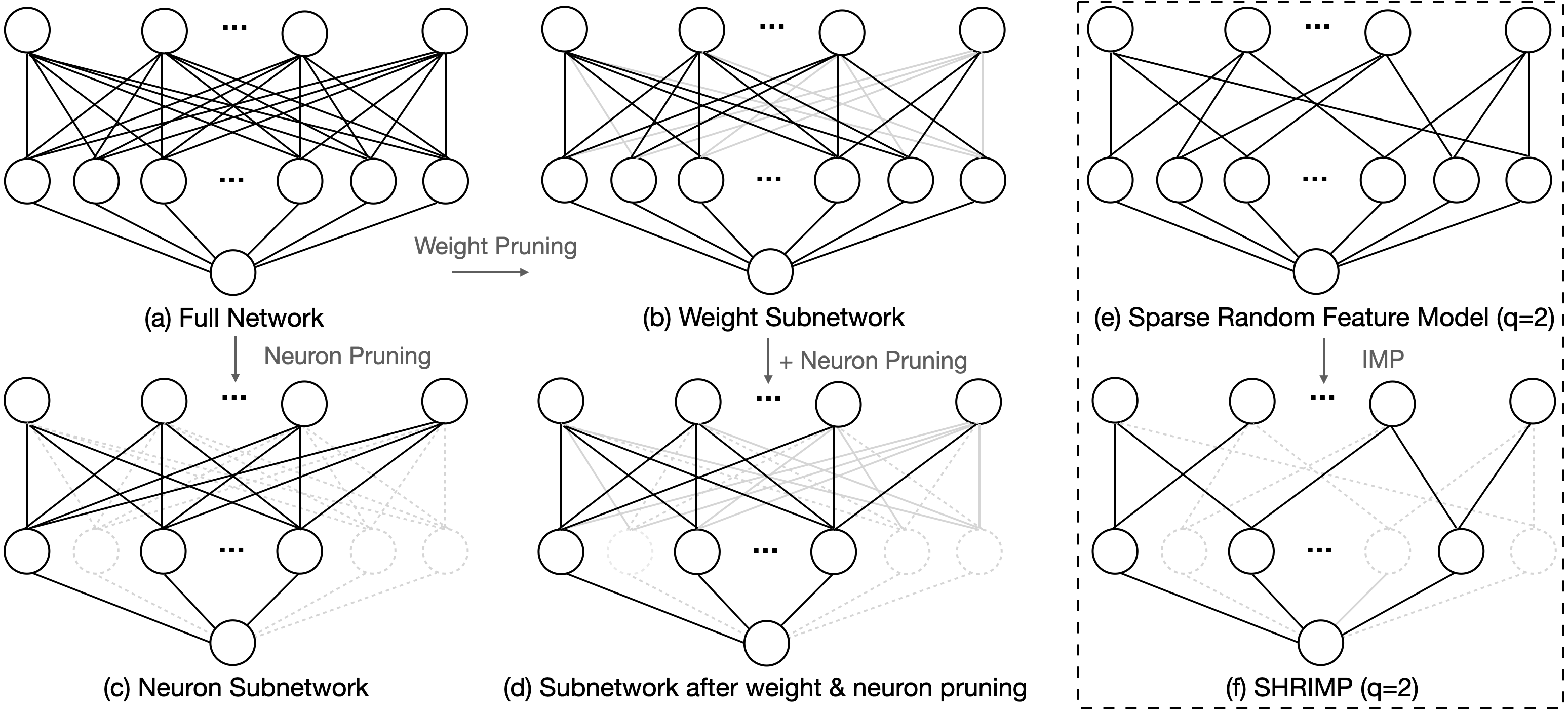}
    \caption{Illustration of the relationship between SHRIMP and different types of pruning.}
    \label{fig:pruning}
\end{figure}

\subsection{Related Work}
\textbf{Sparse Random Feature Models and Shrunk Additive Models.} In high-dimensional settings arising from modeling physical systems, the underlying governing function is well-approximated as being a sum of \textit{low-order} functions; that is, the function can be written as a sum of component functions, such that only $q$ variables out of a total of $d$ variables are active in each component, with $q\ll d$ \citep{low_order, on_decompositions}.  Recent methods such as SALSA \citep{kandasamy2016additive} and SSAM \citep{liu2020sparse} are kernel-based methods that directly exploit such low-order structure.  Separately, the random Fourier features model of  \citet{rahimi_random_2008} is a popular choice in approximating a kernel function when there are many data samples and the construction of the kernel matrix is computationally expensive.  Additive kernels were considered in \cite{6136519}, while coefficient sparsity was investigated in \citep{NIPS2014_459a4ddc, ozcelikkale2020sparse}.  The first work (to our knowledge) to combine these approaches--exploiting low-order structure and random features together --was the work of \citet{hashemi2021generalization}, which uses an $\ell_1$-based approach with \textit{sparse} random features in order to learn a low-order function. A detailed comparison of SHRIMP with sparse random feature models \citep{hashemi2021generalization, elesedy2020lottery} and shrunk additive models \citep{kandasamy2016additive, liu2020sparse} is listed in Table \ref{tab:comp}.

\textbf{Lottery Ticket Hypothesis and Iterative Magnitude Pruning.} \citet{frankle_lottery_2019} propose the lottery ticket hypothesis and a corresponding iterative magnitude pruning procedure for compressing neural networks by pruning weights based on their magnitude and retraining pruned subnetworks from the same initial weights at each pruning iteration to find a sparse subnetwork (winning ticket) with comparable test accuracy to the original overparameterized dense neural network. Follow-up work \citep{zhou2019deconstructing, ramanujan_whats_2020, malach_proving_2020} prove that for a sufficiently overparameterized neural network, a good initial sub-network with random weights achieves competitive accuracy compared to the original network. \citet{elesedy2020lottery} initiated a theoretical analysis on a simplified form of IMP where one prunes a single weight per iteration and the solution is obtained by gradient flow (i.e., min $\ell_2$-norm estimator) in linear models.  However, this is not a practical sparse estimation method in linear models due to the inefficiency of one-weight-per-iteration pruning and computation of gradient descent with $t\rightarrow\infty$. Instead, our work applies proportional IMP to neuron pruning and utilizes the implicit regularization of the pseudo-inverse to reduce the computational cost significantly. The work \cite{zhang2021lottery} analyzes the geometric structure of the model throughout pruning; we corroborate these ideas by providing results on the eigenvalues of the Gram matrix throughout pruning.  However, our work does not focus solely on the lottery ticket hypothesis, but should serve as a stand-alone method for low-order function approximation.

\begin{table}[ht]
    \centering
    \caption{Comparison of sparse random feature models: SRFE-S \citep{hashemi2021generalization}, shrunk additive models such as SALSA \citep{kandasamy2016additive}, SSAM \citep{liu2020sparse}, and IMP in Linear Regression \citep{elesedy2020lottery}. Here, $d$ denotes the data dimension, $q$ denotes the interaction order of features, and $T_p$ indexes the pruning iteration. The feature number with $N_s$ corresponds to the number of non-zero components in the feature vector by $\ell_1$-regularization; $N_{best}$ denotes the number of features remaining after IMP.}
    \scalebox{1}{
    \begin{tabular}{lccccc}
    \toprule
     Property & SHRIMP & SALSA & SSAM & SRFE-S & IMP in LR\\
     \midrule
      Sample sparsity  &  $\surd$ & $\times$ & $\surd$ & $\surd$ & $\times$  \\
      Low-order Interaction & $\surd$ & $\surd$ & $\surd$ & $\surd$ & $\times$\\
      Feature sparsity  & $\surd$ & $\times$ & $\surd$ & $\surd$ & $\surd$ \\
      Computational Efficiency & $\surd$ & $\surd$ & $\times$ & $\times$ & $\surd$ \\
      Regularization & implicit $\ell_2$ & $\ell_2$-norm & $\ell_1$-norm & $\ell_1$-norm & implicit $\ell_2$ \\
      Feature model &  random feature & kernel & kernel & random feature & linear \\
      $\#$ Features* & $ \binom{d}{q} \cdot n \rightarrow  N_{best}$ & $\binom{d}{q}$ & $\binom{d}{q} \rightarrow N_s$ & $\binom{d}{q} \cdot n \rightarrow  N_s$ & $d \rightarrow d-T_p$  \\
  \bottomrule
    \end{tabular} }
    \label{tab:comp}
\end{table}

\subsection{Notation}
Throughout the paper, $\mA^\dagger$ denotes the Moore–Penrose inverse of a matrix $\mA$, and hence $\vc = \mA^\dagger \vy$ is the minimal $\ell_2$-norm estimator in the overparameterized regime, and the least-square estimator in the underparameterized regime. We denote the number of data points by $m$, the dimension of data by $d$, and the number of features by $N$. Measurement noise $e_k$ is defined as $y_k = f(\vx_k) + e_k$ with either $|e_k| \leq E =2\nu$ or $e_k$ i.i.d. drawn from $\gN(0, \nu^2)$, $\forall k \in [m]$.

\section{Preliminaries}
Low order functions arise naturally in the physical world and are used as a form of reduced-complexity model for such systems \citep{potts2019approximation, potts2021interpretable}. Let us first recall the definition for an order-$q$ function, as well as definitions for bounded $\rho$-norm functions and $q$-sparse feature weights from \cite{hashemi2021generalization}.

\begin{definition}[\textbf{Order-$q$ Function}]\label{def:order_q_func} For any $d, q, K \in\mathbb{N_+}$ with $q\leq d$, a function $f:\sC^d\to\sC$ is an order-$q$ function of at most $K$ terms if there exist functions $g_1,\dots, g_K:\sC^q\to \sC$ such that 
    \begin{equation}
        f(x_1,\dots, x_d)=\frac{1}{K}\sum_{j=1}^{K}g_j(x_{j_1},\dots, x_{j_q})=\frac{1}{K}\sum_{j=1}^{K}g_j(\vx|_{\mathcal{S}_j}),
    \end{equation}
    where $\mathcal{S}_j\subseteq [d]$ is an index subset of $[d]$ and $\vx|_{\mathcal{S}_j}$ is the restriction of $\vx$ onto the indices.
\end{definition}

In general, such a decomposition is not unique.  However, the set of order-$q$ functions forms a vector space, as the sum of two order-$q$ functions is itself an order-$q$ function, and the space is closed under scalar multiplication.  Additionally, if we let $\norm{\cdot}$ be a function norm, then we can define
\[\norm{f} = \inf \sqrt{\frac{1}{K}  (\norm{g_1}^2+\dots+\norm{g_K}^2)} ,\]
where the infimum is taken over all possible order-$q$ decompositions of $f$.  If each $g_j$ lies in a Reproducing Kernel Hilbert Space (RKHS), then $f$ lies in the direct sum of the component Reproducing Kernel Hilbert Spaces with RKHS norm defined as above \citep{aronszajn1950theory}.

\begin{definition}[\textbf{Bounded $\rho$-norm Function}]\label{def:function_class1}
Fix a probability density function $\rho: \mathbb{R}^d \rightarrow \mathbb{R}$ and a function $\phi:  \mathbb{R}^d \times \mathbb{R}^d \rightarrow \mathbb{C}$.  A function $f: \mathbb{R}^d \rightarrow \mathbb{C}$ has finite $\rho$-norm with respect to $\phi( \vx; \vw )$ if it belongs to the class
\begin{align}
\mathcal{F}({\phi,\rho}) := \Bigg\{f(\vx) = \int_{\vw \in \mathbb{R}^d} \alpha(\vw) \phi(  \vx; \vw ) d\vw : \|f\|_\rho := \sup_\vw \left|\frac{ \alpha(\vw)}{\rho(\vw)}\right|<\infty\Bigg\} .
\end{align}
\end{definition}

\begin{definition}[\textbf{$q$-sparse Feature Weights}]\label{def:sparse_weights}
    Let $d, q, n\in\mathbb{N_+}$ with $q\leq d$, and let $\rho:\mathbb{R}^q\to\mathbb{R}$ be a probability distribution.  A collection of $N=n\binom{d}{q}$ weight vectors $\vomega_1,\dots, \vomega_N$ is called a set of $q$-sparse feature weights if it is  generated as follows: for each index subset $\mathcal{S}_j\subseteq [d], |\mathcal{S}_j|=q$ draw $n$ i.i.d. random vectors $\mathbf{z}_1,\dots, \mathbf{z}_n\sim \rho$, and construct $q$-sparse features $\{\vomega_{j_k}\}_{k=1}^n$ by setting $\mathrm{supp}(\vomega_{j_k})=\mathcal{S}_j$ and $\vomega_{j_k}|_{\mathcal{S}_j}=\mathbf{z}_k, k \in [n]$.
\end{definition}

Let $\mX \in \sR^{m\times d}$ be a data matrix consisting of $m$ $d$-dimensional samples, and let $\mW \in \sR^{N\times d}$ be the matrix of $N$ $d$-dimensional feature weights constructed according to Def. \ref{def:sparse_weights}.  Construct the random feature matrix $\mA = \phi(\mX\mW^\ast )$ so that $\mA \mA^\ast $ approximates a kernel matrix of interest. For example, let $\phi (\cdot) = [\cos(\cdot), \sin(\cdot)]$, $q=d$, and $\rho$ be the normal distribution, $\mathbf{E}_W[\mA \mA^\ast ]$ is the kernel matrix of the Gaussian kernel \citep{rahimi_random_2008}. In the more general case of $q \leq d$, with $\rho$ as a normal distribution, $\mathbf{E}_W[\mA \mA^\ast ]$ is the kernel matrix corresponding to a \textit{direct sum} of Gaussian kernels, each defined over $\mathbb{R}^q\times \mathbb{R}^q$. See the appendix for more precise approximation bounds.

\section{Sparser Random Feature Models via Iterative Magnitude Pruning} \label{sec:shrimp}
In this section, we first present the proposed SHRIMP algorithm; we then illustrate its connection to the lottery ticket hypothesis and network pruning. To address the challenge of targeting low-order additive structure efficiently with feature selection, we propose a two-step SHRIMP method (Algorithm \ref{imp}) to find a sparse low-order random feature subnetwork of a fully connected neural network: first, we initialize a sparse random feature model by constructing low-order random feature weights as a subnetwork of the dense network, according to Def. \ref{def:sparse_weights}; second, SHRIMP finds a sparse winning lottery ticket $\vc_{t^{\star}}$ by forming a set of sparse min $\ell_2$-norm estimators $\{\vc_t \}_{t=1}^T$ via IMP and selecting the best model via a validation dataset. At test time, with the best model $\{\vc_{t^{\star}}, \gP_{t^{\star}}\}$ chosen, we transform the test data via  $\mA^{test} =  \phi(\mX^{test}_{\gP_{t^{\star}}} \mW_{\gP_t^{\star}}^\ast )$ and predict via $\vy^{test} = \mA^{test} \vc_{t^{\star}}$. For synthetic data, the validation and test data are randomly drawn from the same distribution as the training data; for real-world data, we randomly split the training data into training and validation sets.

\begin{algorithm}[ht]
\caption{SHRIMP: Sparser Random Feature Models via Iterative Magnitude Pruning}
\label{imp}
    \begin{algorithmic}[1]
    \State \textbf{Input:} Training dataset $(\gX, \gY) \in \sC^{m \times d} \times \sC^m$, parametric basis function $\phi(\cdot~; \vw)$, feature sparsity level $q$, pruning rate $p \in (0,1)$, iterations of pruning $T \in \sN_+$.
    
    \State \textbf{Stage I. Constructing: } Draw $N = n \binom{d}{q}$ $q$-sparse features $\{\vw_j\}_{j=1}^N$ according to Def. \ref{def:sparse_weights}, form the matrix $\mW \in \sR^{N\times d}$, and construct a random feature matrix $\mA \in \sR^{m\times N}$ by
    $\mA= \phi(\mX \mW^\ast )$.
    
    \State \textbf{Stage II. Pruning:} Compute the min $\ell_2$-norm solution $\vc_0 = \mA^{\dagger} \vy$ and set $\gP_0 = \{1, \dots, N\}$.
    
    \For{ $t = 1, \ldots, T$ }
      \State Get the feature index set $\gP_t$ by pruning $p \times |\gP_{t-1}|$ features from $\vc_{t-1}$ with $\vc^s_t$ denoting the ascending sorted (by absolute value) array of $\vc_t$ by  $$\gP_t = \{i \in \gP_{t-1} :  |\vc_{t-1, i}| \geq \vc^s_{t-1, p\times|\gP_{t-1}|} \}.$$
      
      \State Update the min $\ell_2$-norm solution: $\vc_t = \mA_{\gP_t}^{\dagger} \vy$, where $\mA_{\gP_t}$ is the column submatrix of $\mA$ with index set $\gP_t$.
    \EndFor
    
    \State \textbf{Output:} the pruned minimal $\ell_2$ norm estimator and feature index pair set $\{(\vc_t, \gP_t) \}_{t=1}^T$.
    \end{algorithmic}
\end{algorithm}

\textbf{Discussion on Computational Complexity.}  At each step $t$, SHRIMP gets the min $\ell_2$-norm solution by solving a linear system $\vc_t = \mA_{\gP_t}^\dagger \vy$ rather than computing pseudo-inverse directly, so each step has computational cost \textit{at most  $\gO(N_tm^2+m^3)$} in the worst case (it is possible to leverage random sketching to solve $\mA_{\gP_{t+1}}$ based on the previous computation of $\mA_{\gP_t}$).  Additionally, $|\gP_t|= N_t = N(1-p)^t$ and $t \in [0, T]$ with $p \in (0,1)$ and $(1-p)^T N= 1$. Hence, $N_t$ is at most $\sum_{t=0}^{T} N(1-p)^t = N\frac{(1-1/N)}{p} = \frac{N-1}{p} \ll NT$. Meanwhile, the complexity of $\ell_1$ minimization is known to be at least polynomial in $N$; for example, the complexity of interior-point methods to obtain the min $\ell_1$-norm estimator is $\gO(N^6)$. SRFE is solved using the spgl1 package in Python/MATLAB~\citep{spgl1site, BergFriedlander:2008}. A detailed comparison of computational time required by  SHRIMP and SRFE with different scales of data is in Figure \ref{fig:time}.

\textbf{Connection to Neural Network Pruning.} Consider a two-layer fully connected neural network $f(\vx)$ with activation function $\phi(\cdot)$ (see also Figure \ref{fig:pruning}(a)),
\begin{align}
    f(\vx) = \sum_{i=1}^N a_i \phi (\vw_i^\ast \vx).
\end{align}
Winning tickets are defined as sparse subnetworks that reach test accuracy comparable to the original network~\citep{frankle_lottery_2019}.
Finding winning lottery tickets is known to be computationally hard in the worst case~\citep{frankle_lottery_2019,malach_proving_2020,zhang2021efficient}, and this poses a challenge to understanding why certain pruning methods tend to work well in practice.   Pruning methods for neural networks  generally fall into two categories: weight pruning and neuron pruning \citep{malach_proving_2020}.  Weight pruning (Figure \ref{fig:pruning} (a) to (b)) involves a set of binary mask vectors $\vu_i\in\{0, 1\}^d$, equivalent to pruning the first layer weights, resulting in the network
\begin{equation}
    \tilde{f_w}(\vx) = \sum_{i=1}^N a_i \phi((\vw_i \odot \vu_i )^\ast \vx),
\end{equation}
while neuron pruning (Figure \ref{fig:pruning} (a) to (c)) involves a set of binary scalars $b_i\in \{0, 1\}$, equivalent to pruning entire neurons, resulting in the network 
\begin{equation}
    \tilde{f_n}(\vx) = \sum_{i=1}^N (b_i a_i) \phi (\vw_i^\ast \vx).
\end{equation}

A sparse subnetwork is the result of applying both types of pruning,  as shown in Figure \ref{fig:pruning}(d). The SHRIMP method first fixes the weight subnetwork at a specified sparsity level determined by the choice of low-order parameter $q$ (Figure \ref{fig:pruning}(e)); it then adaptively ``prunes" the neurons by finding a sparse coefficient vector (Figure \ref{fig:pruning}(f)).  In other words, for each subset $\mathcal{S}_j\subseteq [d], |\mathcal{S}_j|=q$, we define $\vu_j\in\{0, 1\}^d, \mathrm{supp}(\vu_j)=\mathcal{S}_j$, and then adaptively prune $\vc$ so that the result is
\begin{equation}
     f^\star(\vx)=\frac{1}{K}\sum_{j=1}^K \sum_{\ell=1}^n (b_{j, \ell}c_{j,\ell}) \phi((\vw_{j, \ell} \odot \vu_j )^\ast \vx),
\end{equation}
where $K=\binom{d}{q}$.  Using the notation of Algorithm \ref{imp}, if $s$ is the sparsity level of $\vc_{t^\star}$, then we can compress the number of nonzero entries of $\mW_{\gP_t^\star}, \vc_{t^\star}$ to $(q+1)s$. This is in contrast to the standard random feature model with dense $\mW, \vc$, where there are $(d+1)N$ non-zeros. Note that SHRIMP performs best when the target function is a sum of low-order components, so that the selected model is actually a pruned sparse model. Experiments shown in Table \ref{tab:compare_func} corroborate this finding, with ``Avg size'' ($\#$features) much smaller than $N$. Explicitly capturing the low-order structure is an advantage of SHRIMP.

\section{Experiments}\label{sec:exp}
We now show generalization error results on synthetic and real-world datasets. We then illustrate several benefits of  SHRIMP compared to other approaches including computational efficiency, robustness to pruning rate, and sparse support recovery, as well as other benefits of iterative magnitude pruning.  
\subsection{Function Approximation}

To demonstrate the performance of SHRIMP on low-order functions, we first test different models on synthetic functions with $\vx=[x_1, x_2, \dots, x_d]^T$. See the appendix for additional experiments.
\begin{itemize}
    \item Simple additive functions: $f_1(\vx)=\sum_{i=1}^{d-1}x_i+\exp(-x_d)$ and $f_2(\vx)=\cos(x_1)+\sin(x_2)$;
    
    \item Functions with pairwise behavior (from \cite{liu2020sparse}): $f_3(\vx)=(2x_1-1)(2x_2-1)$ and $f_4(\vx)=(2x_1-1)(2x_2-1)+(2x_1-1)(2x_3-1)+(2x_2-1)(2x_3-1)$;
    
    \item Low-order non-smooth functions: $f_5(\vx)=\mathrm{sinc}(x_1)\mathrm{sinc}(x_3)^3+\mathrm{sinc}(x_2)$;
    
    \item Ishigami example used for uncertainty and sensitivity analysis \citep{151285}: $f_6(\vx)=\sin(x_1)+7\sin^2(x_2)+0.1x_3^4\sin(x_1)$;
    
    \item An order-2 function with many order-1 components: $f_7(\vx)=\cos(x_1)x_3+x_2^2x_4+\sum_{i=3}^d x_i$.
\end{itemize}

\begin{table*}[ht]
    \centering
      \caption{Comparing the test errors. Avg size denotes the average pruned model size of SHRIMP taken over three runs. $^\dagger s$ denotes the setting with different $(m, d, q)$ pairs: $s_l^{q_*} = (140, 10, q_*), s_h^{q_*} = (1400, 100, q_*)$, where $l$ denotes low dimension and $h$ denotes high dimension, and all methods use $q=q_*$ (the ground-truth order). The best MSE for each $f_i(\vx)$ over all models is in purple. From the comparison between $s_l^{q_*}$ and $s_h^{q_*}$, SHRIMP is the best overall, and is significantly more scalable to the high-dimensional setting.}
    \scalebox{1}
    {\begin{tabular}{clccccccc}
    \toprule
    Setting$^{\dagger}$ & Model  & $f_1(\vx)$
    & $f_2(\vx)$
    & $f_3(\vx)$
    & $f_4(\vx)$ 
    & $f_5(\vx)$
    & $f_6(\vx)$
    & $f_7(\vx)$
    \\ 
 
    \midrule

    & SRFE-S & 7.85e-04 & 1.98e-05 & 1.15e-01 & 1.27e-01  & 7.52e-03 & 1.11 & 7.49e-02\\
     & Min $\ell_2$ & 4.37e-20 & 5.45e-24 & 8.20e-02 & 5.94e-02 &  7.36e-03 & 7.18 & 2.98e-02\\
    $s_l^{q_*}$ & SALSA & 1.59e-12 & 1.26e-15 & 8.80e-02 & 6.14e-02 &  7.32e-03 & 6.99 & 2.72e-02\\
     & SHRIMP & \textcolor{purple}{1.37e-22} & \textcolor{purple}{7.90e-32} & \textcolor{purple}{4.98e-12} & \textcolor{purple}{2.54e-12}  & \textcolor{purple}{6.39e-04} & \textcolor{purple}{2.58e-02} & \textcolor{purple}{2.83e-05}\\
     \hdashline
    & Avg size* & 3100.33 & 29 & 147 & 171.67 & 39 & 80.33 & 187 \\
    
    \midrule
    & SRFE-S & 1.52e-03 & 8.71e-06 & 1.99 & 4.54&  1.16e-01 & 4.59 & 4.40e-01\\
    & Min $\ell_2$ & 1.68e-20 & 3.51e-24 & 2.01 & 4.75 &  1.16e-01 & 8.34 & 1.49e-01\\
    $s_h^{q_*}$& SALSA & 1.99e-11 & 2.54e-13 & 1.60 & 3.16 &  8.49e-02 & 7.35 & 1.34e-01\\
     & SHRIMP & \textcolor{purple}{1.61e-22} & \textcolor{purple}{1.11e-30} & \textcolor{purple}{1.26e-02} & \textcolor{purple}{5.11e-01} & \textcolor{purple}{1.50e-02} & \textcolor{purple}{2.68} & \textcolor{purple}{5.82e-02}\\
     \hdashline
    & Avg size & 3355 & 19 & 61.33 & 64  & 42.67 & 13 & 229\\
    \midrule
    & Order $q_*$ & 1 & 1 & 2 & 2 &3 &2 &2 \\
     \bottomrule
    \end{tabular}}
    % \vspace{.02cm}
    \label{tab:compare_func}
\end{table*}
 
\textbf{Experimental Results.} Table \ref{tab:compare_func} reports the test mean-squared error (MSE) for the function approximations $\{f_i(\vx)\}_{i=1}^7$ and corresponding optimal $q$\footnote{See the appendix for more results on $q=d$.}. As shown in Table \ref{tab:compare_func}, SHRIMP consistently outperforms other function approximations, often by an order of magnitude. This is consistent across low- ($s_l^{q_*}$) and high-dimensional ($s_h^{q_*}$) settings. In particular, SHRIMP performs notably better than SALSA \citep{kandasamy2016additive}, which explicitly constructs the additive kernel matrix and performs kernel regression. One notable example is $f_6$, the Ishigami function; SHRIMP is the only method that attains a test MSE less than $1$ in the low-dimensional setting. In addition, SHRIMP succeeds in finding sparse models. The best-performing SHRIMP model often has orders of magnitude fewer features, performing significant model compression (from initial 20000 features to ``Avg size" in Table \ref{tab:compare_func}), and often moving from the overparameterized to the underparameterized setting via this compression. When more variables are involved, e.g., $f_1$, which is order-$1$ but involves all the variables, $\#$features retained also increases. On the other hand, for $f_2$, which is a simple sum involving two trigonometric terms, SHRIMP retains only $19$ features yet attains the best test error by a significant margin.

\subsection{Real-world Datasets}\label{sec:exp_real}
We test SHRIMP on eight real-world datasets from the UCI repository  (\url{http://archive.ics.uci.edu/ml}) and follow the experimental setup (\url{https://github.com/kirthevasank/salsa}) in \cite{kandasamy2016additive}. We compare the test errors with shrunk additive models, SALSA \citep{kandasamy2016additive} and SSAM \citep{liu2020sparse}, and the sparse model by Lasso. For the experiments with SHRIMP, we use $90\%$ original training dataset as training data and $10\%$ as validation data to select the best model from models trained with $q \in \{1, 2, 3, \dots, d\}$  and the pruning rate $p(\%) \in \{15, 25, 35\}$. (For practical consideration, we usually only use $q \in \{1, 2, \dots, \max\{10, d\}\}$.) From Table \ref{tab:real}, SHRIMP attains the best test errors on the Propulsion and Galaxy datasets and has comparable results on all other datasets while still being significantly more efficient to implement. 

\begin{table*}[ht]
    \centering
    \caption{Test MSE on eight real-world benchmark datasets. The results of SALSA, SSAM, and Lasso are taken from \cite{liu2020sparse} and \cite{kandasamy2016additive}. The best MSE for each dataset is in purple. }
    \scalebox{1}{
    \begin{tabular}{lllll}
    \toprule
   Dataset $(m, d)$ & SHRIMP $(q, n_b)$ & SALSA $(q)$  & SSAM & Lasso \\
    \midrule
    Propulsion $(200, 15)$ &  ${\color{purple}1.02 \times 10^{-6}}~(2, 84)$ & $8.81\times 10^{-3}~(8)$ & NA & $2.48 \times 10^{-2}$ \\

    Galaxy $(2000, 20)$ & $ {\color{purple} 5.41 \times 10^{-6}} ~(3, 575)$& $ 1.35\times 10^{-4} ~(4)$ & NA & $2.39 \times 10^{-2}$\\

    Airfoil $(750, 41)$ &  ${\color{purple}2.65 \times 10^{-1}} ~(2, 30)$ & $5.18 \times 10^{-1} ~(5)$ &  $4.87 \times 10^{-1}$ & $5.20 \times 10^{-1}$ \\

    CCPP $(2000, 59)$  &  ${\color{purple}6.55\times 10^{-2}} ~(2, 49)$ & $6.78\times 10^{-2} ~(2)$ & $6.94\times 10^{-2}$ & $7.40\times 10^{-2}$ \\

    Insulin $(256, 50)$ & $1.24 \times 10^{0} ~(1, 60)$ &  $1.02 \times 10^{0} ~(3)$ & ${\color{purple}1.01 \times 10^{0}} $ & $1.11 \times 10^{0}$ \\

    Telemonit $(1000, 19)$ & $6.00\times 10^{-2}~(4, 86)$ & ${\color{purple} 3.47\times 10^{-2}} ~(9)$ & $6.89\times 10^{-2}$ & $8.63\times 10^{-2}$ \\
    
    Housing $(256, 12)$ & $3.94\times 10^{-1}~(7, 15)$ & ${\color{purple}2.62\times 10^{-1}} ~(1)$  & $3.79\times 10^{-1}$ & $4.4\times 10^{-1}$ \\
    
    Skillcraft $(1700, 18)$ & $5.81\times 10^{-1} ~(8, 21)$ & $5.47\times 10^{-1} ~(1)$ & ${\color{purple}5.43\times 10^{-1}}$ & $6.65\times 10^{-1}$\\
    \bottomrule
    \end{tabular} }
    \label{tab:real}
\end{table*}

\subsection{Properties of SHRIMP}
\textbf{Computational Efficiency.} To compare the time efficiency of SHRIMP to SRFE-S ($\ell_1$-minimization), we approximate the function $f(\vx) = 3\cos(x_3) + 4\sin(x_4) + 2\sin(x_2)$  using varying values for the parameters $m, d$, and $N$. Fig. \ref{fig:time} shows that at equal parameter choices $m, d, N,$ SHRIMP is significantly faster than SRFE-S. In particular, as $m$ increases, the cost of SHRIMP over varying $N$ increases linearly, while the cost of basis pursuit increases exponentially. In addition to offering better generalization error, SHRIMP is significantly less computationally intensive.

\begin{figure}[ht]
    \centering
    \includegraphics[width=.56\linewidth]{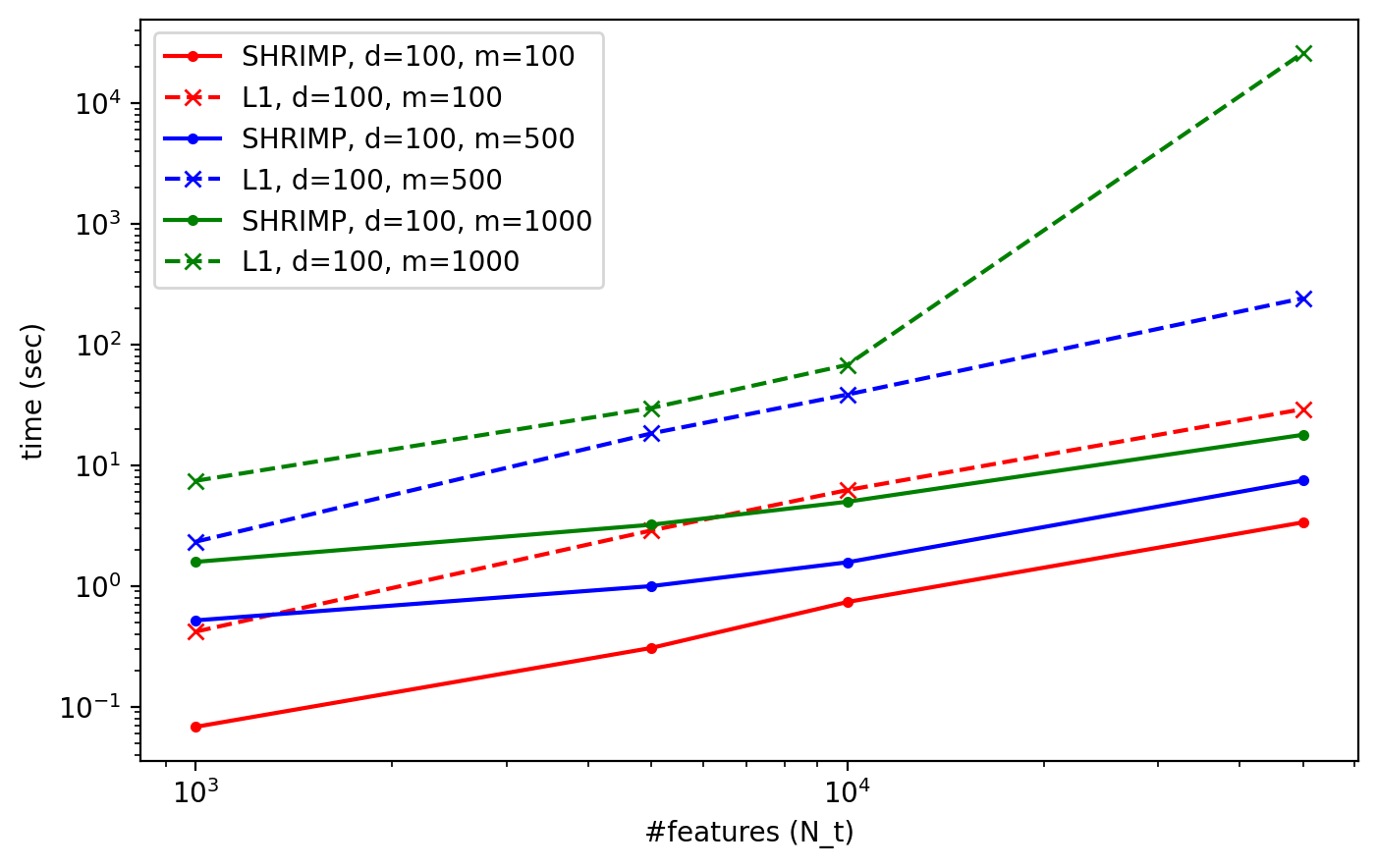}
    \caption{Time Comparison: SHRIMP vs. SRFE-S ($L_1$) as a function of $\#$features.}
    \label{fig:time}
\end{figure}
\textbf{Robustness to Pruning Rate.} We study the robustness of SHRIMP to the pruning rate on real-world datasets by using a range of pruning rates $p (\%) \in \{15, 20,$ $25, 30, 35, 40, 45, 50\}$ and comparing the best $q$ chosen by validation dataset and corresponding test MSE. Figure \ref{fig:robust} shows that the best $q$ is almost invariant over all pruning rates, and the corresponding test MSEs remain within a small range. Hence, SHRIMP exhibits robustness to the pruning rate, indicating a corresponding robustness in the structure of good subnetworks.

\begin{figure}[ht]
    \centering
    \includegraphics[width=.47\linewidth]{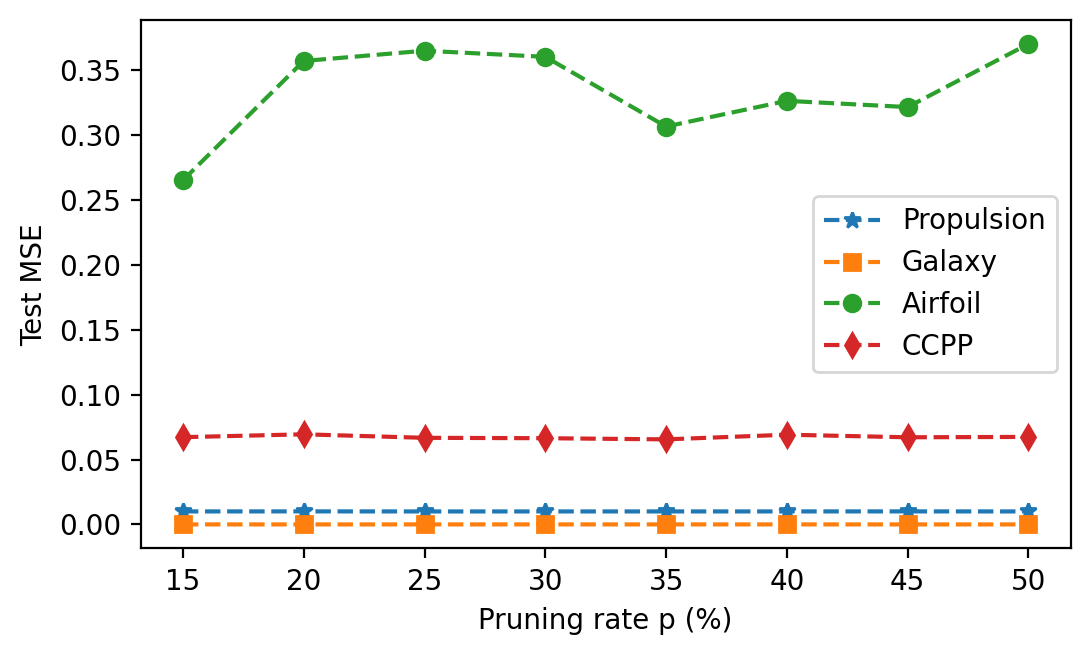}
    \includegraphics[width=.45\linewidth]{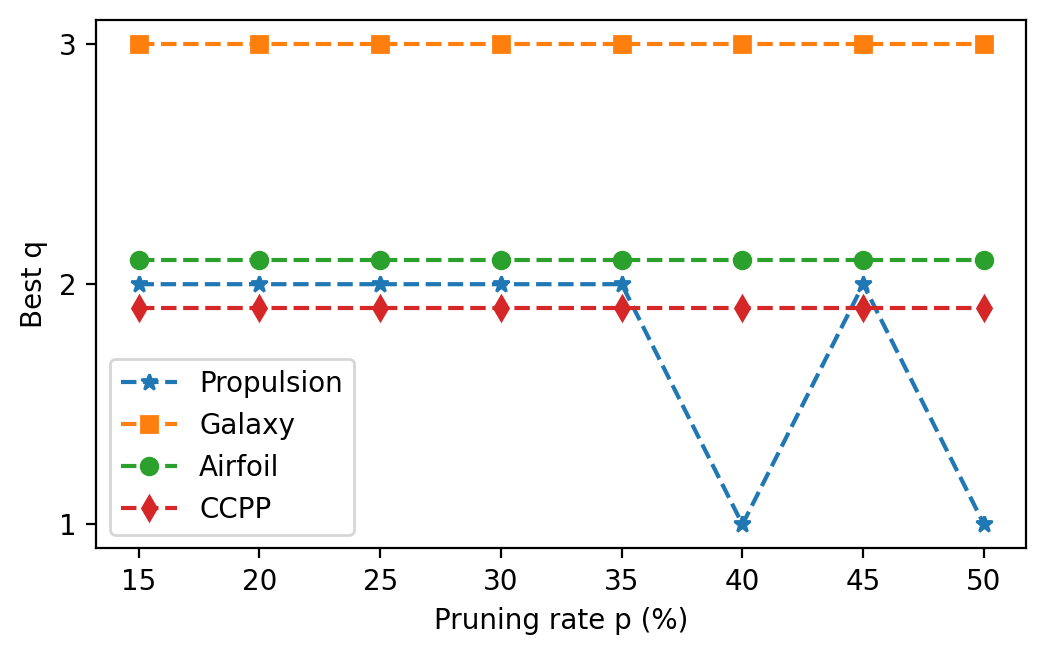}
    \caption{Robustness of pruning rate shown by test MSE and the corresponding best $q$ over four datasets. Note the lines of best $q$ of Airfoil and CCPP are at $2$ and shifted on purpose for illustration. Test MSE of propulsion is also shifted $0.01$ up for illustration. }
    \label{fig:robust}
\end{figure}

\textbf{Sparse Support Recovery.} We illustrate the power of SHRIMP as a method for sparse support recovery on a simple order-$1$ additive function $f_s(\vx) = 3\cos(x_3) + 4\sin(x_4) + 2\sin(x_2)$ with separate component functions on each coordinate. We sample $m=1000$ points uniformly from $[-1, 1]^5$ and apply SHRIMP with $N=10000$ ($N_0 = 20000$ features), $q=q_*=1$, and pruning rate $p=20\%$. 
Figure \ref{fig:support} shows that both the sparse support set and the even/odd property are recovered by SHRIMP.  At first, when $N_t=20000$, the min $\ell_2$-norm solution has many small weights distributed across false coordinates $\{x_3, x_5\}$. At the first key point $N_t=8192$, most unnecessary weights on $\sin$ have been pruned; after $N_t=879$, remaining weights are only on $\{x_2, x_3, x_4\}$ with correct even/odd partition. The best test MSE is at $N_t=38$, where the resulting vector is extremely sparse and matches the support set exactly. Along the pruning process, the subnetworks found by SHRIMP maintain a comparative or better test error with only a small fraction of weights. In other words, winning tickets found by SHRIMP exhibit the ability to recover sparse low-order interactions in random feature models.

\begin{figure}[ht]
    \centering
    \includegraphics[width=.95\linewidth]{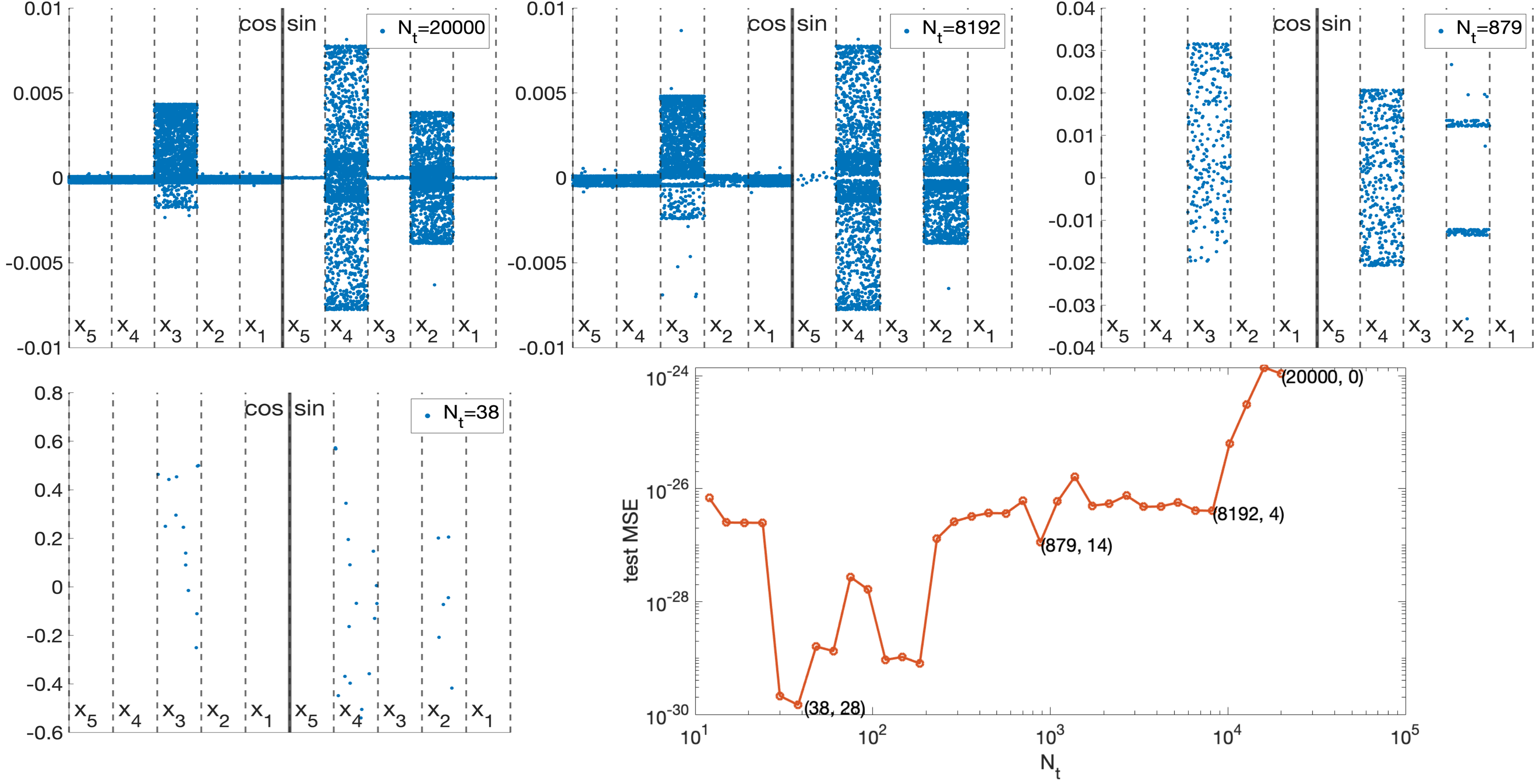}
    \caption{Illustration of support recovery of SHRIMP with $f_s(\vx) = 3\cos(x_3) + 4\sin(x_4) + 2\sin(x_2)$. Top and bottom left: Support recovery plots with $N_t = 20000~(t=0), 8192~(t=4), 879~(t=14), 38~(t=28)$ with $y$-axis corresponding to the weight magnitude; Bottom right: Test MSE tracking of the pruning from $N_t=20000$ to $N_t=12$ with pruning iteration $t \in [0, 33]$.}
    \label{fig:support}
\end{figure}

\textbf{Benefits of Iterative Magnitude Pruning Compared to Random Pruning.} We explore the role of IMP in sparse random feature models by showing the test MSE curves for approximating functions $\{f_2(\vx), f_5(\vx), f_7(\vx)\}$ as defined above, using different number of features $N_t$ (see Figure \ref{fig:role}). We train and evaluate models with SHRIMP, minimal $\ell_2$- and $\ell_1$-norm (SRFE-S \cite{hashemi2021generalization}) estimators with the same $\{N_t\}$ set. Figure \ref{fig:role} illustrates the role and benefit of IMP in finding sparse winning subnetworks. SHRIMP (in blue) has a similar computational cost compared to plain min-$\ell_2$ (in orange), the only difference being in sorting and comparing the absolute weights; at the same time, SHRIMP achieves better test error with a sparser resulting model (i.e., lower $N_t$). The resulting sparse subnetwork behaves better than even the overparameterized solution of plain min $\ell_2$ in the middle plot of $f_5(\vx)$, which also shows the double descent curve. SRFE-S is inefficient due to the computation of $\ell_1$ minimization and is comparably flatter than the other two models, which indicates that it does not benefit from a smaller $N_t$ solution. Hence, the pruning by IMP is efficient and obtains sparser and better subnetworks than the models obtained by $\ell_1$ and $\ell_2$ regularization.
\begin{figure*}[ht]
    \centering
     \includegraphics[width=.32\linewidth]{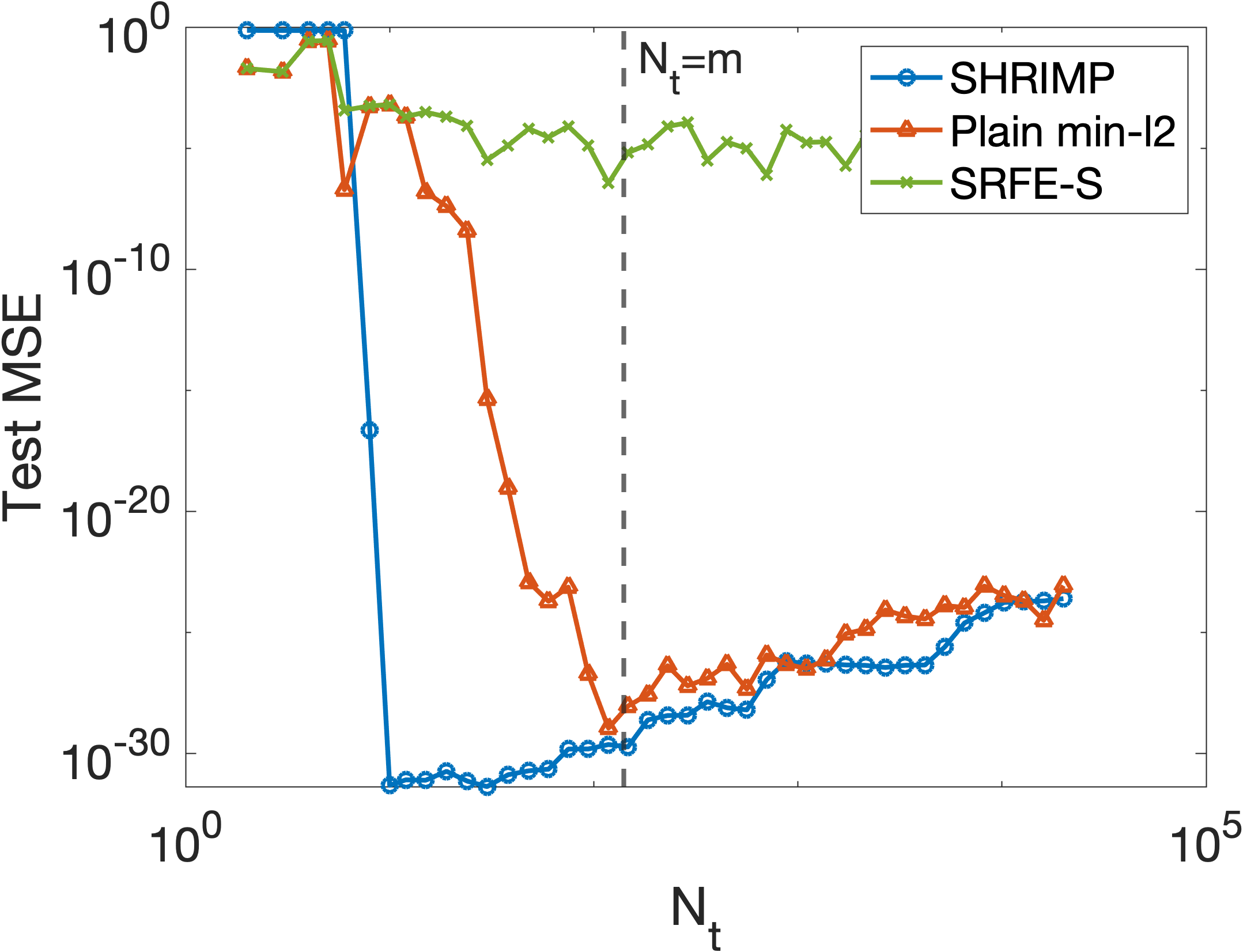}
    \includegraphics[width=.32\linewidth]{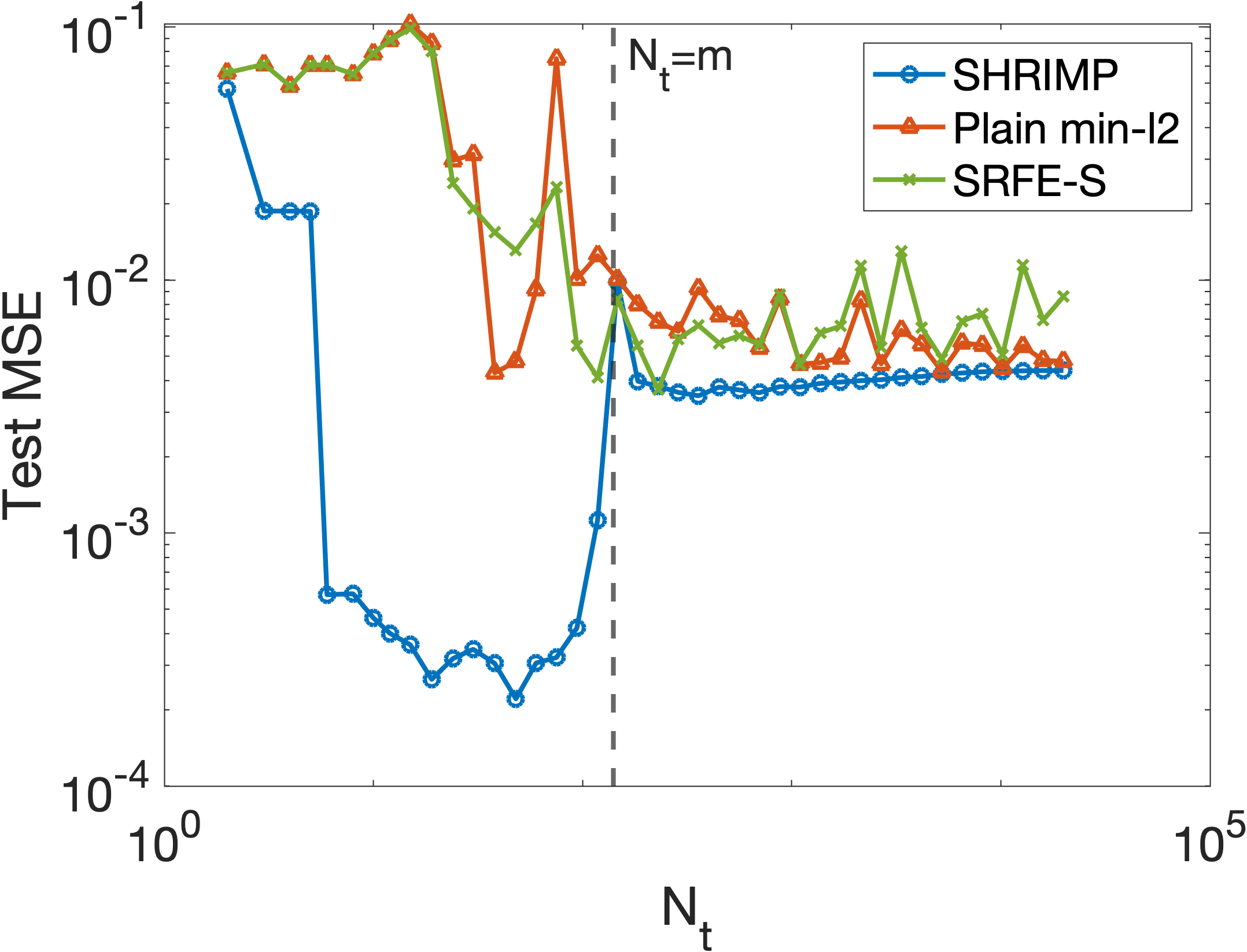}
    \includegraphics[width=.32\linewidth]{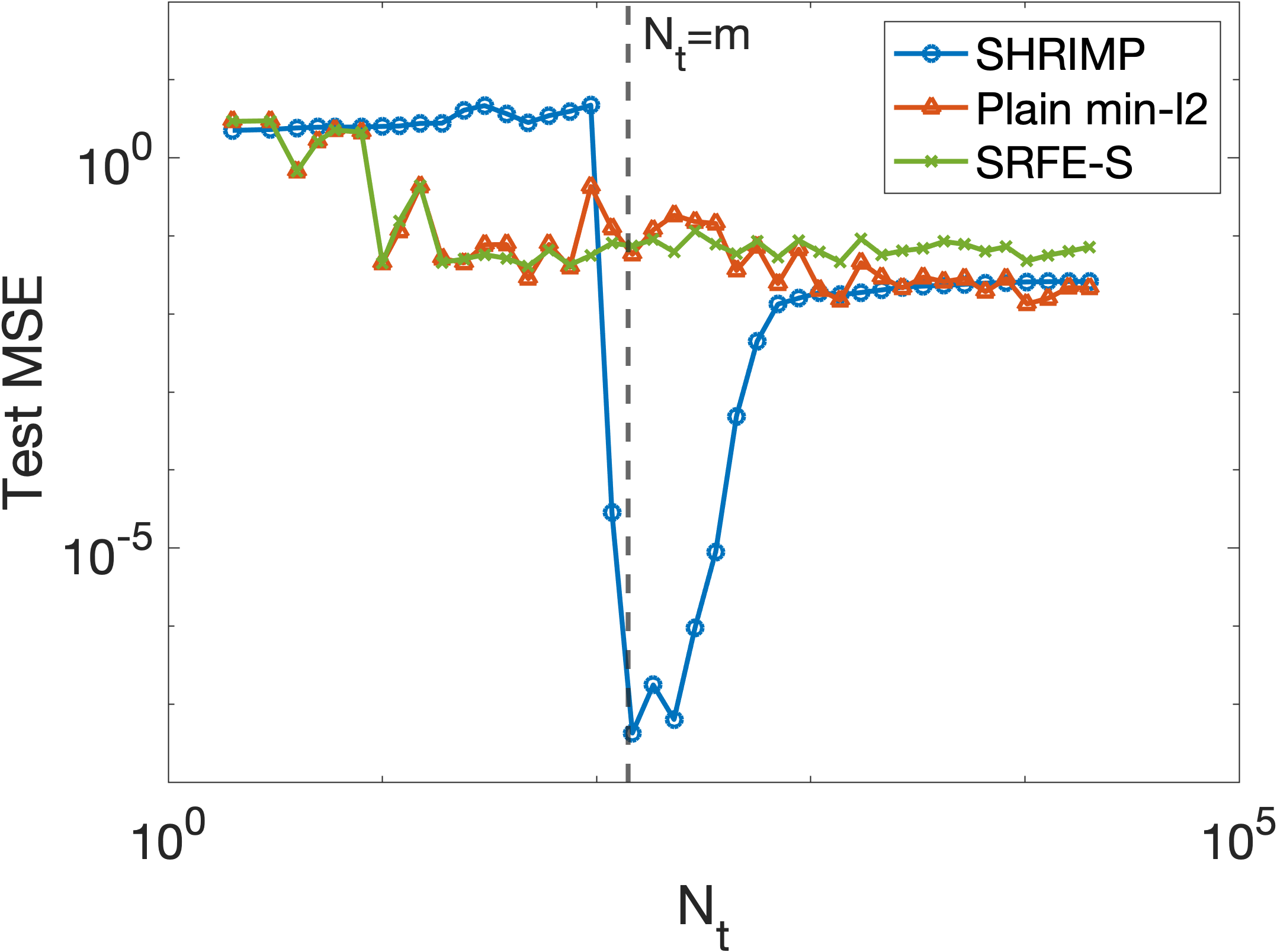}
    \caption{Test MSE of sparse random feature models obtained by SHRIMP, min $\ell_2$-norm estimation, min $\ell_1$-norm estimation (SRFE-S). From left to right: $f_2(\vx), f_5(\vx), f_7(\vx)$.}
    \label{fig:role}
\end{figure*}

\textbf{Spectrum of SHRIMP pruning compared to Random Pruning.} Figure \ref{exp:spectrum_min_max} shows the maximal and minimal eigenvalues
% \footnote{See the plot of minimum eigenvalue in the  appendix.} 
of $A_SA_S^\top /N_t$ throughout pruning for the function $f_7(\vx)=\cos(x_1)x_3+x_2^2x_4+\sum_{j=3}^d x_j$ (Note that we observe similar spectrum patterns for other kinds of functions as well). We notice that for all methods excluding SHRIMP use variance $1/q$---which results in the best generalization error over all cases---these values are essentially constant (up to numerical instability for small $N_t$).  The case for small variance is predicted by the random features approximation of \citet{rahimi_random_2008}, as the kernel approximation is good, while the case for high variance is predicted by \cite{hashemi2021generalization}, where mutual coherence is low.  However, SHRIMP with low variance has a decreasing maximum eigenvalue throughout the pruning process, providing some explanation for the good performance of SHRIMP (with Theorem \ref{thm:refine}); it is important to both perform magnitude pruning and choose a proper variance, as SHRIMP is a two-stage procedure.

\begin{figure}[ht]
    \centering
    \includegraphics[width=.49\linewidth]{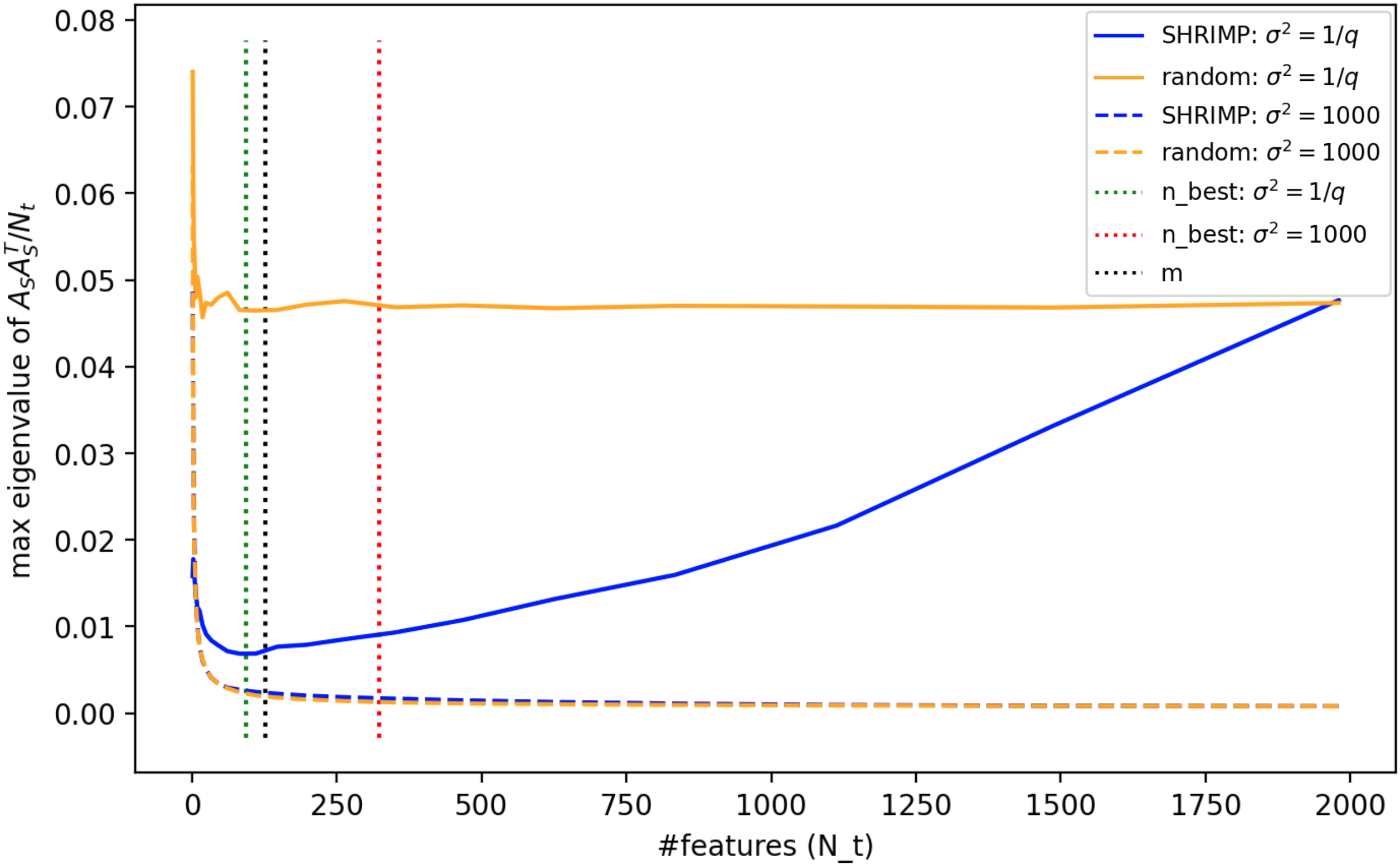}
     \includegraphics[width=.498\linewidth]{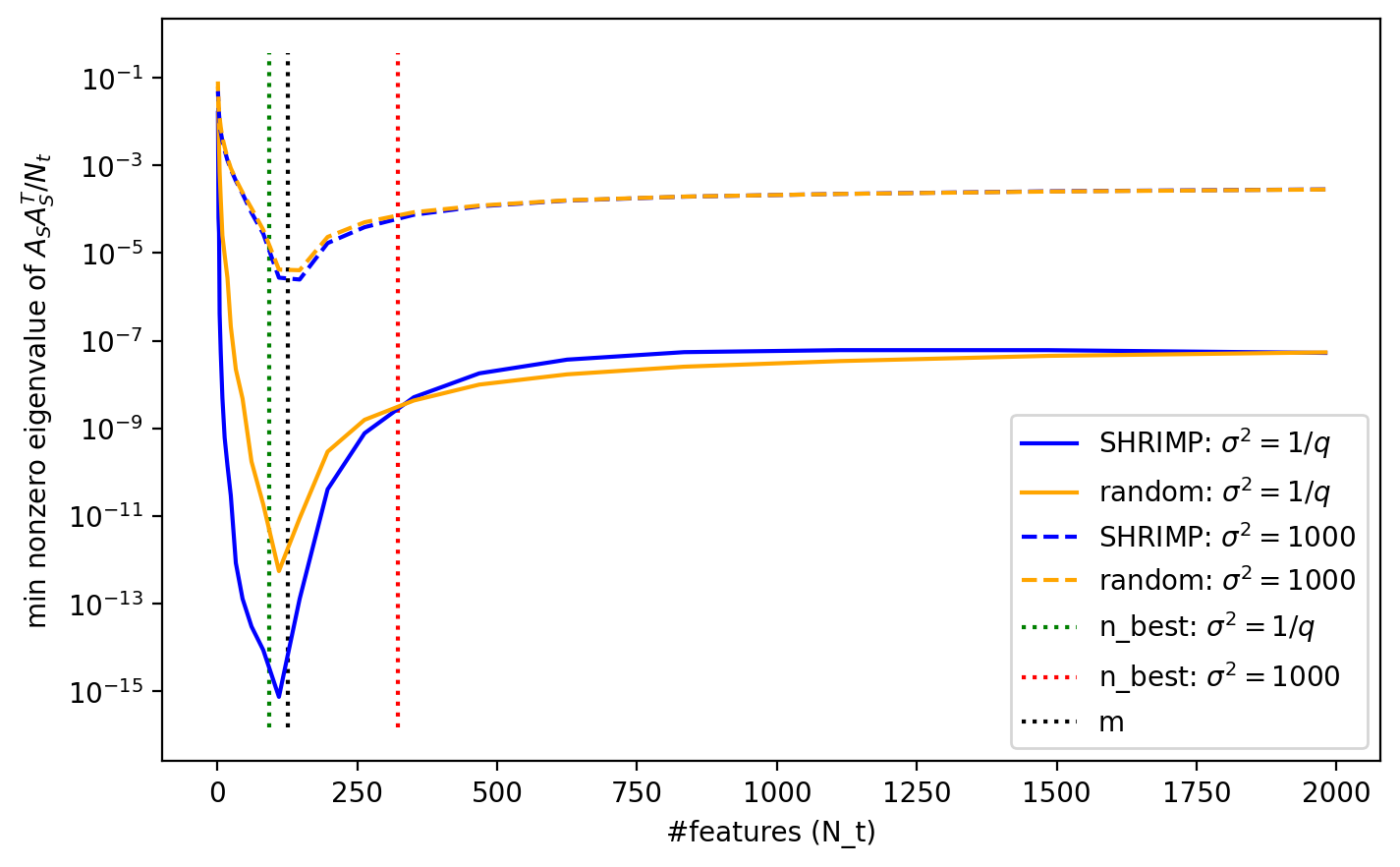}
    \caption{Maximal and minimal eigenvalue of $A_SA_S^\top/N_t$ ($A_S^\top A_S/N_t$ in the underparameterized setting) with weight vectors $\vw \sim \gN(\mathbf{0}, \sigma^2 \mI)$ for  $f_{a_1}(\vx)$ (i.e., $f_7(\vx)$). Blue: SHRIMP; Orange: Random pruning. Test MSE for low variance $\sigma^2 = 1/q$ (solid): SHRIMP(1.19e-05); Random pruning (0.023). Test MSE for high variance $\sigma^2 = 1000$ (dashed): SHRIMP (4.52); Random pruning (4.07).}
    \label{exp:spectrum_min_max}
\end{figure}
\section{Theoretical Analysis}\label{sec:theory} 
In this section, we first provide Theorem \ref{thm:refine}, improving the analysis of the generalization error for thresholded Basis Pursuit from  \cite{hashemi2021generalization}.  Thresholded Basis Pursuit performs basis pursuit followed by a pruning step, keeping only the top $s$ entries of the resulting coefficient vector.  Our analysis refines the result of \cite{hashemi2021generalization} by exposing the role of the maximum singular value of $\mA$ in the resulting generalization bound.  Moreover, we remove the explicit dependence on the number of features $N$, demonstrating that a smaller maximum singular value indicates better generalization. For the proofs of all statements in this section, we refer the reader to the appendix. 

For sake of comparing with SRFE-S in \cite{hashemi2021generalization}, we restate SRFE-S according to our two-stage paradigm in Definition \ref{def:srfe}.
\begin{definition}[Sparse Random Feature Expansion with Sparse Feature Weights (SRFE-S)] With the same input as Algorithm \ref{imp} and a statbility parameter $\eta$, SRFE-S constructs a random feature matrix $\mA$ following Stage I in Algorithm \ref{imp} and solves 
\begin{align}
    \vcsharp = \arg \min_{\vc} \|\vc\|_1 \quad s.t. \quad \|\mA \vc -\vy\| \leq \eta \sqrt{m}
\end{align}
in Stage II. The resulting pruned estimator $\vcsharp|_{\ssharp} $ by SRFE-S keeps the $s$ largest (in magnitude) coefficients on the support set $\ssharp$ and sets $\vcsharp_j = 0, \forall j \in [N] \setminus \ssharp$.
\label{def:srfe}
\end{definition}

\begin{theorem}[\textbf{Generalization Bounds for Thresholded Basis Pursuit}]\label{thm:refine} For a bounded $\rho$-norm function $f$ as defined in Def. \ref{def:function_class1}, construct the dictionary matrix $\mA$ from Stage I in Algorithm \ref{imp} with $m$ samples $\{ (\vx_k, y_k)\}_{k=1}^m$, where $\vx_k \sim \gN(\mathbf{0}, \gamma^2\mI_d)$, $y_k = f(\vx_k)+e_k$ with $|e_k| \leq 2\nu$ or $e_k \sim \gN(0, \nu^2)$, and $\vomega \sim \gN(\mathbf{0}, \sigma^2\mI_d)$, and $\phi(\vx;\vomega)=\exp(i\langle \vx, \vomega \rangle)$. Assume the conditions the following conditions:  $\gamma^2\sigma^2 \geq \frac{1}{2}\left( \left( \frac{\sqrt{41}(2s-1)}{2} \right)^\frac{2}{d} -1 \right)$, number of features $N=\frac{4}{\epsilon^2}\p{1 + 4\gamma\sigma d \sqrt{1+\sqrt{\frac{12}{d}\log \frac{m}{\delta}} } + \sqrt{\frac{1}{2} \log \frac{1}{\delta} } } ^2 $, and number of measurements $m \geq 4 (2\gamma^2 \sigma^2 +1)^d \log \frac{N^2}{\delta}$. Suppose $\fsp$ is estimated by BP (i.e., min $\ell_1$-norm estimator) with $\eta = \min \{ \eta', \teta\}$, where $\eta' = \sqrt{2(\epsilon^2 \norm{f}_\rho^2 + 4\nu^2)} + \sqrt{\frac{\lambda_{\max}(\mA^{*} \mA)}{m}} \kappa_{s,2} (\vcstar)$ and $\teta = 2\sqrt{\epsilon^2\|f\|_{\rho}^2  + 2
\nu^2 + \kappa^2_{s,1}(\vcstar)}$ with approximation error $\epsilon$ and  $\kappa_{s,p}(\vc) := \min\{\norm{\vc-\vz}_{\ell_p}: \vz~ \text{is s-sparse} \}$. Apply an additional pruned step with sparsity $s$, then with probability at least $1-5\delta$, the generalization error is bounded by 
\begin{align}
\begin{split}
    \sqrt{ \int_{\sR^d} \abs{ \fsp (\vx) - \fstar(\vx) } ^2 d\mu }   
    \leq \p{ \frac{8}{m} \log \p{ \frac{1}{\delta} } } ^{\frac{1}{4}}  \p{  2s \| \vcstars - \vcp \|_2^2  + (\kappa_{s, 1}(\vcstar) )^2 } ^{\frac{1}{2}}  + 2 \norm{ \vcp - \vcstars }_2 + \kappa_{s, 1}(\vcstar),
\end{split}
\end{align}
where
\begin{align*}
\| \vcstars & - \vcp \|_2   \leq 
     C \min \bigg\{ 2\sqrt{\epsilon^2\|f\|_{\rho}^2  + 2
        \nu^2 + \kappa^2_{s,1}(\vcstar)},  
     \sqrt{2(\epsilon^2 \norm{f}_\rho^2 + 4\nu^2)} + \sqrt{\frac{\lambda_{\max}(\mA^\ast \mA)}{m}} \epsilon \|f\|_{\rho}
    \bigg\},
\end{align*}
and $\kappa_{s,1}(\vcstar) $ is bounded by $\frac{N-s}{N}\|f\|_{\rho}$.

\end{theorem}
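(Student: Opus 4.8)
The plan is to pass from function space to coefficient space, exploiting that the feature $\phi(\vx;\vomega)=\exp(i\ip{\vx,\vomega})$ satisfies $\abs{\phi}\le 1$, so that every expansion $f_{\vc}(\vx):=\sum_j c_j\phi(\vx;\vomega_j)$ obeys $\norm{f_{\vc}}_{L^2(\mu)}\le\norm{f_{\vc}}_{L^\infty}\le\norm{\vc}_1$. I write $\fstar=f_{\vcstar}$ for the expansion with the Monte--Carlo coefficients $\vcstar$ (which, by the Rahimi--Recht estimate and the stated choice of $N$, approximates the target $f$ to within $\epsilon\norm{f}_\rho$ in $L^2(\mu)$) and $\fsp=f_{\vcp}$, so the quantity to bound is $\norm{f_{\vw}}_{L^2(\mu)}$ with $\vw:=\vcp-\vcstar$. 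The strategy is: first use a concentration inequality to pass from the population $L^2(\mu)$ norm to the empirical norm $\tfrac1m\norm{\mA\vw}_2$, and then control the empirical norm by the restricted isometry property together with the boundedness of $\phi$.

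For the concentration step, since $\abs{f_{\vw}(\vx)}^2\le\norm{\vw}_1^2$ is bounded, Hoeffding's inequality for an average of $m$ i.i.d.\ terms gives, on an event of probability $1-\delta$,
\[
\norm{f_{\vw}}_{L^2(\mu)}^2\le \tfrac1m\norm{\mA\vw}_2^2+\norm{\vw}_1^2\sqrt{\tfrac{2}{m}\log\tfrac1\delta}.
\]
Taking square roots and using $\sqrt{a+b}\le\sqrt a+\sqrt b$ splits this into a concentration term $\norm{\vw}_1\bigl(\tfrac2m\log\tfrac1\delta\bigr)^{1/4}$ and an empirical term $\tfrac1{\sqrt m}\norm{\mA\vw}_2$. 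For the concentration term I bound $\norm{\vw}_1\le\sqrt{2s}\,\norm{\vv}_2+\kappa_{s,1}(\vcstar)$, where $\vv:=\vcp-\vcstars$ is $2s$-sparse and $\vcstar-\vcstars$ is the best $s$-term tail; collapsing the two summands with $(a+b)\le\sqrt2\sqrt{a^2+b^2}$ and using $\sqrt2\,\bigl(\tfrac2m\log\tfrac1\delta\bigr)^{1/4}=\bigl(\tfrac8m\log\tfrac1\delta\bigr)^{1/4}$ produces exactly the first summand of the claim, \emph{free of any $N$ or $\log N$ factor}. For the empirical term I split $\vw=\vv+(\vcstars-\vcstar)$: the restricted isometry property of $\tfrac1{\sqrt m}\mA$ on $2s$-sparse vectors gives $\tfrac1{\sqrt m}\norm{\mA\vv}_2\le 2\norm{\vv}_2$ (the $2\norm{\vcp-\vcstars}_2$ term), while $\abs{\phi}\le1$ gives $\tfrac1{\sqrt m}\norm{\mA(\vcstar-\vcstars)}_2\le\norm{\vcstar-\vcstars}_1=\kappa_{s,1}(\vcstar)$ (the final term). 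Summing reproduces the displayed bound.

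It remains to establish the stated control of $\norm{\vcstars-\vcp}_2$ and of $\kappa_{s,1}(\vcstar)$. The latter is elementary: $\abs{c^\star_j}\le\norm{f}_\rho/N$, so the best $s$-term $\ell_1$ tail is at most $(N-s)\norm{f}_\rho/N$. For the former I would invoke robust recovery for basis pursuit under the restricted isometry property, $\norm{\vcsharp-\vcstar}_2\lesssim \kappa_{s,1}(\vcstar)/\sqrt s+(\text{feasibility slack})$, with the hard-thresholding step relating $\vcp$ to $\vcstars$ costing only constants. The two arguments of the $\min$ arise from the two admissible constraint levels $\eta=\eta'$ and $\eta=\teta$: each requires $\vcstar$ to be feasible, i.e.\ $\norm{\mA\vcstar-\vy}\le\eta\sqrt m$, which I verify by splitting $\mA\vcstar-\vy$ into the feature-approximation error (bounded by $\epsilon\norm{f}_\rho$ via the Rahimi--Recht estimate) and the measurement noise (bounded using $\abs{e_k}\le2\nu$ or the Gaussian tail). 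The refined factor $\sqrt{\lambda_{\max}(\mA^\ast\mA)/m}$ enters precisely here, when the non-sparse tail $\mA(\vcstar-\vcstars)$ in the feasibility estimate is bounded by the top singular value of $\mA$ rather than crudely by $N$.

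The \emph{main obstacle} is the concentration step: the difference vector $\vw=\vcp-\vcstar$ depends on the very samples $\{\vx_k\}$ that appear in the empirical average $\tfrac1m\norm{\mA\vw}_2^2$, so a single application of Hoeffding to a fixed $\vw$ is not immediately licensed. The crux of obtaining an $N$-free bound is to handle this dependence \emph{without} the usual union bound over the $\binom{N}{2s}$ possible supports, which would reintroduce a $\log N$ and yield only a covering-number Hoeffding constant. I would argue that, conditionally on the weight draw and on the high-probability restricted-isometry and recovery events (already needed for the second and third terms), the deviation for the realized direction is controlled with the single-vector constant $\sqrt{\tfrac2m\log\tfrac1\delta}$, absorbing this event into the overall $1-5\delta$ budget. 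Verifying that the clean single-vector constant survives the data-dependence of $\vw$, rather than incurring a $\log N$ penalty, is the delicate point that distinguishes this refined bound from \cite{hashemi2021generalization}.
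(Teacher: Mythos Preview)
Your proposal follows the paper's proof essentially line by line: the decomposition $\vcp-\vcstar=(\vcp-\vcstars)+(\vcstars-\vcstar)$ with the first piece $2s$-sparse, the pointwise bound $\abs{\fsp(\vz)-\fstar(\vz)}^2\le 4s\norm{\vcp-\vcstars}_2^2+2\kappa_{s,1}(\vcstar)^2$ via Cauchy--Schwarz and $\abs{\phi}\le1$, a bounded-difference concentration inequality (the paper uses McDiarmid; your Hoeffding for i.i.d.\ bounded summands is equivalent here) to pass from population to empirical, the RIP of $\tfrac{1}{\sqrt m}\mA$ on the $2s$-sparse piece of the empirical term, and BP stability plus hard thresholding for $\norm{\vcp-\vcstars}_2$. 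One minor slip: the paper checks feasibility of $\vcstars$, not $\vcstar$, in the BP constraint---this is why $\kappa_{s,1}(\vcstars)=0$ kills the first term in the BP stability estimate, and why the tail $\mA(\vcstar-\vcstars)$ shows up in the feasibility bound for $\eta'$. Your closing sentence about the tail indicates you effectively mean this too, despite writing ``$\vcstar$ feasible'' a few lines earlier.

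Your ``main obstacle''---the data-dependence of $\vw=\vcp-\vcstar$ on the very samples used in the empirical average---is a legitimate concern, but the paper does not address it: it simply applies McDiarmid with the bounded-difference constant coming from the uniform pointwise bound, treating $\vcp$ and $\vcstars$ as fixed when one sample is perturbed (this follows the template of \cite{hashemi2021generalization}). So you are being more scrupulous than the paper itself. Your specific worry, though, that resolving this would force a union bound over $\binom{N}{2s}$ supports and thereby reintroduce a $\log N$ factor, is misplaced: the improvement from $N$ to $s$ over \cite{hashemi2021generalization} comes entirely from the pointwise bound (replacing a crude $\norm{\vw}_1$-type estimate by the $2s$-sparse Cauchy--Schwarz plus the explicit tail $\kappa_{s,1}$), not from any refinement of the concentration step. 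The concentration argument and its constant are unchanged.
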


Theorem \ref{thm:refine} connects the numerical results on the decaying maximum singular value of $\mA$, the empirical success of SHRIMP, and refined theory from a similar setting (finding a sparse coefficient vector in a low-order random features model that has good generalization error).  Moreover, in our new analysis, the role of the norm of the smallest entries of the coefficient vector is explicitly revealed: the smaller the norm of the vector of small entries, the better the implied generalization.  This result is directly connected to \citet{belkin2020two}.  Thus, we connect the $\ell_1$-based methods of previous work with $\ell_2$-based methods through this new analysis.

\begin{corollary}[\textbf{Generalization Bounds for Order-$q$ Functions}] \label{cor:sparse}  Fix $\epsilon > 0$. For an order-$q$ function as in Def. \ref{def:order_q_func} with at most $K$ terms, and fix the sparsity $s = n K$ with $N=n \binom{d}{q}$ and $K \ll \binom{d}{q}$. Assume the following conditions: $\gamma^2\sigma^2 \geq \frac{1}{2}\left( \left( \frac{\sqrt{41}(2s-1)}{2} \right)^\frac{2}{q} -1 \right)$, number of features $N=\frac{4}{\epsilon^2}\p{1 + 4\gamma\sigma d \sqrt{1+\sqrt{\frac{12}{d}\log \frac{m}{\delta}} } + \sqrt{\frac{q}{2} \log \frac{d}{\delta} } } ^2 $, and number of measurements $m \geq 4 (2\gamma^2 \sigma^2 +1)^{\max\{2q-d, 0\}}  (\gamma^2 \sigma^2 +1)^{\min\{2q, 2d-2q\}} \log \frac{N^2}{\delta}$. Then the generalization error corresponding to the thresholded $\ell_1$ estimator with the $s$ largest elements (in magnitude) is bounded by  $ \gO \p{ \p{ 1 + C's^{\frac{1}{2}}m^{-\frac{1}{4}} \log^{\frac{1}{4}}(\frac{1}{\delta}) } \sqrt{\epsilon^2 \binom{d}{q} \tvert{f}^2 + E^2 } } $ 
with probability at least $1-5\delta$, where $\tvert{f} := \frac{1}{K}\sum_{j=1}^K\|g_j\|_{\rho}$.
\end{corollary}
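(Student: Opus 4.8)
The plan is to obtain Corollary~\ref{cor:sparse} as a direct specialization of Theorem~\ref{thm:refine} to the structured class of order-$q$ functions built from the two-stage feature construction of Def.~\ref{def:sparse_weights}. Three ingredients drive the argument: (i) an order-$q$ function of at most $K$ terms admits an \emph{exactly} $s=nK$-sparse representing coefficient vector, so the tail quantities $\kappa_{s,1}(\vcstar)$ and $\kappa_{s,2}(\vcstar)$ in Theorem~\ref{thm:refine} vanish; (ii) the approximation-error term $\epsilon^2\|f\|_\rho^2$ of Theorem~\ref{thm:refine} sharpens to $\epsilon^2\binom{d}{q}\tvert{f}^2$ for this class; and (iii) the three stated hypotheses are precisely the $q$-dimensional analogues of the hypotheses of Theorem~\ref{thm:refine}, restricted to the active feature blocks.

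First I would fix an order-$q$ decomposition $f=\frac1K\sum_{j=1}^K g_j(\vx|_{\mathcal S_j})$ and, treating each $g_j$ as a bounded $\rho$-norm function over $\sR^q$ (Def.~\ref{def:function_class1}), write its integral representation over weights supported on $\mathcal S_j$. Discretizing each component using exactly the $n$ i.i.d.\ draws assigned to $\mathcal S_j$ in Def.~\ref{def:sparse_weights} produces a coefficient vector $\vcstar$ whose support lies inside the $K$ active blocks, hence $\|\vcstar\|_0\le nK=s$. This is the key structural fact: because $\vcstar$ is genuinely $s$-sparse we have $\vcstar=\vcstars$ and $\kappa_{s,1}(\vcstar)=\kappa_{s,2}(\vcstar)=0$, so the bound of Theorem~\ref{thm:refine} collapses to the two $\|\vcstars-\vcp\|_2$ terms with the stated $\big(\tfrac{8}{m}\log\tfrac1\delta\big)^{1/4}(2s)^{1/2}$ prefactor and the additive constant $2$.

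Next I would propagate the approximation error through the block structure. Since the $N=n\binom{d}{q}$ features are split evenly among the $\binom{d}{q}$ candidate supports, each active component $g_j$ is approximated by only $n=N/\binom{d}{q}$ random features, so its Monte-Carlo error scales like $\|g_j\|_\rho/\sqrt n$, which equals $\epsilon\sqrt{\binom{d}{q}}\,\|g_j\|_\rho$ once $\epsilon$ is matched to $N$ through the stated feature formula. Averaging over the $K$ blocks with the $1/K$ normalization and using $\tvert{f}=\frac1K\sum_j\|g_j\|_\rho$ replaces $\epsilon^2\|f\|_\rho^2$ by $\epsilon^2\binom{d}{q}\tvert{f}^2$ everywhere the approximation error enters. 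Substituting this together with $\kappa_{s,1}(\vcstar)=0$ into the minimum defining $\|\vcstars-\vcp\|_2$ gives a bound of order $\sqrt{\epsilon^2\binom{d}{q}\tvert{f}^2+E^2}$ with $E=2\nu$, and collecting the prefactors yields $\gO\p{\p{1+C's^{1/2}m^{-1/4}\log^{1/4}(1/\delta)}\sqrt{\epsilon^2\binom{d}{q}\tvert{f}^2+E^2}}$, as claimed.

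The hardest part, and where I would spend the most care, is verifying the sampling hypotheses as the $q$-sparse analogues of Theorem~\ref{thm:refine} — in particular the measurement bound $m\ge 4(2\gamma^2\sigma^2+1)^{\max\{2q-d,0\}}(\gamma^2\sigma^2+1)^{\min\{2q,2d-2q\}}\log\frac{N^2}{\delta}$, which replaces the crude $(2\gamma^2\sigma^2+1)^d$ of the theorem. This forces a recomputation of the coherence of the $q$-sparse dictionary: the expected squared inner product between two feature columns $\phi(\vx;\vomega_i)$, $\phi(\vx;\vomega_j)$ with $\vx\sim\gN(\v0,\gamma^2\mI)$, $\vomega\sim\gN(\v0,\sigma^2\mI)$ depends on whether the supports $\mathcal S_i,\mathcal S_j$ coincide, are disjoint, or partially overlap, and a case analysis on $|\mathcal S_i\cap\mathcal S_j|$ produces the two exponents through the forced-overlap size $\max\{2q-d,0\}$ and the free-overlap range $\min\{2q,2d-2q\}$. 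Likewise, the condition $\gamma^2\sigma^2\ge\frac12\p{\p{\tfrac{\sqrt{41}(2s-1)}{2}}^{2/q}-1}$ must be checked to be the $q$-dimensional form of the coherence/RIP condition that underlies the basis-pursuit recovery bound inside Theorem~\ref{thm:refine}. Once these two conditions are confirmed to imply the hypotheses of Theorem~\ref{thm:refine} on the active blocks, the corollary follows from the substitutions above.
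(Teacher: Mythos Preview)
Your proposal is correct and follows essentially the same route as the paper: use the $s$-sparsity of $\vcstar$ to kill the $\kappa_{s,p}$ tails, substitute the block-structured approximation error $\epsilon^2\binom{d}{q}\tvert{f}^2$ for $\epsilon^2\|f\|_\rho^2$, and collect prefactors. One minor difference: the paper does not re-derive the $q$-sparse coherence/RIP conditions inside the corollary proof---it takes the stated hypotheses as inherited from \cite{hashemi2021generalization} and simply plugs in $\eta=\sqrt{2\epsilon^2\binom{d}{q}\tvert{f}^2+2E^2}$---so the overlap case analysis you describe, while correct in spirit, is not needed here; the ``hardest part'' you flag is in fact offloaded to the prior reference.
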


Note that Corollary \ref{cor:sparse} improves the generalization bound of SRFE-S (Def. \ref{def:srfe}) in \cite{hashemi2021generalization} from depending on the number of features $N$ to the sparsity level $s$ in the $1 + C's^{\frac{1}{2}}m^{-\frac{1}{4}} \log^{\frac{1}{4}}(\frac{1}{\delta})$ term.

Then we shed light on the maximal and minimal eigenvalues of the Gram matrices of SHRIMP observed in Figures \ref{exp:spectrum_min_max}.  The bounds in Proposition \ref{prop:spectrum} below are obtained using techniques inspired by \cite{chen2021conditioning}.

\begin{proposition}[\textbf{Bounds on Eigenvalues of Gram Matrix}]\label{prop:spectrum} 
Consider data $\{\vx_1,\dots, \vx_m\}$ i.i.d. drawn from $\mathcal{N}(0,\gamma^2 \mI_d)$, weights $\{\vomega_1, \dots, \vomega_N\}$ i.i.d. drawn from $\mathcal{N}(0,\sigma^2 \mI_q)$, and the Fourier feature matrix $a_{j,k} = \phi(\vx_j,\vomega_k)$, where $\phi(\vx,\vomega) = \exp(i \langle \vx ,\vomega \rangle)$. Fix the feature sparsity $q \leq d$ as in Def. \ref{def:order_q_func} and consider the regime $m\leq N$. Let $\lambda_k(\frac{1}{N} \mA \mA^\ast)$ be the $k$th eigenvalue of the scaled Gram matrix. Then the expectation of the maximum eigenvalue $\lambda_1$ and the minimum eigenvalue $\lambda_m$ of the matrix $\frac{1}{N}\mA\mA^\ast$ satisfy
\begin{align}
     \mathbb{E}\lambda_1 & \geq 2 - \frac{(N-1)m}{N^2}+ \frac{(N-1)(m^2-m)}{N^2}\left(4\gamma^2\sigma^2+1\right)^{-\frac{q}{4}}, \\
    \mathbb{E} \lambda_{m}
   &\leq 
   \frac{c-1}{c}+ \frac{1}{m} +\left( \frac{c-1}{c} m + 1 \right) \left(4\gamma^2\sigma^2+1\right)^{-\frac{q}{4}},
\end{align}
where $c=N/m$.
\end{proposition}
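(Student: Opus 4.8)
The plan is to reduce both spectral estimates to a single second-moment (Frobenius-type) computation on a centered matrix. Since every feature has unit modulus, $\p{\frac1N\mA\mA^\ast}_{jj}=\frac1N\sum_{l}\abs{a_{jl}}^2=1$, so the scaled Gram matrix splits as $\mM:=\frac1N\mA\mA^\ast=\mI_m+\mE$, where $\mE$ is Hermitian with zero diagonal, hence $\tr{\mE}=0$. This shifts the spectrum by one, giving $\lambda_1(\mM)=1+\lambda_1(\mE)$ and $\lambda_m(\mM)=1+\lambda_m(\mE)$, so it suffices to control the extreme eigenvalues of the centered matrix $\mE$, whose size is governed entirely by the off-diagonal entries $M_{jk}$.

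The heart of the argument is the expected squared coherence. Conditioning on the data and writing $\vu=\vx_j-\vx_k$ restricted to the $q$ active coordinates, I would use independence of the weights together with the Gaussian characteristic function $\ex_\omega e^{i\inner{\vu,\vomega}}=e^{-\frac{\sigma^2}{2}\norms{\vu}}$ to obtain, for $j\neq k$, $\ex_\omega\abs{M_{jk}}^2=\frac1N+\frac{N-1}{N}e^{-\sigma^2\norms{\vu}}$; the doubled exponent arises from multiplying a characteristic function by its conjugate for the $l\neq l'$ cross terms. Averaging over the data, where each coordinate of $\vu$ is $\gN(0,2\gamma^2)$, the Gaussian integral yields $\ex_x e^{-\sigma^2\norms{\vu}}=\p{4\gamma^2\sigma^2+1}^{-q/2}$, whence $\ex\abs{M_{jk}}^2=\frac1N+\frac{N-1}{N}\p{4\gamma^2\sigma^2+1}^{-q/2}$ and $\ex\tr{\mE^2}=m(m-1)\,\ex\abs{M_{jk}}^2$. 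This is exactly where the $4\gamma^2\sigma^2$ factor enters the bound. I would then convert this moment into the extreme eigenvalues using trace-moment (``inner'') inequalities in the spirit of \cite{chen2021conditioning}: for a zero-trace Hermitian matrix the extremal eigenvalue configuration (one eigenvalue balanced against the remaining $m-1$) gives $\lambda_1(\mE)\ge\sqrt{\tr{\mE^2}/(m(m-1))}$ and $\lambda_m(\mE)\le-\sqrt{\tr{\mE^2}/(m(m-1))}$. Substituting and using $\lambda(\mM)=1+\lambda(\mE)$ produces a lower bound on $\ex\lambda_1$ and an upper bound on $\ex\lambda_m$, and crucially the square root turns the exponent $-q/2$ into the stated $-q/4$. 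For the minimum eigenvalue I would additionally pass to the $N\times N$ dual matrix $\frac1N\mA^\ast\mA$, which shares the nonzero spectrum of $\mM$ but has diagonal entries $m/N=1/c$, in order to surface the aspect ratio $c=N/m$ appearing in the statement.

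The step I expect to be the main obstacle is the interchange of expectation with the square-root eigenvalue map. The realized bounds read $\lambda_1(\mM)\ge 1+\sqrt{\tr{\mE^2}/(m(m-1))}$ and $\lambda_m(\mM)\le 1-\sqrt{\tr{\mE^2}/(m(m-1))}$, and because $x\mapsto\sqrt{x}$ is concave, naive Jensen points the wrong way for both a lower bound on $\ex\lambda_1$ and an upper bound on $\ex\lambda_m$. I plan to resolve this by combining convexity of the operator norm (so that $\ex_\omega\norm{\mM}\ge\norm{\ex_\omega\mM}$ conditional on the data) with concentration of $\tr{\mE^2}$ about its mean $\ex\tr{\mE^2}$, which lets me evaluate the square root at the mean up to controllable lower-order error; tracking the slack from this concentration step is precisely what I expect to generate the residual $\frac1N$, $\frac1m$, and constant contributions that appear alongside the $\p{4\gamma^2\sigma^2+1}^{-q/4}$ term in the final inequalities.
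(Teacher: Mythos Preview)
Your trace--moment route is genuinely different from the paper's. The paper does not center the Gram matrix or invoke Wolkowicz--Styan; it bounds the Rayleigh quotient at explicit test vectors. For $\lambda_m$ the key device is to pick a unit $\vz\in\mathbb{C}^m$ orthogonal to $\mathrm{span}\{X_1,\ldots,X_{m-1}\}$, where $X_\ell$ is the $\ell$th row of $\mA^\ast$; then only the terms with $\ell\ge m$ survive in $\frac1N\sum_\ell |X_\ell\vz|^2$, and since $\vz$ is measurable with respect to $\vomega_1,\ldots,\vomega_{m-1}$ alone it is \emph{independent} of the surviving $\vomega_\ell$. The expectation over those $\vomega_\ell$ therefore passes through without obstruction, a single Cauchy--Schwarz on the off-diagonal sum $\sum_{j\ne k}\bar z_jz_k\,e^{-\sigma^2\|\vx_j-\vx_k\|^2/2}$ introduces the square root, and then Jensen over the data (concave function, upper bound) legitimately converts $-q/2$ into $-q/4$. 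For $\lambda_1$ the paper takes $\vz$ proportional to $X_1$. This test-vector construction is precisely what makes the concavity go the right way, whereas your plan forces you to lower-bound $\mathbb{E}\sqrt{\tr{\mE^2}}$ by something like $\sqrt{\mathbb{E}\tr{\mE^2}}$, which Jensen forbids; the concentration patch you sketch would have to be made fully quantitative and would in any case produce constants of a different shape than those stated.

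There is also a concrete gap in your route to $\lambda_m$. Passing to the dual $\frac1N\mA^\ast\mA$ does not help: that $N\times N$ matrix has rank at most $m<N$, so its smallest eigenvalue is identically zero and the Wolkowicz--Styan upper bound on $\lambda_N$ of the centered matrix is vacuous. What you actually need is the $m$th largest eigenvalue of the dual, which trace identities on the full $N\times N$ matrix do not isolate. The paper's orthogonal-complement test vector is exactly the device that singles out this eigenvalue while simultaneously manufacturing the independence required to take expectations cleanly; without an analogous construction your argument does not reach the stated bound on $\mathbb{E}\lambda_m$, and in particular the $\frac{c-1}{c}$ prefactor (which comes from the count $N-m+1$ of surviving summands) has no source in your scheme.
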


\begin{remark}
 Using Markov's inequality, for  $c\rightarrow 1^+$, i.e. $N=m$,  we have 
 \begin{align}
  \lambda_{m} \left(\frac{1}{N}A A^\ast\right) \leq N^{p} \left(4\gamma^2\sigma^2+1\right)^{-\frac{q}{4}} + N^{p-1}
  \end{align}
  with probability $1-N^{-p}$ for $0<p<1$. If $\gamma^2\sigma^2 = \mathcal{O}(1)$, then we observe the benefit of small $q$ (i.e., low-order interactions); if $q \rightarrow d$ (in the high dimensional setting), then the minimum eigenvalue becomes arbitrarily small while the maximum remains above 2, and thus the system is ill-conditioned. In particular, the conditioning is directly related to the size of $\left(4\gamma^2\sigma^2+1\right)^{-\frac{q}{4}}$.
\end{remark}

\textbf{Further Discussion.} We connect our results to $\ell_0$-based methods.  First we note the explicit connection to SINDy \citep{zhang2019convergence}, which algorithmically is similar to SHRIMP, except instead of pruning the smallest magnitude coefficients, it prunes the all entries smaller than some threshold.  However, their results are about coefficient recovery, and generalization bounds for $\ell_0$-based methods are sparse in the literature.  From Remark 2 in \citet{nikolova2013description}, the iterates of SHRIMP are \textit{each} local minimizers of an $\ell_0$-regularized problem, which gives insight to the behavior of SHRIMP; However, SHRIMP arrives in an adaptive and greedy nature, depending on the solution of the previous one. Further discussion on these topics is given in the appendix.

\section{Discussion}\label{sec:dis}
We propose a new method, \textbf{S}parser \textbf{R}andom Feature Models with \textbf{IMP}, to exploit low-order additive structure in a learning problem, which often occurs in many domains of interest.  In this method, we explicitly construct a sufficiently overparameterized sparse feature matrix in order to approximate a given underlying low-order function, and then prune coefficients by adaptively solving a min $\ell_2$-norm problem and applying iterative magnitude pruning. This can be seen as an instance of feature selection or neuron pruning in the neural network pruning literature. We test our method on both synthetic and real datasets: SHRIMP vastly exceeds other methods on synthetic data; and it is often better or at least competitive on real datasets. We illustrate the relationship between low-order structure and pruning, corresponding to weight and neuron pruning, respectively, and show the IMP has the greatest effect when combined with sparse feature models. Our analysis provides generalization bounds for thresholded BP and bounds on eigenvalues of Gram Matrix, which explains the benefits of our method. We hope to shed some light on the lottery ticket hypothesis in a simple model, similar to how regression is once again being studied in the context of deep learning theory; our method corresponds to certain pruning methods in two-layer neural networks.

More robust generalization bounds can be given to our SHRIMP model---for example, in the context of random features regression, studying the eigenspectrum of a pruned sub-Gram matrix throughout our algorithm is a possible extension of our work. Another possible future direction is to adaptively discover the low-order structure as we go, instead of fixing the parameter $q$ in advance.  This results in a setting more closely tied to practical pruning and allows for greater flexibility (e.g., if the underlying function is a sum of functions of various orders), and may shed light on what a pruned network is learning.

\section*{Acknowledgments}
B. Shi, R. Ward, and Y. Xie were supported in part by AFOSR MURI FA9550-19-1-0005, NSF DMS 1952735, NSF HDR-1934932, and NSF 2019844. H. Schaeffer was supported in part by AFOSR MURI FA9550-21-1-0084 and NSF DMS-1752116. 

\bibliography{sparse}
\onecolumn
\appendix
\section*{Appendix}
The appendix is organized as follows:
\begin{itemize}
    \item Appendix \ref{app:setup}: Experimental Details of Function Approximation 
    \item Appendix \ref{app:exp_more}: Additional Experiments
    \item Appendix \ref{app:prop}: Proofs of Theorems 
    in Section \ref{sec:theory}
 
    \item Appendix \ref{app:further}: Further Discussion

\end{itemize}

\section{Experimental Details of Function Approximation} \label{app:setup}
\textbf{Data Generation.} We generate the data $ \mX \times \vy \in\mathbb{R}^{m\times d} \times \mathbb{R}^m$ in the following way: (1) sample $m$ $d$-dimensional points $\vx_1,\dots, \vx_m$ with $\vx_j\sim \mathrm{Unif}[-1, 1]^d$, except for the Ishigami function \footnote{It is the traditional sampling way for Ishigami function.} from $\mathrm{Unif}[-\pi, \pi]^d$; (2) For each, function $f_i$, $y_j=f_i(\vx_j)$. Note that we include no additive noise in our experiments, although the observed behavior is robust to the presence of noise.  

\textbf{Experimental Set-up.} In the function approximation experiments, models are evaluated in both low-dimensional ($s_l^{q_*}$ and $s_l^{d}$ with $m=140, d=10$) and high-dimensional ($s_h^{q_*}$ and $s_h^{d}$ with $m=1400, d=100$) settings. The results for $q=q_*$ are in Table \ref{tab:compare_func}, and the full results with both sparse ($q=q_*$) and dense features ($q=d$) are in Table \ref{tab:full} in the appendix. For SRFE-S, Min $\ell_2$, and SHRIMP in $s_l^{q_*}$ and $s_h^{q_*}$, we sample $\mathbf{w}$ according to Def. \ref{def:sparse_weights} and $\rho=\mathcal{N}\left(\mathbf{0}, q^{-1}I_q\right)$ with $q=q_*$ as the actual order of those low-order functions.\footnote{In the low-order case, since the actual orders are known and are small enough such that $\binom{d}{q} < N$, we can use the actual order $q_*$.}, We set $n=N/\binom{d}{q}$ with $N=10000$ in our experiments and form the random feature matrix $\mW \in\mathbb{R}^{N\times d}$. The dictionary $\mA=[\cos(\mX \mW^\top ), \sin(\mX \mW^\top )]\in\mathbb{R}^{m\times 2N}$.  For SHRIMP, we set $0.2$ as the pruning rate and validate on $10\%$ of the training set to choose the best pruned model. For SALSA, we form the kernel matrix $\mK$ by
% \begin{equation}
$K_{ij}=\sum_{k=1}^{\binom{d}{q}} \exp\left(-\frac{\norm{\vx_i|_{\mathcal{S}_k}-\vx_j|_{\mathcal{S}_k}}^2}{2q}\right)$;
% \end{equation}
note that $\mathbb{E}_{W}[\mA\mA^\top ]=\mK$. 

Each model is evaluated by the average of test mean squared errors over three runs. For the sake of completeness, we also experiment with the same functions with $q=d$ for all functions ($s_l^{d}$ and $s_h^{d}$ in Table \ref{tab:compare_func}---which corresponds to standard kernel regression and random feature regression with standard Gaussian kernel. Here, random weights are drawn $\mathbf{\mW}\sim \mathcal{N}\left(\mathbf{0}, d^{-1}I_d\right)$ and fully dense. 
\section{Additional Experiments}\label{app:exp_more}
\subsection{Comparing Function Approximations with Sparse and Dense Features}

Table \ref{tab:full} shows the full results of function approximation with sparse ($q=q_*$) and dense features ($q=d$). As shown in Table \ref{tab:full}, for both the low-dimensional $(s_l^{q_*}, s_l^{d})$ and high-dimensional $(s_h^{q_*}, s_h^{d})$ settings, models with $q=q_*$ (i.e., $s_l^{q_*}$ and $s_h^{q_*}$), where the low order $q$ matches the actual order of functions, have significantly better performance for all methods over corresponding models with $q=d$ (i.e., $s_l^{d}$ and $s_h^{d}$). This shows the benefit of our use of low-order structure compared to previous random features work with dense features. However, with dense features $q=d$, the advantage of pruning over other methods fades: pruning over all functions performs comparably to standard $\ell_1$ and $\ell_2$ based methods in both low and high dimensions since all features add to the representative capacity, and when SHRIMP does perform worse, it is very slight. This is also exhibited as the average size of the model is much larger when using dense features as opposed to sparse features.  

\begin{table}[ht]
    \centering
    \caption{Comparison of the test errors. *Avg size denotes the average pruned model size of SHRIMP over three runs. $^\dagger s$ denotes the setting with different $(m, d, q)$ pairs: $s_l^{q_*} = (140, 10, q_*), s_l^d = (140, 10, d), s_h^{q_*} = (1400, 100, q_*), s_h^{d} = (1400, 100, d)$, where $l$ denotes low dimension and $h$ denotes high dimension. The best MSE for each $f_i(\vx)$ over all models is in purple. In $s_l^{q_*}$ and $s_h^{q_*}$, all the methods use $q=q_*$ (the ground-truth low order) to interpolate the functions and achieve better test performance than $s_l^{d}$ and $s_h^{d}$ (which use dense features), respectively. From the comparison of $s_l^{q_*}$ and $s_h^{q_*}$, our SHRIMP method is scalable to the high-dimensional setting.}

    \scalebox{1}
    {\begin{tabular}{clccccccc}
    \toprule
    Setting$^{\dagger}$ & Model  & $f_1(\vx)$
    & $f_2(\vx)$
    & $f_3(\vx)$
    & $f_4(\vx)$ 
    & $f_5(\vx)$
    & $f_6(\vx)$
    & $f_7(\vx)$
    \\ 
 
    \midrule

    & SRFE-S & 7.85e-04 & 1.98e-05 & 1.15e-01 & 1.27e-01  & 7.52e-03 & 1.11 & 7.49e-02\\
     & Min $\ell_2$ & 4.37e-20 & 5.45e-24 & 8.20e-02 & 5.94e-02 &  7.36e-03 & 7.18 & 2.98e-02\\
    $s_l^{q_*}$ & SALSA & 1.59e-12 & 1.26e-15 & 8.80e-02 & 6.14e-02 &  7.32e-03 & 6.99 & 2.72e-02\\
     & SHRIMP & \textcolor{purple}{1.37e-22} & \textcolor{purple}{7.90e-32} & \textcolor{purple}{4.98e-12} & \textcolor{purple}{2.54e-12}  & \textcolor{purple}{6.39e-04} & \textcolor{purple}{2.58e-02} & \textcolor{purple}{2.83e-05}\\
     \hdashline
    & Avg size* & 3100.33 & 29 & 147 & 171.67 & 39 & 80.33 & 187 \\
    
    \midrule
     & SRFE-S & 5.35e-02 & \textcolor{purple}{1.47e-03} & 5.56e-02 & 1.64e-01 &  \textcolor{purple}{4.11e-03} & \textcolor{purple}{1.27e+01} & 9.17e-02 \\
    & Min $\ell_2$ & 1.71e-02 & 1.93e-03 & 3.50e-02 & 1.01e-01 & 4.62e-03 & 1.45e+01 & 6.15e-02\\
    $s_l^d$ & SALSA & \textcolor{purple}{1.68e-02} & 1.91e-03 & \textcolor{purple}{3.38e-02} & \textcolor{purple}{9.67e-02} &  4.62e-03 & 1.44e+01 & 6.05e-02\\
     & SHRIMP & 1.70e-02 & 1.63e-03 & 3.51e-02 & 9.81e-02 &  6.57e-03 & 1.60e+01 & \textcolor{purple}{5.91e-02}\\
     \hdashline
    & Avg size & 6726 & 77.67 & 1872 & 12020  & 450.33 & 5342.33 & 39.33\\
    
    \midrule
    & SRFE-S & 1.52e-03 & 8.71e-06 & 1.99 & 4.54&  1.16e-01 & 4.59 & 4.40e-01\\
    & Min $\ell_2$ & 1.68e-20 & 3.51e-24 & 2.01 & 4.75 &  1.16e-01 & 8.34 & 1.49e-01\\
    $s_h^{q_*}$& SALSA & 1.99e-11 & 2.54e-13 & 1.60 & 3.16 &  8.49e-02 & 7.35 & 1.34e-01\\
     & SHRIMP & \textcolor{purple}{1.61e-22} & \textcolor{purple}{1.11e-30} & \textcolor{purple}{1.26e-02} & \textcolor{purple}{5.11e-01} & \textcolor{purple}{1.50e-02} & \textcolor{purple}{2.68} & \textcolor{purple}{5.82e-02}\\
     \hdashline
    & Avg size & 3355 & 19 & 61.33 & 64  & 42.67 & 13 & 229\\
    \midrule
    & SRFE-S & 1.43e-01 & 2.35e-02 & \textcolor{purple}{1.55e+00} & \textcolor{purple}{3.05e+00} &  \textcolor{purple}{8.20e-02} & 1.69e+01 & 2.06e-01 \\
    & Min $\ell_2$ & \textcolor{purple}{6.01e-02} & \textcolor{purple}{2.33e-02} & \textcolor{purple}{1.55e+00} & \textcolor{purple}{3.05e+00} &  \textcolor{purple}{8.20e-02} & \textcolor{purple}{1.40e+01} & 8.33e-02\\
    $s_h^{d}$ & SALSA & 2.62e+04 & 1.23e+03 & 4.63e+03 & 3.34e+04 & 3.20e+02 & 2.44e+04 & 4.66e+04\\
     & SHRIMP & 6.07e-02 & 2.35e-02 & \textcolor{purple}{1.55e+00} & 3.07e+00 &  9.33e-02 & 1.41e+01 & \textcolor{purple}{8.32e-02}\\
     \hdashline
    & Avg size & 6663 & 14933.33 & 8199 & 6908.67  & 717 & 14730.67 & 8328.67\\
    \midrule
    & Order $q_*$ & 1 & 1 & 2 & 2 &3 &2 &2 \\
     \bottomrule
    \end{tabular}}
    \label{tab:full}
\end{table}

\subsection{Additional Iterative Magnitude Pruning Curves}

We show comprehensive curves in Figure \ref{fig:more} and \ref{fig:naive} to illustrate the role of IMP in sparse random feature models using the functions defined in Section \ref{sec:exp}  with different number of features $N_t$. For sake of completeness, Figure \ref{fig:more} shows more types of test MSE curves with approximating functions $f_1(\vx), f_3(\vx), f_4(\vx), f_6(\vx)$, which are not included in Section \ref{sec:exp} due to page limit.  Except for $f_1(\vx)$, where the pruned curve has a little better test performance over the best solution of SRFE-S but with more number of features since the coefficient vector of function $f_1(\vx)=\sum_{i=1}^{d-1}x_i+\exp(-x_d)$ is comparably dense with random sparse features, SHRIMP find sparser estimators with better performance than SRFE-S and plain min $\ell_2$-norm estimator on other functions. 
\begin{figure}[ht]
    \centering
    \includegraphics[width=.245\linewidth]{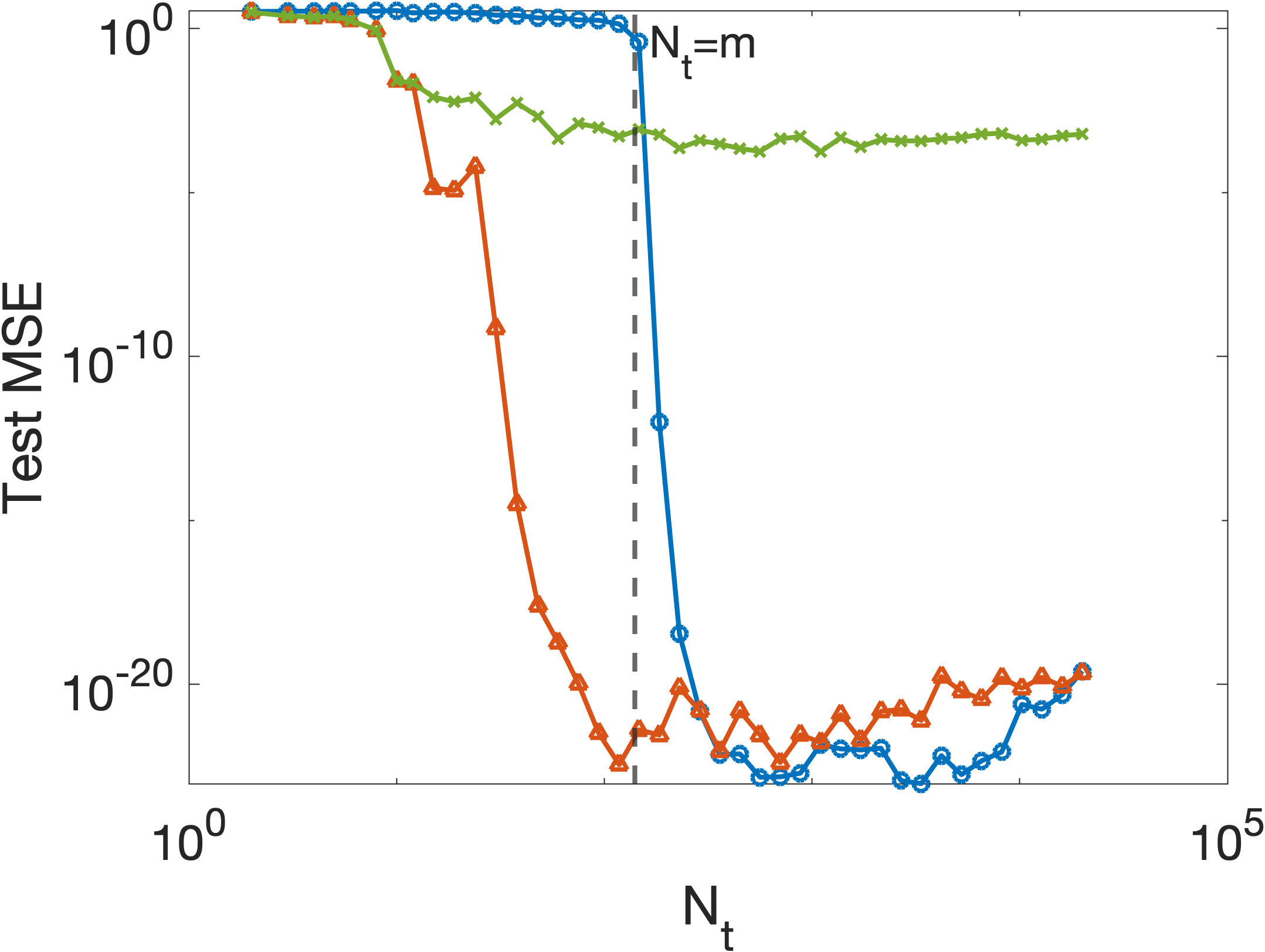}
    \includegraphics[width=.245\linewidth]{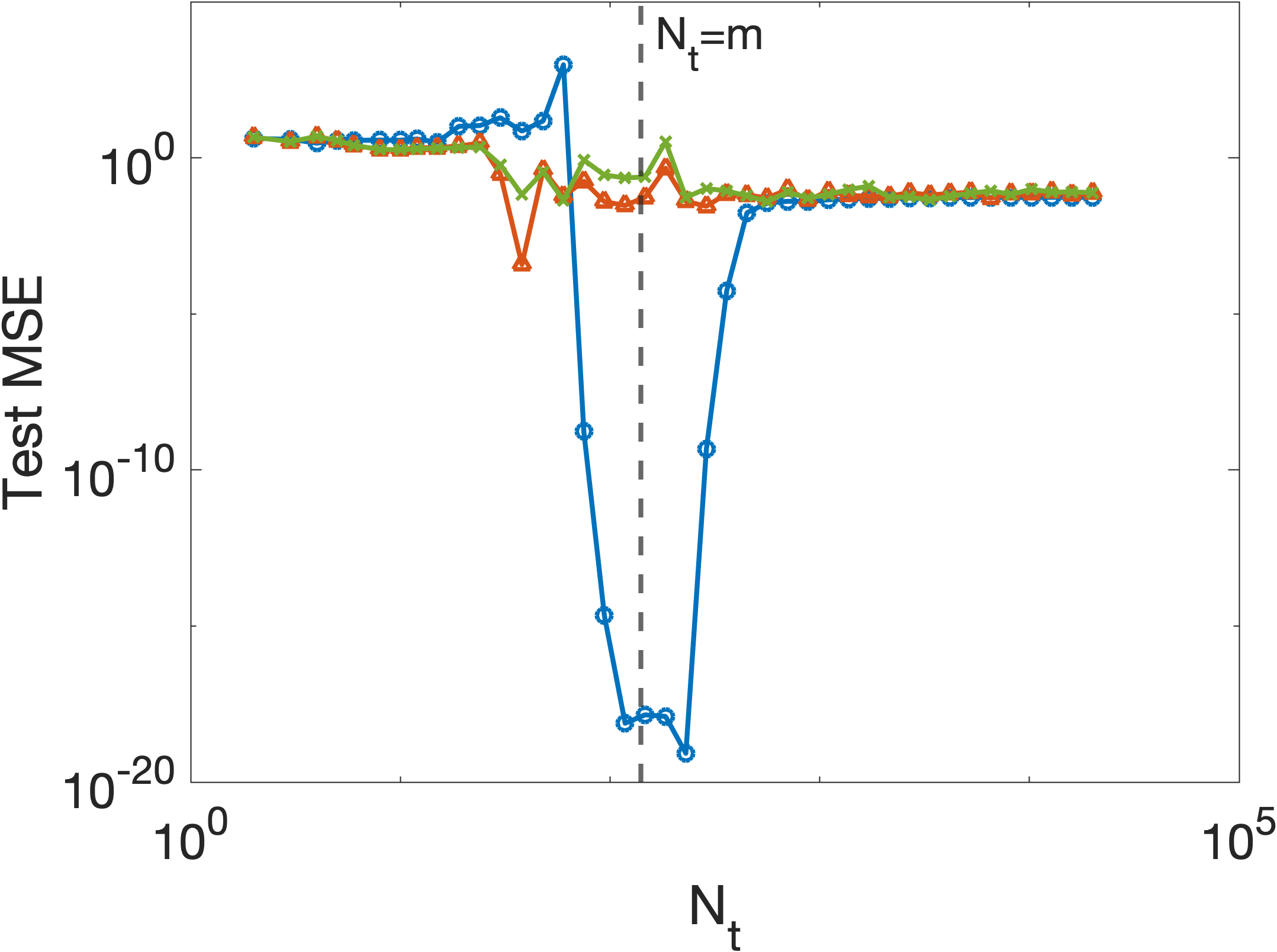}
    \includegraphics[width=.245\linewidth]{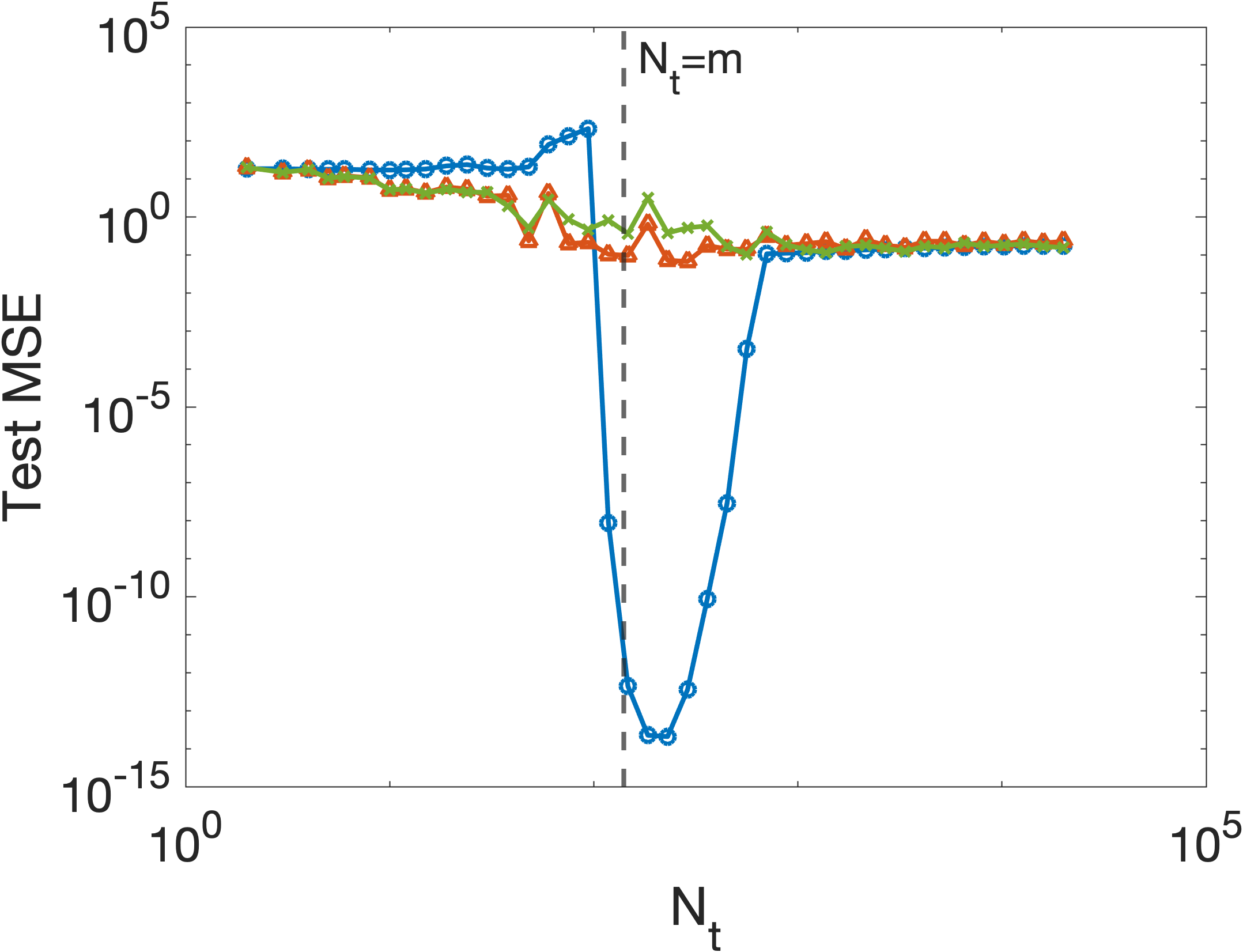}
     \includegraphics[width=.245\linewidth]{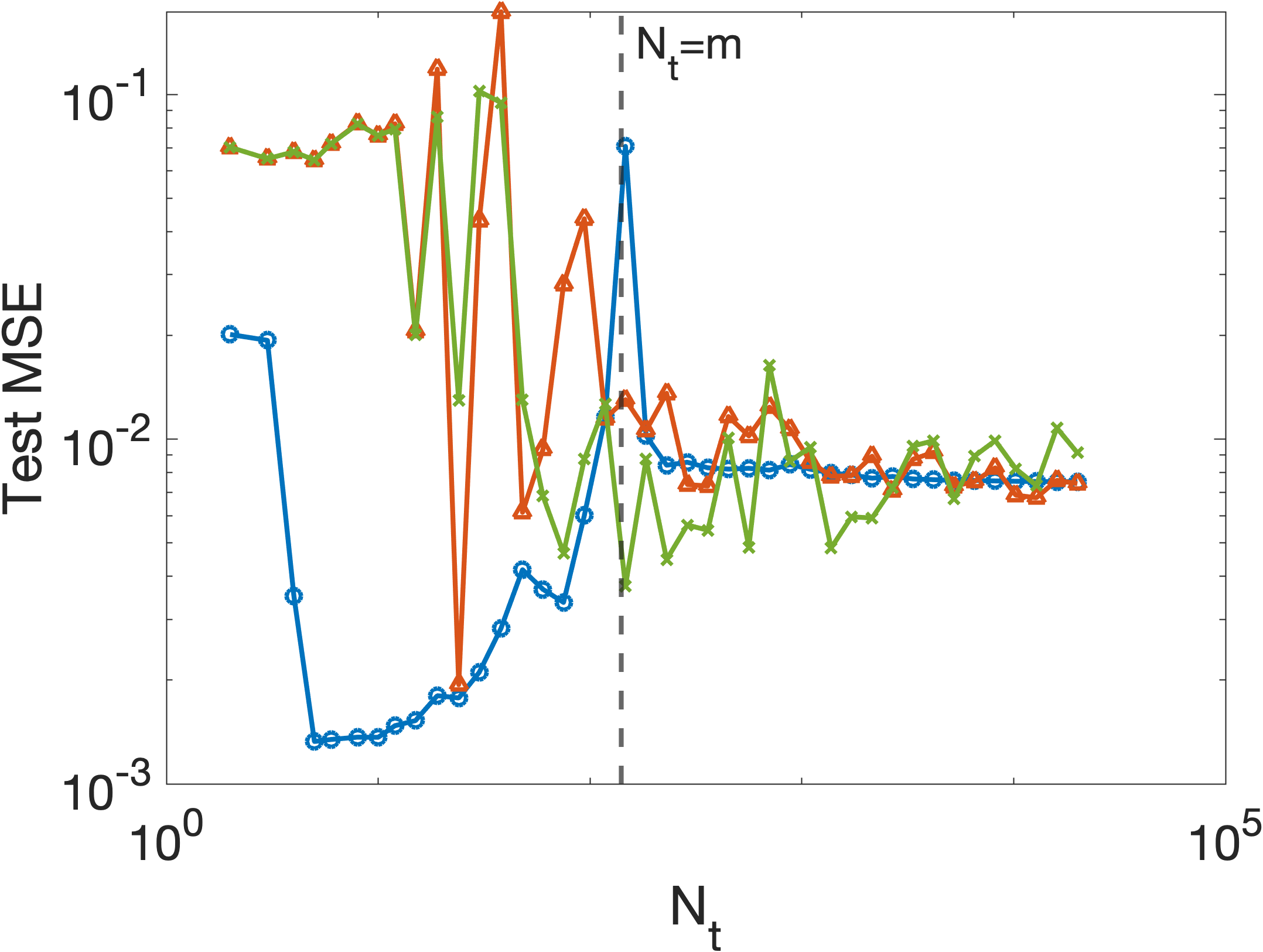}
    \caption{Test MSE of sparse random feature models obtained by SHRIMP, min $\ell_2$-norm estimation, min $\ell_1$-norm estimation (SRFE-S). From left to right: $f_1(\vx), f_3(\vx), f_4(\vx), f_6(\vx)$ as defined in Section \ref{sec:exp}. Blue: SHRIMP; Orange: Plain min $\ell_2$; Green: SRFE-S.}
    \label{fig:more}
\end{figure}

Furthermore, we include additional curves comparing the aforementioned methods to naive pruning methods (where the weights are kept fixed without retraining after each pruning step) in Figure \ref{fig:naive}. As shown in those figures, naive pruning usually has worse performance than the original minimal $\ell_2$-norm solution, let alone SHRIMP, which verifies the benefit of retraining from the same initialization of iterative magnitude pruning as \citet{frankle_lottery_2019} suggests. 
\begin{figure}[ht]
    \centering
    \includegraphics[width=.32\linewidth]{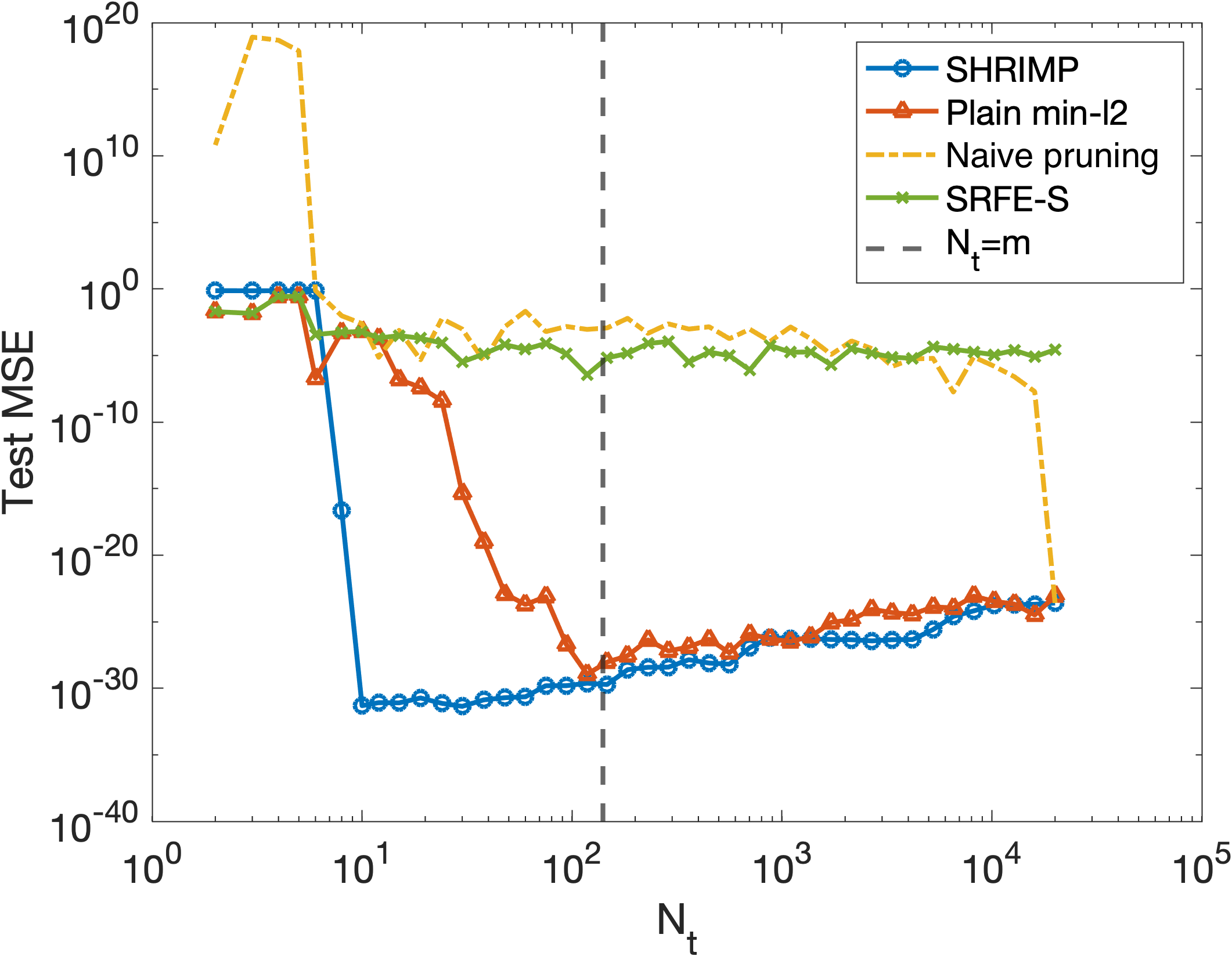}
    \includegraphics[width=.32\linewidth]{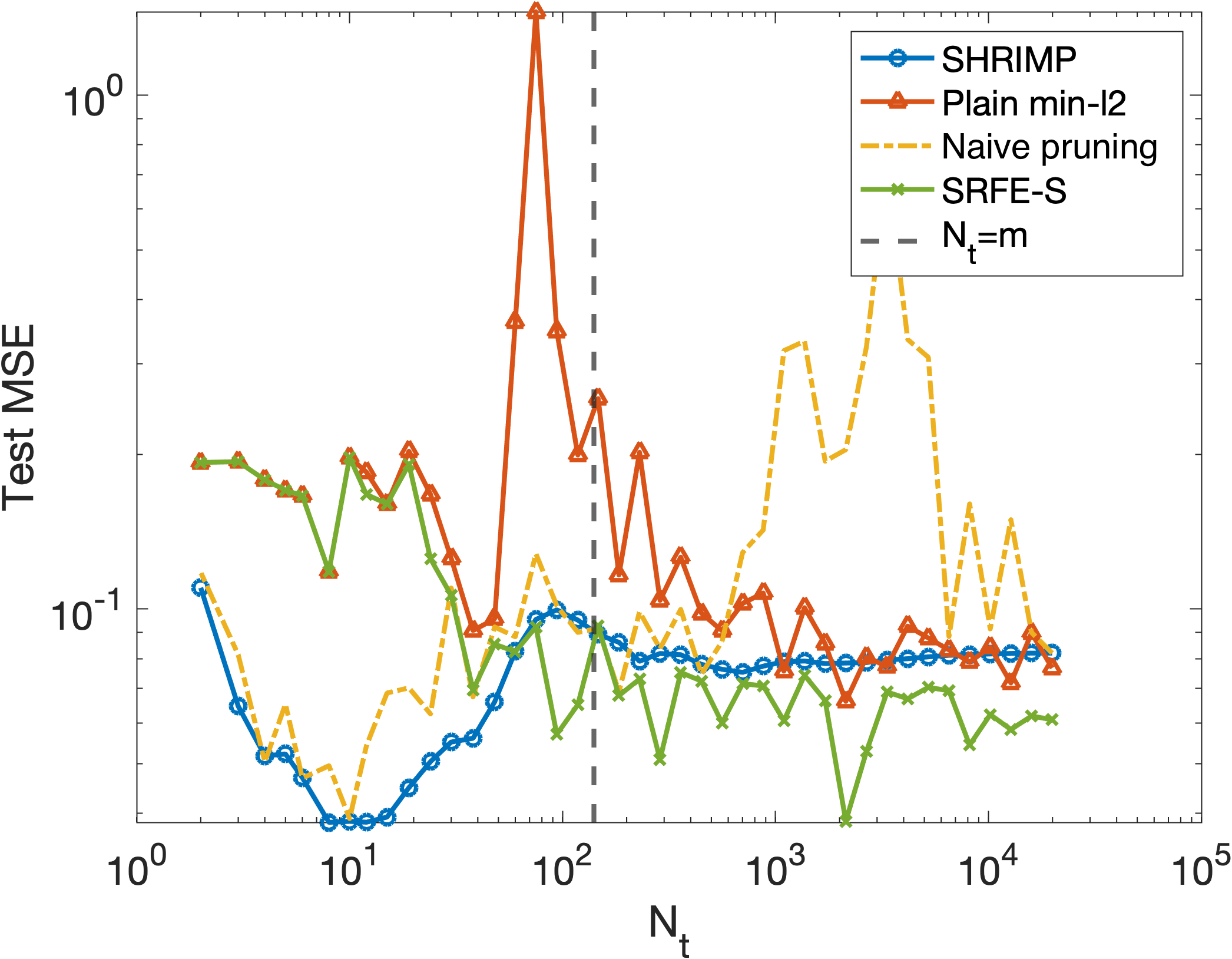}
    \includegraphics[width=.32\linewidth]{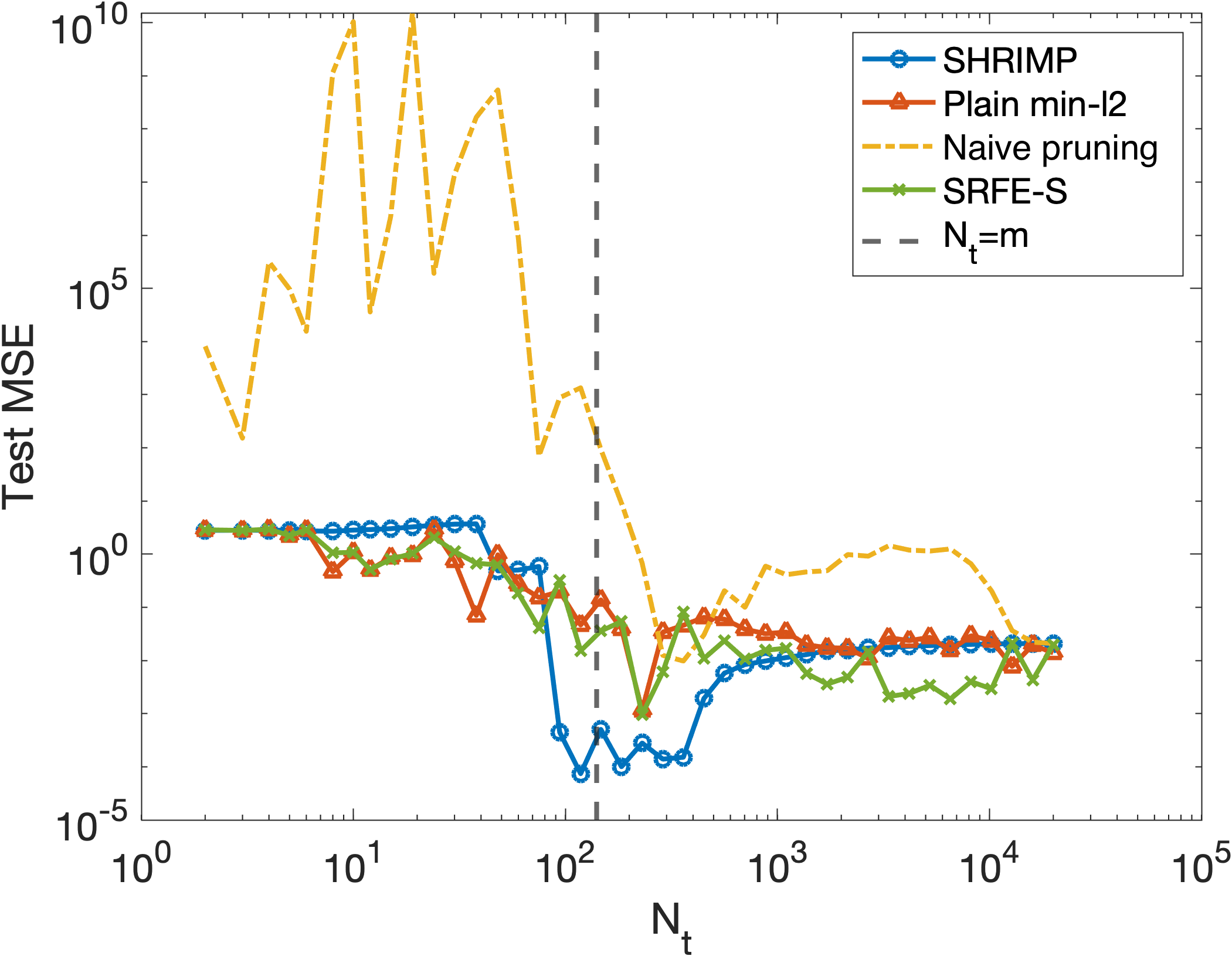}
    \includegraphics[width=.32\linewidth]{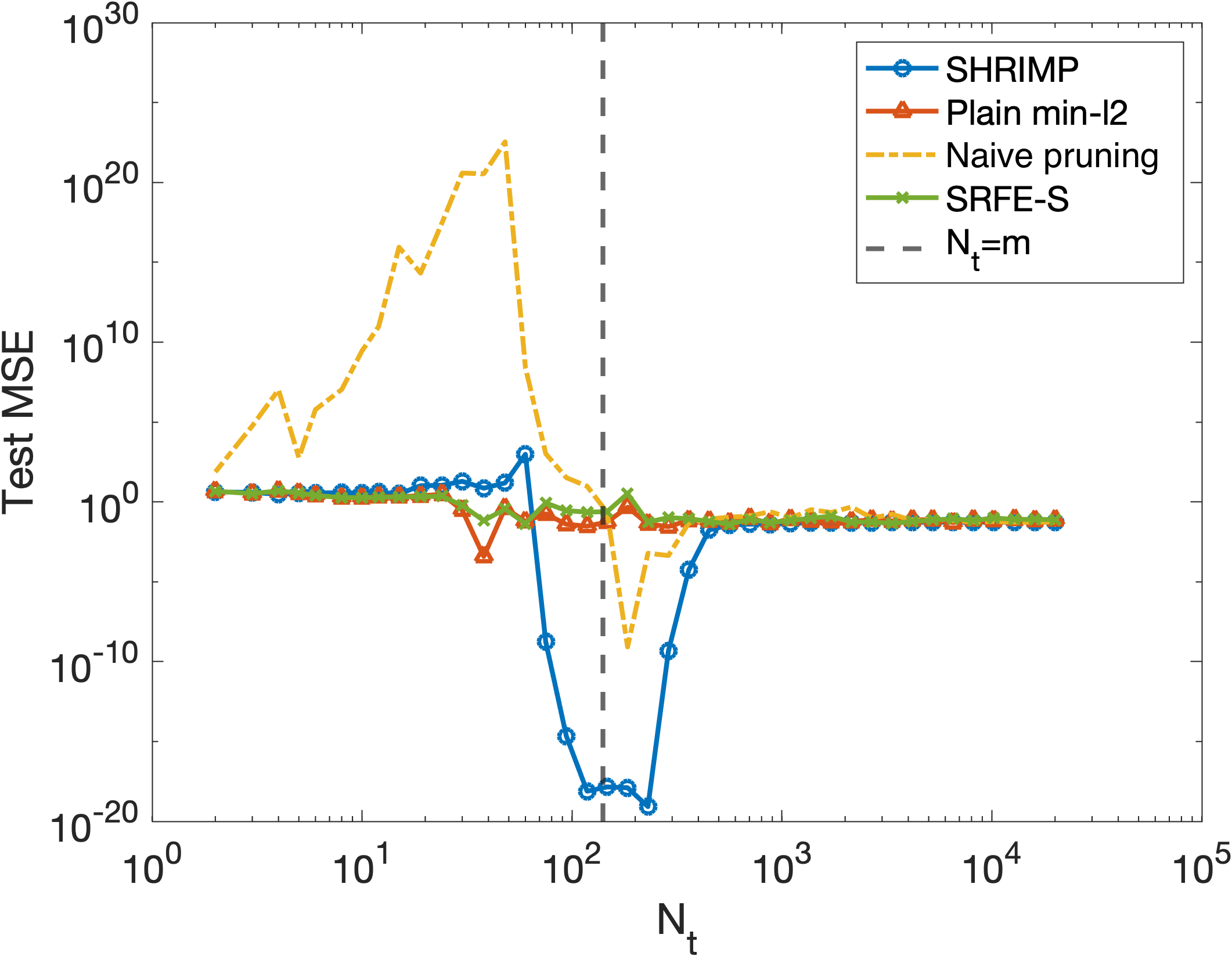}
    \includegraphics[width=.32\linewidth]{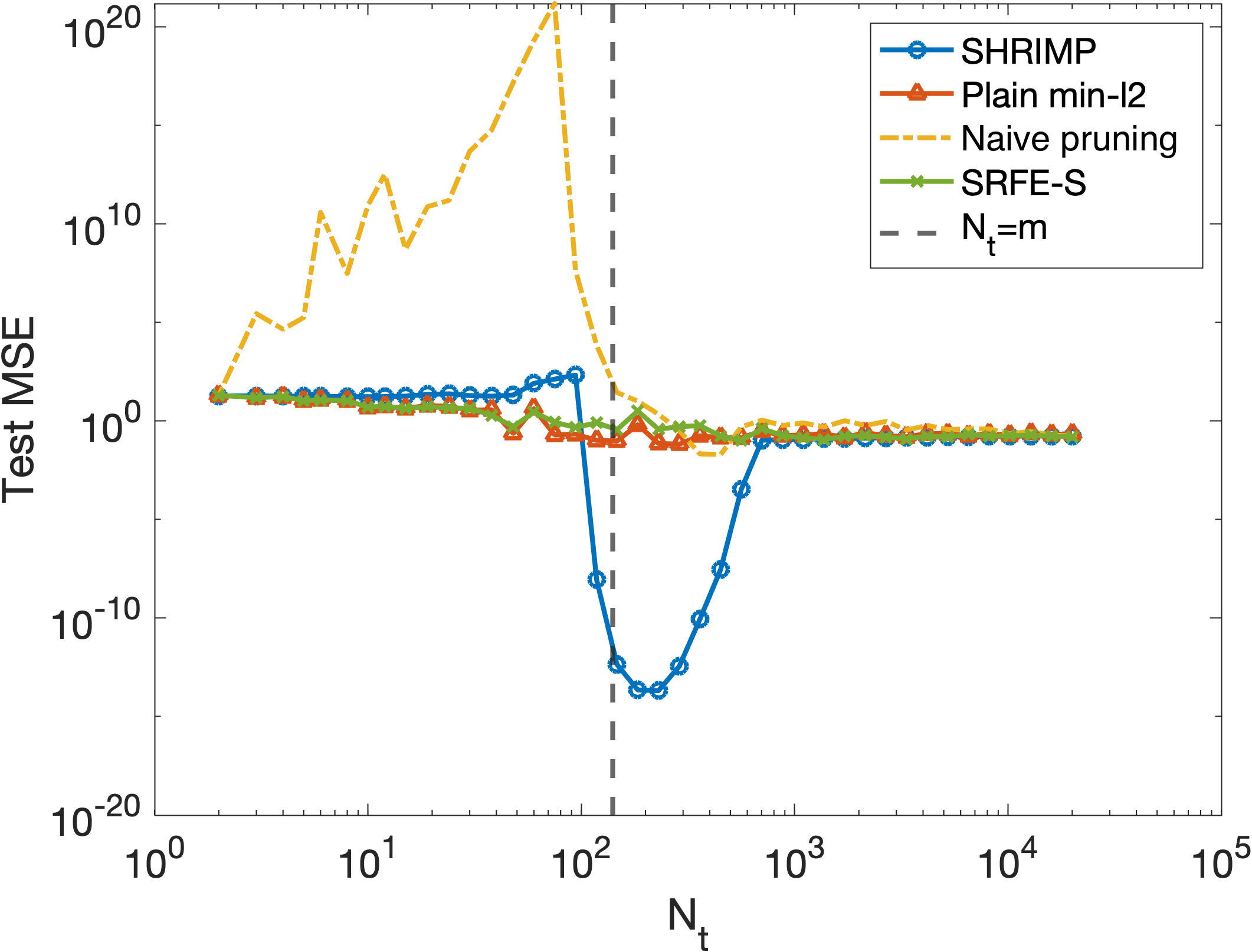}
     \includegraphics[width=.32\linewidth]{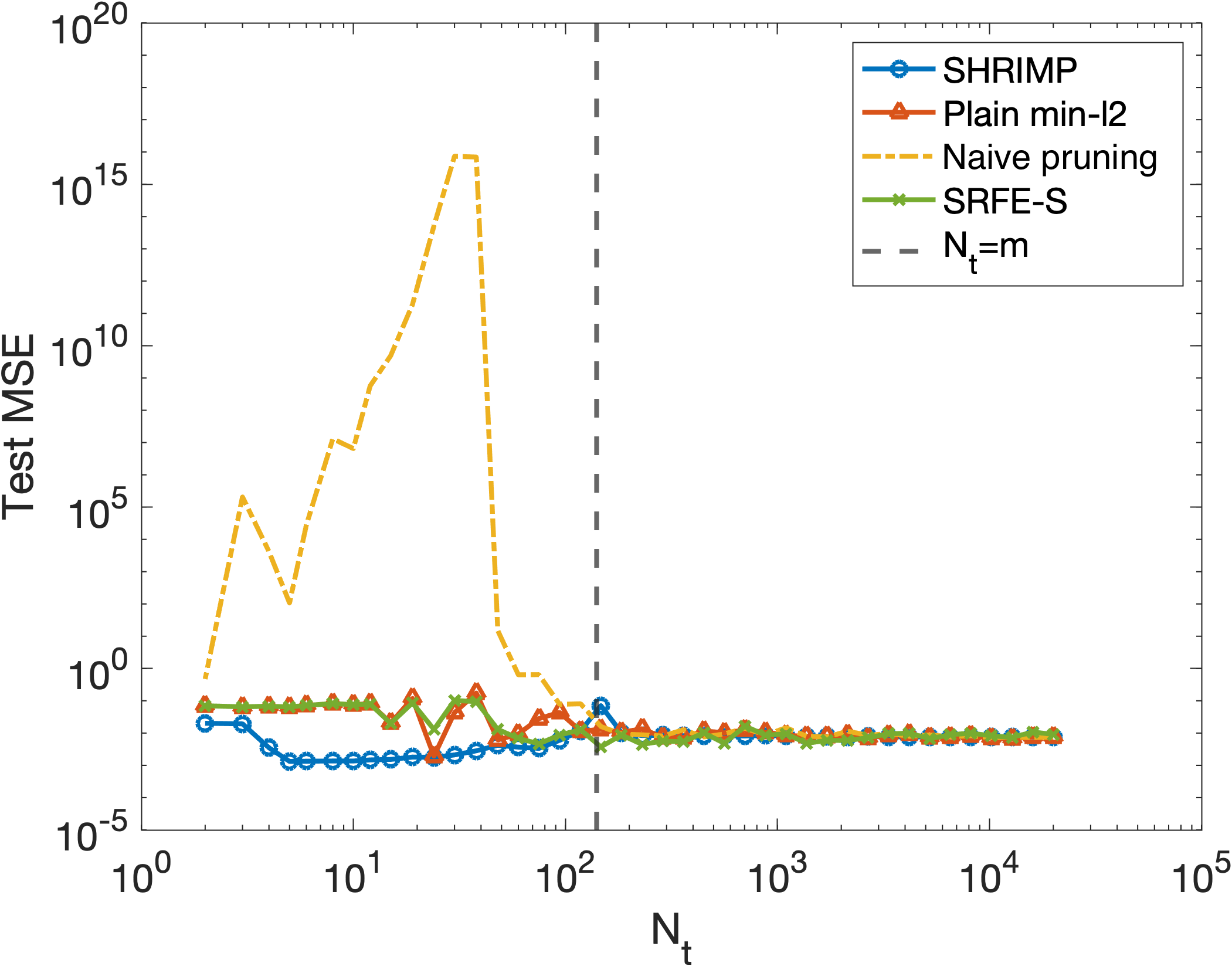}
    \caption{Test MSE of sparse random feature models obtained by SHRIMP, min $\ell_2$-norm estimation, min $\ell_1$-norm estimation (SRFE-S). From top to bottom and left to right: $f_2(\vx), f_5(\vx), f_7(\vx), f_3(\vx), f_4(\vx), f_6(\vx)$ as defined in Section \ref{sec:exp}, respectively.}
    \label{fig:naive}
\end{figure}
\subsection{Results on High-Order Functions}
In addition to low-order functions, we present test MSE curves of high-order functions with different methods in Figure \ref{fig:high_order} for completeness. All experiments are with $m=140, d=10, q=d$ since the ground-truth order $q^*=d$, and the same experimental settings as the synthetic experiments in Section \ref{sec:exp}. For $f_{h_1}(\vx)$ and $f_{h_2}(\vx)$, SHRIMP results in better performance with sparser models. However, for $f_{h_3}(\vx)$, which has underlying dense weights, SHRIMP can only have comparable performance to SRFE-S but with a sparser coefficient vector.  
\begin{figure}[ht]
    \centering
    \includegraphics[width=.32\linewidth]{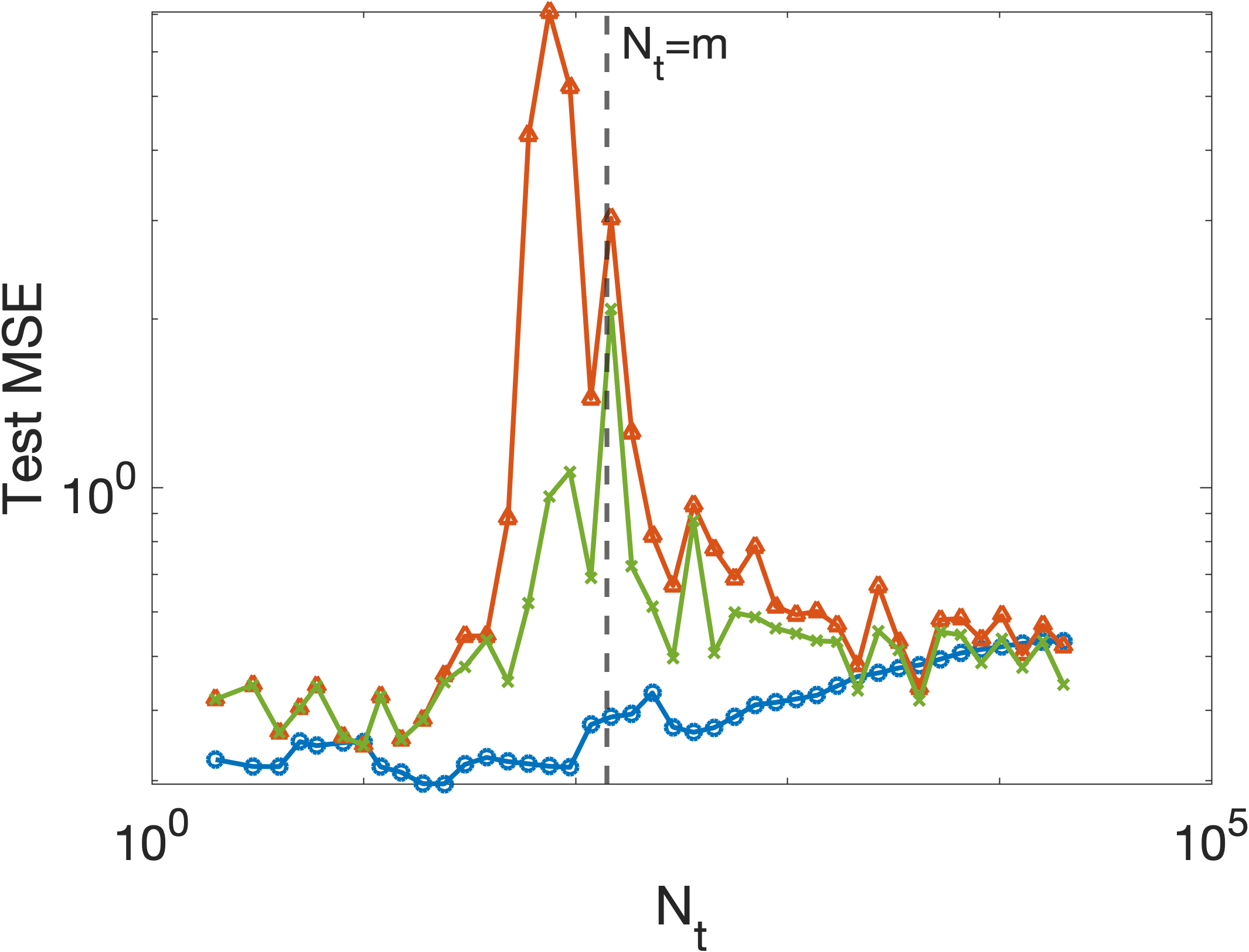}
    \includegraphics[width=.32\linewidth]{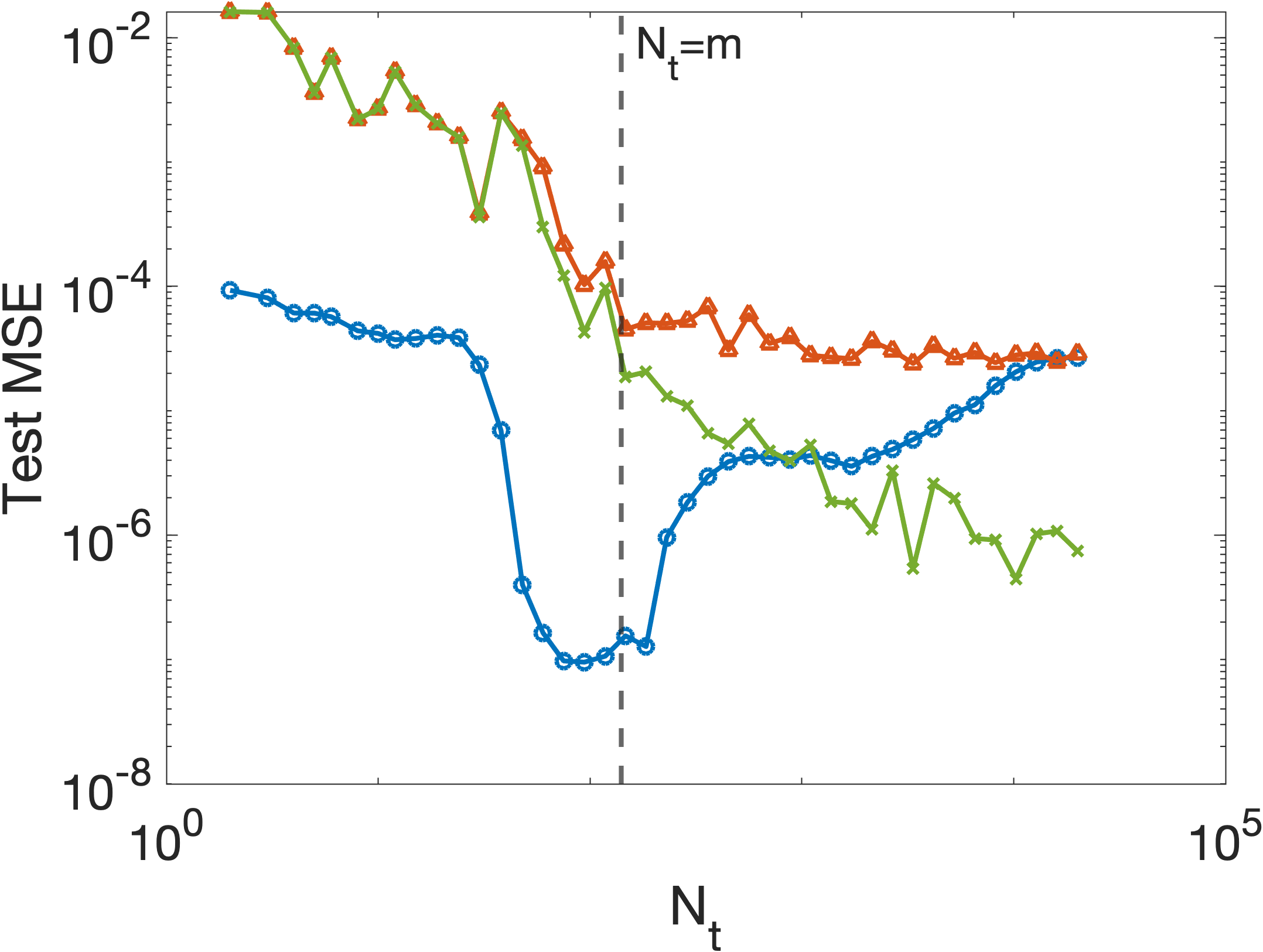}
     \includegraphics[width=.32\linewidth]{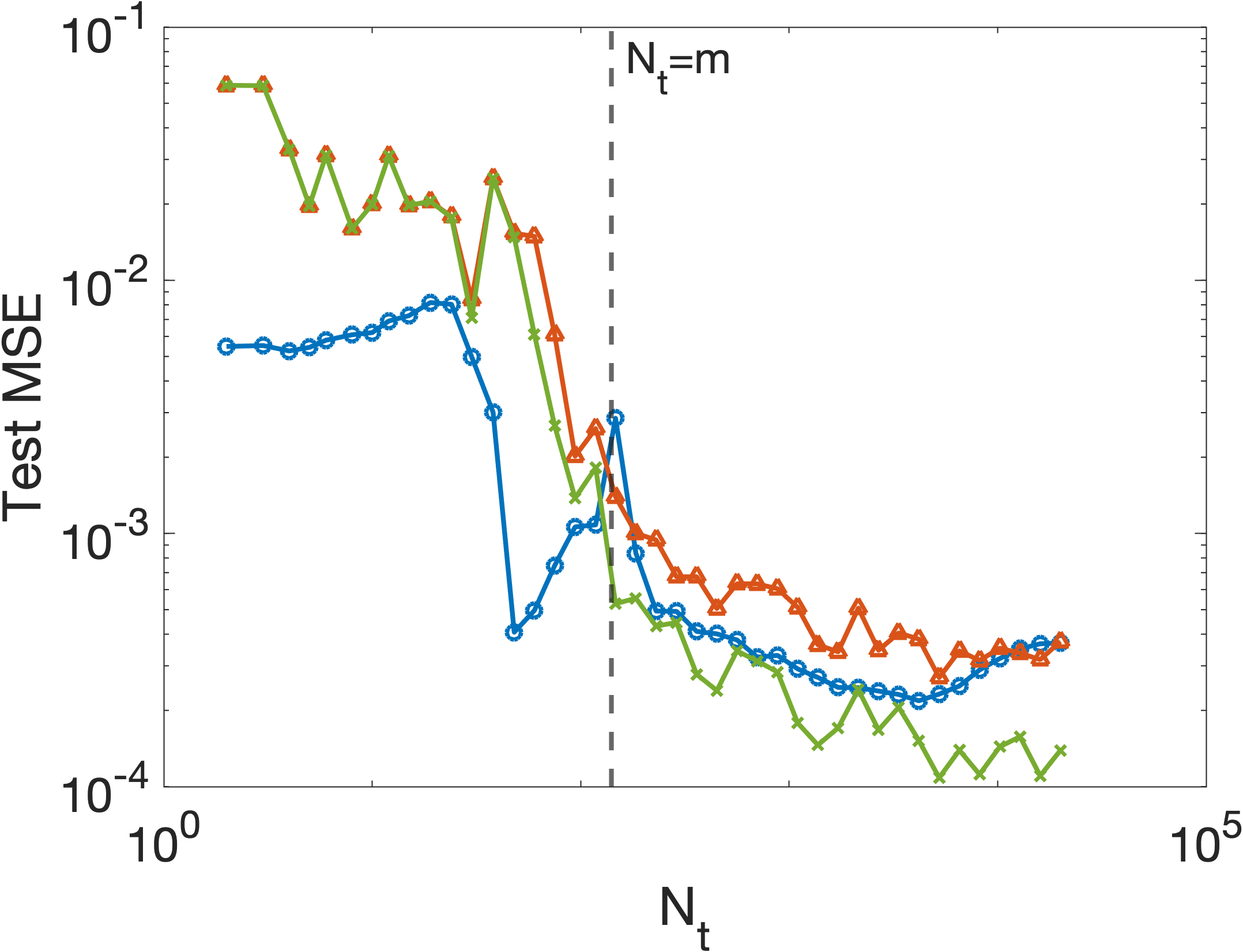}
    \caption{Test MSE of sparse random feature models obtained by SHRIMP, min $\ell_2$-norm estimation, min $\ell_1$-norm estimation (SRFE-S). Left: $f_{h_1}(\vx)=\sin(\sum_{i=1}^d x_i)$; Middle: $f_{h_2}(\vx)=\cos(\prod_{i=1}^d x_i)$; Right: $f_{h_3}(\vx)=(1+\|\vx\|_2)^{-1/2}$. Blue: SHRIMP; Orange: Plain min $\ell_2$; Green: SRFE-S.}
    \label{fig:high_order}
\end{figure}
\subsection{Experiments on Different Variances of Random Features}\label{app:variances}
We show the performance of random feature models with different variances of $\vw$ on functions with varying smoothness.  We compare SHRIMP and min $\ell_2$ solutions on the following functions:
\begin{itemize}
    \item Sum of low-frequency functions only (i.e., smooth function): $f_{a_1(\vx)} = \cos(x_1)x_3+x_2^2x_4+\sum_{j=3}^dx_j$
    \item Low-freq + High-freq:  $f_{a_2(\vx)} = \cos(x_1+x_2)+ 5\cos(2x_3+10x_4)$
    \item High-freq + High-freq: $f_{a_3(\vx)} = \sin(9x_1) + 10\cos (10 x_2)$
\end{itemize}
For all functions, we set $d=10, q=q_*, m=200, N=1500$, and average over 10 trials.  We give the test errors of SHRIMP vs the minimum $\ell_2$-norm solution at three different variances: $1/q, 1, 100$ in Table \ref{tab:variance}.  For the smooth function $f_{a_1}$ with only low-frequency component functions, SHRIMP outperforms the min $\ell_2$ norm at all corresponding variances, and as $\sigma^2$ increases, performance degrades.  This implies that sampling at lower frequencies may have an implicit bias toward smooth functions.  When there are higher-frequency component functions, such as $f_{a_2}$ and $f_{a_3}$, performance improves as $\sigma^2$ increases; in this case, we need to sample at higher frequencies.  Moreover, for $f_{a_3}$, increasing $\sigma^2$ allows for more gains: for this function a $\sigma^2\approx 170$ seems to do the best from a course sweep over $\sigma^2$ ranging up to 200.

We can visualize the behavior of different types of functions with high variance $\vw$ through the spectrum in Figure \ref{fig:variance1}. We plot the maximum eigenvalue of the Gram matrix throughout SHRIMP and random pruning for $f_{a_1}$ and $f_{a_3}$, where the weights are drawn from $\mathcal{N}(\mathbf{0}, 100\mI)$, a relatively high variance.  We notice that for $f_{a_1}$, the maximum eigenvalue of SHRIMP qualitatively matches that of random pruning throughout the pruning process.  However, for $f_{a_3}$, the behavior of the maximum eigenvalue of SHRIMP seems to more closely match that of SHRIMP with low variance on smooth functions (such as those given in the main paper), where the maximum eigenvalue of SHRIMP is smaller than that of random pruning for essentially the entire pruning process.  We discuss some explanations for this behavior in Section \ref{app:further}, as well as some limitations of our current theory in explaining this phenomenon.  

\begin{table}[ht]
    \centering
    \begin{tabular}{cccccccc}
    \toprule
    Model  & \multicolumn{3}{c}{min $\ell_2$} & \multicolumn{3}{c}{SHRIMP} & Optimal $q_*$ \\
    \hline
    $\sigma^2$ &  $1/q$ & 1 & 100 &  $1/q$ & 1 & 100 \\
    \hline
        $f_{a_1}$ &  0.041 & 0.034 & 2.452 & 1.19e-05 & 2.99e-04 & 0.137 & 2 \\
        $f_{a_2}$ &  55.205 & 40.023 & 9.734 & 14.411 & 12.778 & 4.935 & 2\\
        $f_{a_2}$ &  186.283 & 184.581 & 47.864 & 73.381 & 50.650 & 10.334 & 1\\
    \bottomrule
    \end{tabular}
    \caption{Test MSE of $\{f_{a_i}\}_{i=1}^3$ with random feature models with different variances.}
    \label{tab:variance}
\end{table}

\begin{figure}[ht]
    \centering
    \includegraphics[width=.49\linewidth]{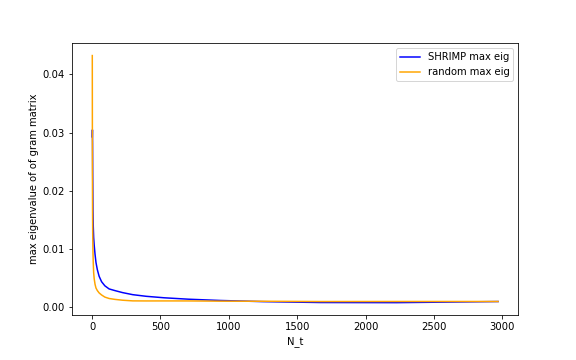}
     \includegraphics[width=.49\linewidth]{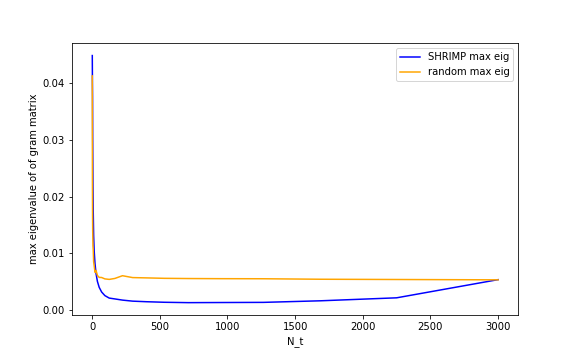}
    \caption{Maximum eigenvalue of $A_SA_S^\top/N_t$ with weight vectors drawn from $\mathcal{N}(\mathbf{0}, 100\mI)$.  Left: $f_{a_1}$; Right: $f_{a_3}$.}
    \label{fig:variance1}
\end{figure}

\subsection{Experiments on Kernel Approximation}
   We show an example of kernel approximation with $f(\vx)=x_4^2+x_2x_3+x_1x_2+x_4$ and $d=5$ in this section to illustrate the benefit of sparse random features beyond kernel approximation capacity.
    \begin{figure}[ht]
            \centering
            \includegraphics[width=.32\linewidth]{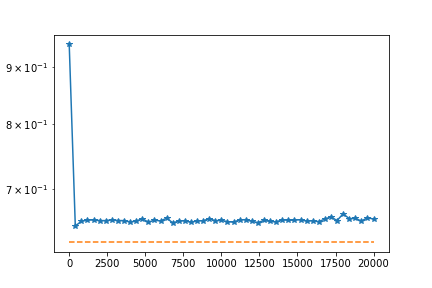}
            \includegraphics[width=.32\linewidth]{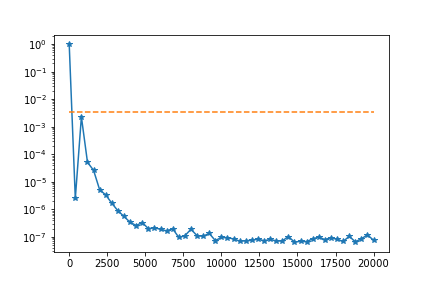}
            \includegraphics[width=.32\linewidth]{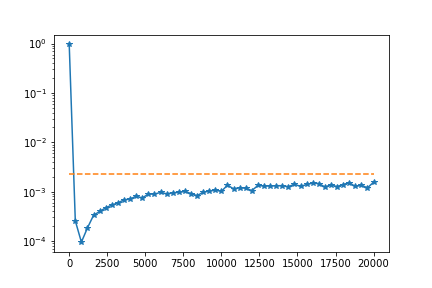}
            \includegraphics[width=.32\linewidth]{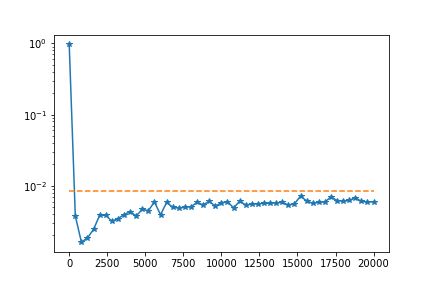}
            \includegraphics[width=.32\linewidth]{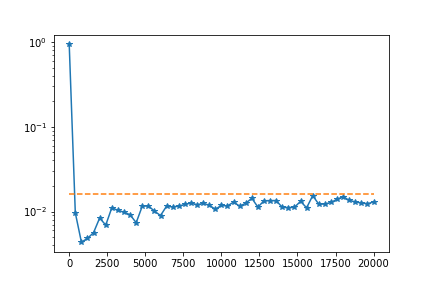}
            \caption{$f(\vx)=x_4^2+x_2x_3+x_1x_2+x_4$; From top to down and from left to right: $q=1, 2, 3, 4, 5$.}
            \label{fig:kernel_approximations}
        \end{figure}
    As Figure \ref{fig:kernel_approximations} shows, the minimal $\ell_2$ regression on a random features model is equivalent to minimum RKHS kernel regression on the kernel matrix corresponding to the random feature matrix. In Figure \ref{fig:kernel_approximations}, the blue lines represent the test MSE of the minimal $\ell_2$ estimator for sparse random feature models (i.e., $\hat{\vc} = \mA^{\dagger} \vy$) with increasing number of features ($N$), while the orange lines represent the test MSE of estimators from kernel regression with the kernel $\mK$ defined in Section \ref{sec:exp}.  Notice that the top-middle plot, $q=2$ equals the actual order $q_*$ of the function, and exhibits very interesting behavior as $N$ grows.  Instead of asymptotically approaching the orange line as the plots with $q=3, 4, 5$ do, the test error curve of the sparse random feature model with min $\ell_2$-norm estimator is significantly better than what can be obtained with kernel regression, which is surprising. This implies a \emph{benefit of sparse random feature models beyond a RKHS understanding}, which would necessitate a more robust statistical study beyond approximation capacities. 
\section{Proofs of Theorems in Section \ref{sec:theory}} \label{app:prop} 

\subsection{Proof of Theorem \ref{thm:refine}}
\begin{proof}[Proof of Theorem \ref{thm:refine}]
Denote $\vcsharp$ as the minimal $\ell_1$ norm solution obtained by basis pursuit, $\vcp$ as the pruned solution with zeros on $i \notin \ssharp$, where $\ssharp$ is the support set of the $s$ largest coefficients of $\vcsharp$, and $\vcstars$ as $\vcstar$ supported on $\sstar$, which is the support set of the $s$ largest coefficients of $\vcstar$. Since both $\vcp$ and  $\vcstars$ are $s$-sparse, we have for any $\vz$,
\begin{align}\label{eq:f_diff}
\begin{split}
    \abs{ \fsp (\vz) - \fstar(\vz)} ^2 
    & = \abs{ [ \phi(\vz; \vomega_1), \dots, \phi(\vz; \vomega_N) ] (\vcstar - \vcp )  }^2 \\
    & = \abs{ [ \phi(\vz; \vomega_1), \dots, \phi(\vz; \vomega_N) ] ( \vcp - \vcstars ) + (\vcstars - \vcstar)  }^2 \\
    & \leq 2 \abs{ [ \phi(\vz; \vomega_1), \dots, \phi(\vz; \vomega_N) ] ( \vcp - \vcstars )  }^2 +  2 \abs{ [ \phi(\vz; \vomega_1), \dots, \phi(\vz; \vomega_N) ] (\vcstars - \vcstar)  }^2\\
    & \leq 4s \| \vcstars - \vcp \|_2^2 + 2\abs{ \sum_{j\notin \sstar} \phi(\vz, \vomega_j) c_j^\star } ^2 \\
    & \leq 4s \| \vcstars - \vcp \|_2^2 + 2\abs{  \sum_{j\notin \sstar} \abs{ \phi(\vz, \vomega_j) } \abs{ c_j^\star}  }^2 \\
    & \leq 4s \| \vcstars - \vcp \|_2^2 + 2 (\kappa_{s, 1}(\vcstar) )^2
\end{split}
\end{align}

We provide two ways to bound $\|\vcstars - \vcp \|_2$ by using an alternative $\eta'$ or $\teta$ instead of $\eta$.

\begin{enumerate}
    \item \textbf{With the max singular value of $\mA$ (or $\lambda_{\max}(\mA^\ast\mA)$).} From (86) in \cite{hashemi2021generalization}, we have $\norm{\vy - \mA \vcstar}^2 \leq 2m (\epsilon^2 \norm{f}_\rho^2 + 4\nu^2)$ , then
\begin{align}
\begin{split}
     \|\vy - \mA \vcstars \|_2 & \leq  \|\vy - \mA \vcstar \|_2 +  \| \mA( \vcstar - \vcstars) \|_2 \\
     & \leq \sqrt{2m (\epsilon^2 \norm{f}_\rho^2 + 4\nu^2) } + \sqrt{\lambda_{\max} (\mA^\ast\mA)} \kappa_{s,2} (\vcstar) \\
    & = \eta' \sqrt{m} 
\end{split}
\end{align}

Then $\eta' = \sqrt{2(\epsilon^2 \norm{f}_\rho^2 + 4\nu^2)} + \sqrt{\frac{\lambda_{\max}(\mA^\ast\mA)}{m}} \kappa_{s,2} (\vcstar)$.

\item \textbf{With $\kappa_{s,1}$.} Follow the proof idea of (86), we have 

\begin{align}
    \begin{split}
        \|\vy - \mA \vcstars\|^2 & \leq 2 \p{\sum_{k=1}^m ( f(\vx_k) - \fstar(\vx_k)^2 + 4\nu^2 m } \\
        & \leq 2 \p{ \sum_{i=1}^m 2 ( f(\vx_k) - \fsstar(\vx_k) )^2 +  2 ( \fstar(\vx_k) - \fsstar(\vx_k) ) ^2 +  4\nu^2 m } \\
        & \leq 4m \p{ \epsilon^2\|f\|_{\rho}^2 + \kappa^2_{s,1}(\vcstar) + 2
        \nu^2} := \teta^2 m  
    \end{split}
\end{align}

where $( f(\vx_k) - \fsstar(\vx_k) ) ^2$ is bounded from Lemma 2 in \cite{hashemi2021generalization}. Then $\teta = 2\sqrt{\epsilon^2\|f\|_{\rho}^2  + 2
        \nu^2 + \kappa^2_{s,1}(\vcstar)} $.
\end{enumerate}

From Stability of BP-based Sparse Reconstruction \citep{foucartmathematical} (which is also Lemma 6 in \cite{hashemi2021generalization}), we have 
\begin{align}\label{eq:choice}
    \|\vcsharp - \vcstars \|_2 \leq C'\frac{\kappa_{s,1}(\vcstars)}{\sqrt{s}} + C \min\{ \eta', \teta 
    \} =  C \min\{ \eta', \teta \} 
\end{align}
where $\kappa_{s,1}(\vcstars) = 0$ since $\vcstars$ is $s$-sparse.
Then, if the coherence of $\mA$ satisfies $\mu_{\mA} \leq \frac{4}{\sqrt{41}(2s-1)}$, we have 
\begin{align}
    \begin{split}
        \|\vcp - \vcstars\|_2 \leq 3 \| \vcsharp - \vcstars \|_2 \leq 3 C \min\{ \eta', \teta \} := C \min\{ \eta', \teta \}
    \end{split}
\end{align}
by redefining $C$.

Then for McDiarmid's inequality, we have 
\begin{align}
\begin{split}
    \abs{\nu(\vz_k) - \nu(\tilde{\vz_k})} & \leq \frac{2 \p{  4s \| \vcstars - \vcp \|_2^2 + 2 (\kappa_{s, 1}(\vcstar) )^2 }  }{ m} := \Delta_{v} \\
   \Rightarrow t & = \Delta_{v}\sqrt{\frac{m}{2} \log \p{ \frac{1}{\delta} } } = 2\sqrt{\frac{2}{m} \log \p{ \frac{1}{\delta} } }  \p{  2s \| \vcstars - \vcp \|_2^2 + (\kappa_{s, 1}(\vcstar) )^2 }   
\end{split}
\end{align}

Putting everything together, we have 

\begin{align}
    \begin{split}
        \sqrt{ \int_{\sR^d} \abs{ \fsp (\vx) - \fstar(\vx) } ^2 d\mu }  \leq  m^{-\frac{1}{2}} \sqrt{ \sum_{k=1}^m \abs{ \fsp(\vz_k) - \fstar(\vz_k) } ^2 } + \p{ \frac{8}{m} \log \p{ \frac{1}{\delta} } } ^{\frac{1}{4}}  \p{  2s \| \vcstar - \vcp \|_2^2  + (\kappa_{s, 1}(\vcstar) )^2 } ^{\frac{1}{2}} 
    \end{split}
\end{align}

For the first term, following (96) in \cite{hashemi2021generalization}, we bound as follows:

    \begin{align}
    m^{-\frac{1}{2}} \sqrt{ \sum_{k=1}^m \abs{ \fsp(\vz_k) - \fstar(\vz_k) } ^2 } \leq 2 \norm{ \vcp - \vcstars }_2  + \kappa_{s,1}(\vcstar)
    \end{align}
    where $\mA$ satisfies $2s$-RIP condition and $\vcp - \vcstars$ is $2s$-sparse.

Therefore, with probability at least $1-\delta$, we have the refined bound as follows:
\begin{align}
\begin{split}
        \sqrt{ \int_{\sR^d} \abs{ \fsp (\vx) - \fstar(\vx) } ^2 d\mu }   \leq \p{ \frac{8}{m} \log \p{ \frac{1}{\delta} } } ^{\frac{1}{4}}  \p{  2s \| \vcstars - \vcp \|_2^2  + (\kappa_{s, 1}(\vcstar) )^2 } ^{\frac{1}{2}} 
     + 2 \norm{ \vcp - \vcstars }_2 + \kappa_{s, 1}(\vcstar)
\end{split}
\end{align}

where
$$ \| \vcstars - \vcp \|_2 \leq   C \min \left\{ \sqrt{2(\epsilon^2 \norm{f}_\rho^2 + 4\nu^2)} + \sqrt{\frac{\lambda_{\max}(\mA^\ast\mA)}{m}} \kappa_{s,2} (\vcstar), 2\sqrt{\epsilon^2\|f\|_{\rho}^2  + 2
        \nu^2 + \kappa^2_{s,1}(\vcstar)} 
    \right\}.$$

Furthermore, if we plug in $\kappa_{s,2}(\vcstar) \leq \epsilon \|f\|_{\rho}$ from (89) in \cite{hashemi2021generalization}, we have
\begin{footnotesize}
\begin{align}
    \begin{split}
       &  \sqrt{ \int_{\sR^d} \abs{ \fsp (\vx) - \fstar(\vx) } ^2 d\mu }   \leq \\ &  \p{ \frac{8}{m} \log \p{ \frac{1}{\delta} } } ^{\frac{1}{4}} 
        \left( 2s \left(  
       C \min \left\{ \sqrt{2(\epsilon^2 \norm{f}_\rho^2 + 4\nu^2)} + \sqrt{\frac{\lambda_{\max}(\mA^\ast\mA)}{m}} \epsilon \|f\|_{\rho},  2\sqrt{\epsilon^2\|f\|_{\rho}^2  + 2
 \nu^2 + \kappa^2_{s,1}(\vcstar) } \right\} \right)^2  +  \kappa^2_{s,1}(\vcstar) \right)^{\frac{1}{2}} \\
 & \quad + 2 C \min \left\{ \sqrt{2(\epsilon^2 \norm{f}_\rho^2 + 4\nu^2)} + \sqrt{\frac{\lambda_{\max}(\mA^\ast \mA)}{m}} \epsilon^2\|f\|_{\rho}, 2\sqrt{\epsilon^2\|f\|_{\rho}^2  + 2
        \nu^2 + \kappa^2_{s,1}(\vcstar)} 
    \right\} + \kappa_{s, 1}(\vcstar)
    \end{split}
\end{align}
\end{footnotesize}
    
\end{proof}

\subsection{Proof of Corollary \ref{cor:sparse}}
\begin{proof}[Proof of Corollary \ref{cor:sparse}]
For the Corollary, since $\vcstar$ is $s$-sparse, $\kappa_{s, p}=0, \forall p$ and $\vcstars - \vcstar = \mathbf{0}$. We can bound $\abs{ \fsp (\vz) - \fstar(
\vz) } ^2$ using almost the same way as \eqref{eq:f_diff}, but with a tighter constant. Since both $\vcp$ and $\vcstar$ are $s$-sparse, $ |\ssharp \bigcup \sstar| \leq 2s $ and $[\vcp -\vcstar]_i = 0, \forall i \notin  \ssharp \bigcup \sstar$. Then,
\begin{align}
\begin{split}
    \abs{ \fsp (\vz) - \fstar(\vz)} ^2 
    & = \abs{ [ \phi(\vz; \vomega_1), \dots, \phi(\vz; \vomega_N) ] (\vcstar - \vcp )  }^2 \\
    & = \abs{ [ \phi(\vz; \vomega_i)]_{i \in \ssharp \bigcup \sstar} ([\vcstar]_{i \in \ssharp \bigcup \sstar} - [\vcp]_{i \in \ssharp \bigcup \sstar})  }^2 \\
    & \leq 2s \| \vcstar - \vcp \|^2 
\end{split}
\end{align}

Hence, the bound in Lemma 7 (with $\eta = \sqrt{2(\epsilon^2\|f\|_{\rho} + E^2)}$ ) changes to 

\begin{align}
    \| \vcp - \vcstar \|_2 & \leq 3C \eta = 3 C \sqrt{2(\epsilon^2 \|f\|_{\rho}^2 + 4\nu^2)} : = C\sqrt{\epsilon^2 \|f\|_{\rho}^2 + 4\nu^2}
\end{align}
after redefining $C$.

Furthermore, we can bound the difference in $v$ from (91) in \cite{hashemi2021generalization} by 

\begin{align}
    \begin{split}
      \abs{\nu(\vz_k) - \nu(\tilde{\vz_k})} & \leq \frac{1}{m} \abs{ \abs{ \fsp(\vz_k) - \fstar(\vz_k) } ^2 -  \abs{ \fsp(\tilde{\vz}_k) - \fstar(\tilde{\vz}_k) } ^2  }    \leq \frac{4s}{m} \|\vcstar - \vcp \|_2^2 = \frac{4s C^2 (\epsilon^2 \|f\|_{\rho}^2 + 4\nu^2)   }{m} 
    \end{split}
\end{align}
where $\tilde{\vz_k}$ results from perturbing $\vz_k$ at the $k^{th}$ coordinate, and
\begin{align}
    t = \frac{4s C^2 (\epsilon^2 \|f\|_{\rho}^2 + 4\nu^2)   }{m} \sqrt{\frac{m}{2} \log \p{ \frac{1}{\delta} } } =  s C^2 (\epsilon^2 \|f\|_{\rho}^2 + 4\nu^2)   \sqrt{ \frac{8}{m} \log \p{ \frac{1}{\delta} } }.
\end{align}

Therefore, with probability exceeding $1-\delta$, 
\begin{align}
    \begin{split}
        \sqrt{ \int_{\sR^d} \abs{ \fsp (\vx) - \fstar(\vx) } ^2 d\mu } & \leq m^{-\frac{1}{2}} \sqrt{ \sum_{k=1}^m \abs{ \fsp(\vz_k) - \fstar(\vz_k) } ^2 } + \p{ \frac{8}{m} \log \p{ \frac{1}{\delta} } } ^{\frac{1}{4}}  s^{\frac{1}{2}} C \sqrt{\epsilon^2 \|f\|_{\rho}^2 + 4\nu^2} \\
        & \leq 2\| \vcp - \vcstar \|_2 + \p{ \frac{8}{m} \log \p{ \frac{1}{\delta} } } ^{\frac{1}{4}}  s^{\frac{1}{2}} C \sqrt{\epsilon^2 \|f\|_{\rho}^2 + 4\nu^2} \\
        & \leq C \p{ 2 + 8^{\frac{1}{4}}s^{\frac{1}{2}}m^{-\frac{1}{4}} \log^{\frac{1}{4}}(1/
        \delta) } \sqrt{\epsilon^2 \|f\|_{\rho}^2 + 4\nu^2 } 
    \end{split}
\end{align}

With order-$q$ features, we have $\eta = \sqrt{ 2\epsilon^2 \binom{d}{q} \tvert{f}^2 + 2E^2 }$, where $\tvert{f} = \frac{1}{K}\sum_{j=1}^K\|g_j\|_{\rho}$. Hence, the bound reduces to 

\begin{align}
    \begin{split}
        \sqrt{ \int_{\sR^d} \abs{ \fsp (\vx) - \fstar(\vx) } ^2 d\mu } & \leq C \p{ 2 + 8^{\frac{1}{4}}s^{\frac{1}{2}}m^{-\frac{1}{4}} \log^{\frac{1}{4}}(1/
        \delta) } \sqrt{\epsilon^2 \binom{d}{q} \tvert{f}^2 + E^2 } \\
        & := \gO \p{ \p{ 1 + C's^{\frac{1}{2}}m^{-\frac{1}{4}} \log^{\frac{1}{4}}(\frac{1}{\delta}) } \sqrt{\epsilon^2 \binom{d}{q} \tvert{f}^2 + E^2 } }
    \end{split}
\end{align}

\end{proof}

\subsection{Proof of Proposition \ref{prop:spectrum}}
\begin{proof}[Proof of Proposition \ref{prop:spectrum}]
Let $X_\ell \in \mathbb{C}^m$ be the $\ell$th row of $\mA^\ast$ for $\ell \in [N]$, i.e., 
\begin{align*}
X_\ell &= [  \overline{\phi(x_1,\vomega_\ell)}, \ldots, \overline{\phi(x_m,\vomega_\ell)}]\\
&= [  \overline{\phi(x_1,\vomega_\ell)}, \ldots, \overline{\phi(x_m,\vomega_\ell)}].
\end{align*}
We can decompose $\frac{1}{N} \mA \mA^\ast$ into the following sum of rank-1 matrices:
\begin{align}
\frac{1}{N}\mA  \mA^\ast = \frac{1}{N}\sum_{\ell=1}^N X_\ell^\ast X_\ell = \frac{1}{N}\sum_{\ell=1}^N [  {\phi(x_1,\vomega_\ell)}, \ldots, {\phi(x_m,\vomega_\ell)}]^T [  \overline{\phi(x_1,\vomega_\ell)}, \ldots, \overline{\phi(x_m,\vomega_\ell)}].
\end{align}
For a fixed $\ell$, each component of $X_\ell^\ast X_\ell$ takes the form $ \exp(i \langle \vx_j-\vx_k ,\vomega_\ell \rangle)$.

Here, we quantify the effect of the conditioning of the linear system $\mA \mA^\star$ (overparameterized setting) as  $N\rightarrow m^+$. Since $\min(m,N)=m$, we will bound $\lambda_{m}\left( \frac{1}{N} \mA  \mA^\ast \right)$ and $\lambda_{1}\left( \frac{1}{N} \mA  \mA^\ast \right)$. 

The Rayleigh quotient can be bounded by
\begin{equation}
\lambda_{m}\left(\frac{1}{N} \mA  \mA ^\ast \right) \leq \frac{1}{N} \langle \mA^\ast \vv,   \mA^\ast \vv  \rangle.
\end{equation}
for all unit vectors $\vv \in \mathbb{C}^m$. The set $\gX =\{X_1, \ldots, X_{m-1}\}$ forms a subspace of $\mathbb{C}^m$ of dimension at most $m-1$, thus there exists a unit vector $\vz \in \mathbb{C}^m$ orthogonal to $span(\gX)$. And
\begin{align}
\begin{split}
    \lambda_{m} \left( \frac{1}{N} \mA  \mA^\ast \right) &\leq \frac{1}{N}\langle  \vz, \mA\mA^\ast \vz  \rangle \\
    &\leq \frac{1}{N} \sum_{\ell=m}^N \vz^\ast X_\ell^\ast X_\ell \vz  \\
    & = \frac{1}{N} \sum_{\ell=m}^N \sum_{j,k=1}^m \overline{\vz_j} \, \vz_k \exp(i\langle \vx_j-\vx_k, \vomega_\ell \rangle)\\
     & \leq \frac{N-m+1}{N}+\frac{1}{N} \sum_{\ell=m}^N \sum_{j\neq k} \overline{\vz_j} \, \vz_k \exp(i\langle \vx_j-\vx_k, \vomega_\ell \rangle).
\end{split}
\end{align}
The vector $\vz$ is independent of $\vomega_\ell$ for $\ell\geq m$, therefore, applying expectations lead to:
\begin{align}
\begin{split}
    \mathbb{E} \lambda_{m}\left( \frac{1}{N}\mA \mA^\ast \right) &\leq 
    \frac{N-m+1}{N} + \frac{1}{N}\,  \mathbb{E} \sum_{\ell=m}^N\sum_{j\neq k} \overline{\vz_j} \, \vz_k \exp(i\langle \vx_j-\vx_k, \vomega_\ell \rangle)   \\ 
   &=    \frac{N-m+1}{N}  +  \frac{1}{N} \, \mathbb{E}_{\vx} \, \mathbb{E}_{\vomega_1,\ldots, \vomega_{m-1}} \sum_{\ell=m}^N \sum_{j\neq k} \overline{\vz_j} \vz_k \mathbb{E}_{\vomega_\ell}[\exp(i\langle \vx_j-\vx_k, \vomega_\ell \rangle)]\\
       &=\frac{N-m+1}{N} +  \ \frac{1}{N} \, \mathbb{E}_{\vx} \, \mathbb{E}_{\vomega_1,\ldots, \vomega_{m-1}} \sum_{\ell=m}^N \sum_{j\neq k} \overline{\vz_j}\vz_k \exp\left(-\frac{\sigma^2}{2}\|\vx_k-\vx_j\|_2^2\right)  \\
    &\leq \frac{N-m+1}{N} + \frac{1}{N} \sum_{\ell=m}^N \mathbb{E}_{\vx}\sqrt{\sum_{j\neq k} \exp\left(-\sigma^2\|\vx_k-\vx_j\|_2^2\right) } \\
    &\leq \frac{N-m+1}{N}+ \frac{(N-m+1)\sqrt{m^2-m}}{N}\left(4\gamma^2\sigma^2+1\right)^{-\frac{q}{4}}
\end{split}
\end{align}

using Holder's and Jensen's inequalities (noting $\|\vz\|^4_2=1$).  Repeating for the maximum eigenvalue:
$$
\lambda_1\left( \frac{1}{N} \mA  \mA^\ast \right) \geq \frac{1}{N} \langle \vz, \mA \mA^\ast \vz \rangle
$$
and setting the unit vector to $z = \frac{1}{\sqrt{N}} {X}_1$ yields
\begin{align}
\begin{split}
    \lambda_1\left( \frac{1}{N} A A^\ast \right) &\geq \frac{1}{N^2} \sum_{\ell=1}^N X_1^\ast  X_\ell X_\ell^\ast  X_1   \\
    &=\frac{1}{N^2}\left( N^2 + \sum_{
    \ell=2}^N {X}_1^\ast  {X}_\ell{X}_\ell^\ast  {X}_1 \right)  \\
    & = 1 + \frac{1}{N^2} \sum_{\ell=2}^N \sum_{j,k=1}^m \exp(i\langle \vx_k - \vx_j, \vomega_1-\vomega_\ell \rangle)  \\
     & = 1 + \frac{(N-1)m}{N^2}+ \frac{1}{N^2}\sum_{\ell=2}^N\, \sum_{\substack{j,k=1\\ j\neq k}}^m \exp(i\langle \vx_k - \vx_j, \vomega_1-\vomega_\ell \rangle),
\end{split}
\end{align}
and thus
\begin{align}
    \mathbb{E}\lambda_1\left( \frac{1}{N}A A^\ast \right)\geq 2 - \frac{(N-1)m}{N^2}+ \frac{(N-1)(m^2-m)}{N^2}\left(4\gamma^2\sigma^2+1\right)^{-\frac{q}{4}}.
\end{align}
Consider a linear scaling $N = c m$ for $c>1$, then 
\begin{align}
\begin{split}
     \mathbb{E} \lambda_{m} \left(\frac{1}{N}A A^\ast\right) &\leq 
   \frac{(c-1)m+1}{cm}+ \frac{((c-1)m+1)\sqrt{m^2-m}}{cm}\left(4\gamma^2\sigma^2+1\right)^{-\frac{q}{4}}\\
   &\leq 
   \frac{c-1}{c}+ \frac{1}{m} +\frac{c-1}{c} m \left(4\gamma^2\sigma^2+1\right)^{-\frac{q}{4}} 
  + \left(4\gamma^2\sigma^2+1\right)^{-\frac{q}{4}}.
\end{split}
 \end{align}
\end{proof}
\section{Further Discussion}\label{app:further}

In this section, we provide further discussion and future research directions on how SHRIMP connects to previous work on $\ell_0$-regularized problems and random features approximations.

\subsection{Connection to $\ell_0$-regularized Problems}

We first note the similarities of SHRIMP with the SINDy \citep{zhang2019convergence} algorithm. SINDy prunes all features whose magnitude lies below a fixed $\lambda$. Thus, SHRIMP can be considered as an adaptive form of SINDy, where the $\lambda$ is chosen adaptively at each step depending on the data.  \citet{zhang2019convergence} prove under certain conditions that SINDy converges to a fixed point, which is a local minimizer of the non-convex $\ell_0$-regularized regression problem with regularization parameter $\lambda$.  However, the proof requires the matrix $\mA$ to be full column-rank, which in our case, is not guaranteed; in fact, one of the advantages of SHRIMP is that it can often move from the overparameterized to the underparameterized setting.

A more general perspective on the minimizers of the $\ell_0$-regularized regression problem is given in \cite{nikolova2013description}. Remark 2 in \cite{nikolova2013description} indicates that SHRIMP solves an $\ell_0$-regularized minimization problem at each step.  Additionally, Theorem 3.2 indicates that when $\mA$ has full rank with probability one and when SHRIMP prunes in the underparameterized setting, each iterate is a \textit{strict} local minimum of the regularized regression problem.  Thus, SHRIMP can be seen as solving for local minimizers in a data-dependent sequence of $\ell_0$-regularized regression problems.

However, there are still a few gaps in this direction.  First, we focus on generalization performance of pruned models, while most work on $\ell_0$-regularization focus on sparse recovery.  Moreover, the results in \citet{nikolova2013description} indicate that solutions of problems that SHRIMP is solving are local minimizers of the regularized regression problem for \textit{any} regularization parameter greater than zero, but it is hard to compare which local minimizer is the best regarding test performance.

\subsection{The variance of the random feature weights}

In our experiments in Section \ref{sec:exp}, we set the variance of the $\mathbf{w}$ to be $1/q$, or the inverse of the order of the function assuming oracle access.  This is motivated by two reasons: first, in deep learning initialization \citep{he2015delving, kumar2017weight} and corresponding theory \citep{ba2019generalization}, the variance of the weights in a layer is the inverse of the input dimension or the number of weights in the layer.    Second, consider the bound for kernel approximation given in \citet{rahimi_random_2008}, which is
\begin{equation}\label{eq:kernel_approx}
    \mathrm{Pr}\left[\sup_{\mathbf{x}, \mathbf{y}\sim \mathcal{M}}|z(\mathbf{x})'z(\mathbf{y})-k(\mathbf{x}, \mathbf{y})|\geq \epsilon\right]\leq 2^8\left(\frac{\sigma\,\mathrm{diam}(\mathcal{M})}{\epsilon}\right)^2\mathrm{exp}\left(-\frac{N\epsilon^2}{4(d+2)}\right),
\end{equation}
where $\mathcal{M}$ is a compact set. 
 
If the data comes from a Gaussian distribution, with high probability they lie in a compact set ($\mathcal{M}$).  When the dimension ($d$) increases, the diameter $ \mathrm{diam}(\mathcal{M})$ grows as $\gO(\sqrt{d})$. If we use low-order features to approximate, the effective dimension is $q$, so the diameter grows as $\gO(\sqrt{q})$.  Thus, setting $\sigma=1/\sqrt{q}$ mitigates this increase by allowing the numerator to be $\gO(1)$.    

However, the theory with respect to basis pursuit requires a variance that \textit{increases} with $q$. Our experiments in Section \ref{app:variances} corroborate this theoretical gap.  While smaller variance may suffice to learn smooth or low-frequency functions, setting the variance to be small may not allow enough high-frequency weights to be sampled to learn higher frequency functions.  One possible explanation is that these higher frequency functions are better represented by functions in the RKHS corresponding to kernel parameter matching that of the larger variances in the random features.  However, as shown in \eqref{eq:kernel_approx}, the kernel approximation is much worse when $\sigma$ is large.  Thus, understanding this behavior is an interesting future direction.  

\end{document}